\newcommand{\TODO}[1]{}
\newcommand{\ie}{i.e.}
\newcommand{\eg}{e.g.}
\newcommand{\suth}{s.t.\, }
\newcommand{\tuple}[1]{\ensuremath{\left(#1\right)}}
\tikzstyle{arg}=[draw,circle,fill=gray!15,inner sep=1pt,minimum size=.5cm]
\tikzstyle{argd}=[draw,circle,gray!50,inner sep=1pt,minimum size=.5cm]
\definecolor{darkgreen}{rgb}{0,0.4,0}
\tikzstyle{argTD}=[draw, thick, circle, fill=gray!15,inner sep=0pt,minimum size=0.6cm,font=\small]
\tikzstyle{argR}=[draw, thick, circle, fill=gray!15,inner sep=0pt,minimum size=0.45cm,font=\small]
\tikzstyle{scc}=[draw, thick, rectangle,align=center, fill=gray!15,inner sep=0pt,minimum size=0.8cm,font=\small, rounded corners=0ex,]
\tikzstyle{argTDX}=[draw, dotted,thick, circle, inner sep=0pt,minimum size=0.6cm,font=\small]
\tikzstyle{sccX}=[draw,dotted, thick, rectangle,align=center, inner sep=0pt,minimum size=0.8cm,font=\small, rounded corners=0ex,]
\tikzstyle{argsmall}=[draw, thick, circle, fill=gray!15,inner sep=0pt,minimum size=0.4cm]
\tikzstyle{argsmallX}=[draw, thick, circle, inner sep=0pt,minimum size=0.3cm,dotted]
\tikzstyle{bag}=[draw,rectangle, rounded corners=1ex,
\tikzstyle{atts}=[draw,thick, inner sep=5pt, rounded corners=3pt]
\newcommand{\pL}{{\bf (L)}}
\newcommand{\pF}{{\bf (F)}}
\newcommand{\pWF}{{\bf (WF)}}
\newcommand{\pP}{{\bf (PA)}}
\newcommand{\pPR}{{\bf (PRS)}}
\newcommand{\pSR}{{\bf (SR)}}
\newcommand{\pI}{{\bf (I)}}
\newcommand{\pM}{{\bf (M)}}
\newcommand{\pNE}{{\bf (NE)}}
\newcommand{\pMC}{{\bf (MC)}}
\newcommand{\pSTR}{\bf (S)}
\newcommand{\pUM}{\bf{(UM)}}
\newcommand{\pCC}{\bf{(CC)}}
\newcommand{\pURM}{\bf{(URM)}}
\newcommand{\cf}{\textit{cf}\,}
\newcommand{\adm}{\textit{adm}}
\newcommand{\com}{\textit{com}}
\newcommand{\comp}{\textit{com}}
\newcommand{\prf}{\textit{pref}}
\newcommand{\pref}{\textit{pref}}
\newcommand{\grd}{\textit{grd}}
\newcommand{\wadm}{\textit{adm}^{\textit{w}}}
\newcommand{\wpref}{\textit{pref}^{\textit{w}}}
\newcommand{\wprf}{\textit{pref}^{\textit{w}}}
\newcommand{\wcom}{\textit{com}^{\textit{w}}}
\newcommand{\wcomp}{\textit{com}^{\textit{w}}}
\newcommand{\wgrd}{\textit{grd}^{\textit{w}}}
\newcommand{\sadm}{\textit{adm}^{\textit{s}}}
\newcommand{\sad}{\textit{adm}^{\textit{s}}}
\newcommand{\sd}{\textit{sd}}
\newcommand{\dsd}{\ensuremath{\textit{sd}_{\Delta}}}
\newcommand{\spref}{\textit{pref}^{\textit{s}}}
\newcommand{\sprf}{\textit{pref}^{\textit{s}}}
\newcommand{\scom}{\textit{com}^{\textit{s}}}
\newcommand{\scomp}{\textit{com}^{\textit{s}}}
\newcommand{\sgrd}{\textit{grd}^{\textit{s}}}
\newcommand{\BF}{\mathit{F}}
\newcommand{\sdcomp}{\comp_\Delta^{\textit{s}}}
\newcommand{\sdcom}{\comp_\Delta^{\textit{s}}}
\newcommand{\sdgrd}{\grd_\Delta^{\textit{s}}}
\newcommand{\dsadm}{\adm_\Delta^{\textit{s}}}
\newcommand{\sdadm}{\adm_\Delta^{\textit{s}}}
\newcommand{\dspref}{\pref_\Delta^{\textit{s}}}
\newcommand{\sdpref}{\pref_\Delta^{\textit{s}}}
\newcommand{\gcom}{\comp_\Gamma}
\newcommand{\gadm}{\adm_\Gamma}
\newcommand{\wgadm}{\adm_\Gamma^{\textit{w}}}
\newcommand{\sgcom}{\comp_\Gamma^{\textit{s}}}
\newcommand{\sggrd}{\grd_\Gamma^{\textit{s}}}
\newcommand{\sgadm}{\adm_\Gamma^{\textit{s}}}
\newcommand{\sgpref}{\pref_\Gamma^{\textit{s}}}
\newcommand{\stadm}{\textit{adm}^{\textit{s}}}
\newcommand{\la}{\leftarrow}
\newcommand{\cl}{\mathit{cl}}
\newcommand{\asms}{\asm}
\newcommand{\theory}{\mathit{Th}}
\renewcommand{\D}{\ensuremath{\mathcal{D}}}
\newcommand{\lit}{\ensuremath{\mathcal{L}}}
\newcommand{\rules}{\ensuremath{\mathcal{R}}}
\newcommand{\asm}{\ensuremath{\mathcal{A}}}
\newcommand{\contrary}[1]{c(#1)}
\newcommand{\contraryc}[1]{{#1}_c}
\newcommand{\contraryempty}{c}
\newtheorem{example}{Example}[section]
\newtheorem{theorem}[example]{Theorem}
\newtheorem{definition}[example]{Definition}
\newtheorem{proposition}[example]{Proposition}
\newtheorem{lemma}[example]{Lemma}
\newenvironment{example*}[2][]
	{\pagebreak[2] \par \noindent \textbf{Example~\ref{#2} (continued).}\it}{\par}
\title{On Strong and Weak Admissibility in Non-Flat Assumption-Based Argumentation}
\author{%
Matti Berthold$^1$\and
Lydia Bl\"umel$^2$\and
Anna Rapberger$^3$\\
\affiliations
$^1$ScaDS.AI, Universit\"at Leipzig\\
$^2$Artificial Intelligence Group, University of Hagen\\
$^3$Imperial College London
\emails
berthold@informatik.uni-leipzig.de,
lydia.bluemel@fernuni-hagen.de,
a.rapberger@imperial.ac.uk
}
\begin{document}

\maketitle

\begin{abstract}
  In this work, we broaden the investigation of admissibility notions in the context of assumption-based argumentation (ABA).
  More specifically, we study two prominent alternatives to the standard notion of admissibility from abstract argumentation, namely strong and weak admissibility, and introduce the respective preferred, complete and grounded semantics for general (sometimes called non-flat) ABA.
  To do so, we use abstract bipolar set-based argumentation frameworks (BSAFs) as formal playground since they concisely capture the relations between assumptions and are expressive enough to represent general non-flat ABA frameworks, as recently shown.
  While weak admissibility has been recently investigated for a restricted fragment of ABA in which assumptions cannot be derived (flat ABA), strong admissibility has not been investigated for ABA so far. 
  We introduce \emph{strong admissibility} for ABA and investigate desirable properties. 
  We furthermore extend the recent investigations of \emph{weak admissibility} in the flat ABA fragment to the non-flat case. 
  We show that the central modularization property is maintained under classical, strong, and weak admissibility. We also show that strong and weakly admissible semantics in non-flat ABA share some of the shortcomings of standard admissible semantics and discuss ways to address these. 
\end{abstract}

\section{Introduction}
Computational argumentation is a dynamic and widely studied area within knowledge representation and reasoning~\cite{Gabbay:2021}. 
It provides foundational models for human reasoning processes when challenged with conflicting information
with the goal of identifying sets of jointly acceptable assumptions, premises, or arguments, representing  coherent and defensible viewpoints. 
A well-established formalism in this domain is assumption-based argumentation (ABA)~\cite{BondarenkoDKT97,CyrasFST2018},
which has been thoroughly investigated~\cite{HeyninckA24,LehtonenWJ21b,DBLP:conf/kr/LehtonenR0W23,DBLP:conf/kr/Rapberger024} and has applications in fields such as decision making~\cite{DBLP:conf/atal/FanTMW14,DBLP:journals/argcom/CyrasOKT21}, planning~\cite{DBLP:conf/prima/Fan18a}, and explainable AI~\cite{DBLP:conf/kr/0002RT24,DBLP:conf/kr/LeofanteADFGJP024}. 
In ABA frameworks (ABAFs), the reasoning process revolves around identifying acceptable sets of assumptions, which are the defeasible elements of the framework. 
The focus on inference rules and assumptions places ABA within the subfield of \emph{structured argumentation}, which explicitly accounts for the internal structure of arguments. In contrast, \emph{abstract argumentation}~\cite{Dung95} models argumentative processes by directed graphs where nodes are abstract arguments and edges represent attacks between them. 
Various argumentation semantics, largely shared across all major argumentation formalisms, are used to formalize when assumption sets can be considered acceptable, we refer to, \eg,  \cite{BaroniCG18,CyrasFST2018,DBLP:journals/argcom/BesnardGHMPST14} for an overview.

At the heart of most of these semantics lies the concept of \emph{admissibility}~\cite{Dung95}.
This notion, which can be seen as the argumentative counterpart to self-defense, is crucial for defining acceptance and defeat in computational argumentation.
\citeauthor{Dung95}~(\citeyear{Dung95}) introduced admissibility for abstract argumentation frameworks (AFs), encapsulated with the informal slogan: \emph{the one who has the last word laughs best}. 
A set of arguments is said to be admissible if it ensures both internal coherence (no self-conflict) and self-defense (ability to counter attacks).
In the context of ABA, an assumption $a$ can be challenged (\emph{attacked}) by a set of assumptions $S$ if $S$ derives the negation (the so-called \emph{contrary}) of~$a$.
A set of assumptions is then said to be \emph{admissible} if it is conflict-free and counters all of its attackers.

Beyond the classical notion, several refinements of admissibility have been proposed. 
\emph{Strong admissibility} strengthens defense: a member of a strongly admissible set cannot defend itself.
This property has been introduced by \citeauthor{BaroniG07}~(\citeyear{BaroniG07}) 
and first studied as an independent semantics by \citeauthor{DBLP:conf/comma/Caminada14}~(\citeyear{DBLP:conf/comma/Caminada14}) in the context of abstract argumentation;
using the following recursive definition:
a set $S$ is strongly admissible if each $a\in S$ is strongly defended by a subset of $S\!\setminus\! \{a\}$. It addresses concerns about circular justifications which are tolerated under standard admissibility. 
Strong admissiblity is well understood for AFs~\cite{DBLP:conf/comma/CaminadaD20,DBLP:conf/comma/CaminadaH24}
and has been studied in generalizations of AFs~\cite{DBLP:journals/argcom/ZafarghandiVV22,DBLP:journals/logcom/BistarelliT22}. 
In contrast, the realm of structured argumentation lacks similar investigations so far. 

On the other end of the spectrum, \emph{weak admissibility} has been proposed as a relaxation of the classical approach.
Originally introduced for AFs by~\citeauthor{BaumannBU2020}~(\citeyear{BaumannBU2020}), 
weak admissiblity addresses situations involving paradoxical arguments, 
such as self-attacking cycles, in which the standard notion of admissibility can be overly restrictive or unintuitive~\cite{Dung95,DondioL19}. 
The weaker version of the semantics allows for a more relaxed approach to defense that can ignore such paradoxes, focusing instead on maintaining the acceptability of reasonable arguments.
Crucial for the notion is the \emph{reduct} which removes arguments whose acceptance status is already decided, \ie, either accepted or defeated. The reduct is the technical prerequisite for the \emph{modularization property} which intuitively captures the idea that the evaluation of a semantics can be broken down into the evaluation on subframeworks, based on the reduct computation. 
In the context of AFs, weak admissibility is well understood~\cite{DBLP:journals/ai/BaumannBU22,DBLP:conf/clar/DauphinRT21,BluemelUlbricht22}.
In recent work, \citeauthor{BlumelKU24}~\shortcite{BlumelKU24} investigate weak admissibility in 
the context of ABA.
They focus, however, exclusively on the \emph{flat} ABA fragment in which assumptions cannot be derived~\cite{BondarenkoDKT97}. The flat setting restricts the framework’s expressiveness when applied to complex domains. In many contexts ---such as legal reasoning, ethical deliberation, or cognitive modeling--- assumptions are often interdependent, something flat ABA cannot represent. Despite its potential, the non-flat setting remains underexplored and offers a promising direction for extending the applicability and depth of assumption-based frameworks, especially in combination with alternative notions of admissibility.\\ 
Therefore, unlike prior work, we do not restrict ourselves to the flat case but explore admissiblity under these more general conditions. Naturally, this poses additional challenges as (i) in addition to attacks, assumptions can now also \emph{support} each other---analogously to the case of attacks, a set of assumptions $S$ supports an assumption $a$ if $S$ derives $a$---and (ii) assumption sets can carry implicit conflicts due to the assumptions they additionally derive. 
To address (ii), acceptability in non-flat ABAFs requires in addition to conflict-freeness and defense that the assumption set is \emph{closed}, that is, it contains all assumptions it derives.
This ensures that implicit conflicts are exposed and that the acceptability of a set can be meaningfully evaluated only when all its logical consequences are taken into account.

We base our investigations on \emph{bipolar set-based AFs (BSAFs)}~\cite{BertholdRU24} as they concisely
capture the attack and support relations between assumptions and abstract away from all elements such as rules, ordinary atoms, argument trees that are only indirectly used in the evaluation of an ABAF.

Overall, our main contributions are as follows.
\begin{itemize}
\item We introduce \emph{strong admissibility} for general (non-flat) ABA. 
We investigate fundamental properties of strong admissibility 
such as $\pUM$ which states that the $\subseteq$-maximal strongly admissible set is unique.
While some of these properties may be violated in the general case, 
the semantics behave as expected in the flat ABA fragment, as we show. 
\hfill {\color{gray}Section~\ref{sec:strong}}
\item We generalize \emph{weak admissibility} to non-flat ABA and study its properties. 
We show that some of the fundamental principles of weakly admissible semantics do not hold for non-flat ABA, \eg, removing self-attacking assumptions $\pP$ is not always possible. 
\hfill {\color{gray}Section~\ref{sec:weak}}
\item 
Towards generalizing weak admissiblity to the non-flat case, we define and investigate 
the BSAF reduct and the closely related modularization property, and show that modularization holds for both weak and strong admissiblity, as well as the classical semantics in non-flat ABA.
\hfill {\color{gray}Section~\ref{sec:reduct}} 
\item Inspired by recent attempts to alleviate the undesired behavior of ABA semantics in the non-flat case, we investigate the notion of $\Gamma$-closure in the context of weak and strong admissiblity. 
Intuitively, the $\Gamma$-closure of a set $S$ allows to ignore assumptions that are not defended by $S$. 
While the adjustment fails to fix weak semantics, 
we show that the resulting \emph{strongly $\Gamma$-admissible} semantics retains all desired properties. 
\hfill {\color{gray}Section~\ref{sec:FL-sensitive semantics}}
\end{itemize}
All proofs are provided in the supplementary material.

\section{Background}
\label{sec:background}
\subsection{Assumption-based Argumentation}\label{sec:ABA}
We recall the technical definitions
of (ABA)~\cite{CyrasFST2018}. We assume a \emph{deductive system}, \ie\, a tuple $(\lit,\rules)$, where  
$\lit$ is a set of atoms and 
$\rules$ is a set of inference rules over $\lit$. 
A rule $r \in \rules$ has the form
$a_0 \leftarrow a_1,\ldots,a_n$, \suth $a_i \in \lit$ for all $0\leq i\leq n$; $head(r) := a_0$ is the \emph{head} and $body(r) := \{a_1,\ldots,a_n\}$ is the (possibly empty) \emph{body} of $r$.
\begin{definition}
	An ABA framework (ABAF) is a tuple $(\lit,\rules,\asm,\contraryempty)$, where $(\lit,\rules)$ is a deductive system, $\asm \subseteq \lit$ a set of assumptions, and  
	$\contraryempty:\asm\rightarrow \lit$ a contrary function.
\end{definition}
We fix an arbitrary ABAF $\D=(\lit,\rules,\asm,\contraryempty)$ below.
The ABAF $\D$ is \emph{flat} iff $head(r)\notin \asm$ for all $r\in \rules$.
In this work we focus on \emph{finite} ABAFs, \ie\, $\lit$ and $\rules$ are finite. 

An atom $p \in \lit$ is tree-derivable from assumptions $S \subseteq \asm$ and rules $R \subseteq \rules$, denoted by $S \vdash_R p$, if there is a finite rooted labeled tree $t$ s.t.\
i) the root of $t$ is labeled with $p$, 
ii) the set of labels for the leaves of $t$ is equal to $S$ or $S \cup \{\top\}$, and 
iii) for each node $v$ that is not a leaf of $t$ there is a rule $r\in R$ such that $v$ is labeled with $head(r)$ and labels of the children correspond to $body(r)$ or $\top$ if $body(r)=\emptyset$.
We write $S\vdash p$ iff there exists  $R \subseteq \rules$ such that $S\vdash_R p$. 

Let $S\subseteq \asm$.
By $\contrary{S}:=\{\contrary{a}\mid a\in S\}$  we denote the set of all contraries of $S$. 
Set $S$ \emph{attacks} a set $T\subseteq\asm$ if there are $S'\subseteq S$ and $a\in T$ s.t.\ $S' \vdash \contrary{a}$; if $S$ attacks $\{a\}$ we say $S$ attacks~$a$. 
$S$ is conflict-free ($S\in\cf(\D)$) if it does not attack itself. 
The closure $cl(S)$ of $S$ is $cl(S):= \theory_\D(S)\cap \asm$ where $\theory_\D(S):=\{p\in L\mid\exists S'\subseteq S:S' \vdash
p\}$ denotes all derived conclusions.
We write $cl(a)$ instead of $cl(\{a\})$ for singletons. 
We call $S\subseteq \asm$ \emph{closed} if $S = cl(S)$. 

Now we consider defense \cite{BondarenkoDKT97,CyrasFST2018}. Observe that defense in general ABAFs is only required against closed sets of attackers. 

\begin{definition}
A set $S$ of assumptions defends an assumption $a$ iff for each closed set $T$ 
which attacks $a$, we have $S$ attacks $T$; $S$ defends itself iff $S$ defends each $b\in S$.
\end{definition}
A set $E$ of assumptions is \emph{admissible} ($E\in\adm(\BF)$) iff $E$ is conflict-free, closed and defends itself.
We next recall grounded, complete, and preferred ABA semantics.

\begin{definition}\label{def:ABA semantics}
	Let $\D$ be an ABAF and $S\in\adm(\D)$. Then
	\begin{itemize}
		\item 
		$S\in\comp(\D)$ iff it contains every assumption it defends; 
		\item 
		$S\in\grd(\D)$ iff $S$ is $\subseteq$-minimal in $\comp(\D)$;
		\item 
		$S\in\pref(\D)$ iff $S$ is $\subseteq$-maximal in $\adm(\D)$.
	\end{itemize}
\end{definition}
We denote by $\Sigma = \{\adm,\com,\grd,\prf\}$ the family of (classical, admissible-based) Dung semantics.

To study restrictions of ABAFs, the following notation will be useful.
For an ABAF $\D=\tuple{\lit,\rules,\asm,\contraryempty}$ and $S\subseteq \mathcal{A}$, we write $\D\downarrow_{S}$ for the framework that arises when restricting $\D$ to $S$, \ie, $\D\downarrow_{S}:=(\lit,\rules,S,\contraryempty|_{S})$.

\subsection{Bipolar Set-based Abstract Argumentation}
Bipolar SETAFs~\cite{BertholdRU24} combine the ideas underlying argumentation frameworks with collective attacks (SETAFs)~\cite{NielsenP06} and bipolar argumentation frameworks (BAFs)~\cite{CayrolL05a,Amgoud08,PoRaUlAAAI2024}. Instead of only considering an attack relation, there is also a notion of support. Bipolar SETAFs (BSAFs) can model \emph{collective} attacks and supports. 

\begin{definition}
    A bipolar set-argumentation framework (BSAF) is a tuple $\BF = \tuple{A,R,S}$, where $A$ is a finite set of arguments, $R \subseteq 2^A\times A$ is the attack relation and $S\subseteq 2^A\times A$ is the support relation.
\end{definition}
In this work, we consider finite BSAFs only, \ie, $A$ is finite.
A SETAF is a BSAF  $\BF = \tuple{A,R,S}$ with $S=\emptyset$; 
an AF is a SETAF with $|T|=1$ for all $(T,h)\in R$.
\begin{definition}
	Given BSAF $\BF\! = \! (A,R,S)$ and $E\subseteq A$, let
		$$supp_{\BF}(E):=E\cup \{h\in A\mid \exists\tuple{T,h}\in S: T\subseteq E\}.$$ 
	We define the \emph{closure} $\cl_{\BF}(E):=\bigcup_{i\geq 1} supp_{\BF}^i(E)$ of $E$; 
	$E$ is \emph{closed} if $cl_{\BF}(E)=E$. 
\end{definition}

\begin{definition}
\label{def:BSAF gamma and more}
Given BSAF $\BF\! = \! (A,R,S)$, a set $E\!\subseteq\! A$ 
\emph{defends} $a\!\in\! A$ if for each closed attacker $E'\!\subseteq\! A$ of~$a$, $E$ attacks $E'$;
$E$ \emph{defends} $E'$ if $E$ defends each $a\!\in\! E'$.
The \emph{characteristic function} is $\Gamma_{\BF}(E):=\{a\in A\mid E\text{ defends }a\text{ in }\BF\}$. 
\end{definition}
We omit the subscript $\BF$ for $\Gamma$ and $\cl$ if clear from context.

Let us now head to BSAF semantics. 
A set $E$ is conflict-free ($E\!\in\!\cf(\BF)$) if it does not attack itself;
$E$ is admissible ($E\in\adm(\BF)$) if it is conflict-free, closed and defends itself.%
\begin{definition}\label{def:BSAF semantics}
	Let $\BF$ be an BSAF and let $E\in\adm(\BF)$. 
	\begin{itemize}
		\item 
		$E\in \comp(\BF)$ iff $E$ contains every assumption it defends; 
		\item 
		$E\in \grd(\BF)$ iff $E$ is $\subseteq$-minimal in $\comp(\BF)$;
		\item 
		$E\in \pref(\BF)$ iff $E$ is $\subseteq$-maximal in $\adm(\BF)$.
	\end{itemize}
\end{definition}

Given a BSAF $(A,R,S)$ and a set of arguments $E\subseteq A$, we denote $E^+_R:=\{h\mid \exists T\subseteq A: (T,h)\in R\}$ and the range of $E$ by $E^\oplus_R:= E_R\cup E^+_R$. The index $R$ may be omitted, if clear from the context.
Graphically, we depict the attack relation of a BSAF by solid edges and the support relation by dashed edges (cf.\ Example~\ref{ex:background}). 

\newcommand\distance{1.5}

\subsubsection{ABA and BSAF}
BSAFs capture non-flat ABAFs, as shown by~\citeauthor{BertholdRU24}~(\citeyear{BertholdRU24}).
\begin{definition} 
	\label{def:non-flat collective inst}
	Let $\D=\tuple{\lit,\rules,\asm,\contraryempty}$ be an ABAF. 
	Then we set $\BF_\D:=\tuple{A,R,S}$, where $A := \asm$ and 
	\begin{align*}
		R:={}&\{(T,h)\mid h\in\asm,\; T\subseteq\asm,\; T\vdash \contrary{h} \},\\
		S:={}&\{(T,h)\mid h\in\asm,\; T\subseteq\asm,\; T\vdash h\}\}
	\end{align*}
\end{definition}

\citeauthor{BertholdRU24}~\shortcite{BertholdRU24} show that the BSAF abstraction of an ABAF preserves the semantics.  
\begin{theorem}
	\label{prop:BSAF2ABAFCorrect}
	$\sigma(\D)\!=\!\sigma(\BF_\D)$ 
	for any $\sigma\in\Sigma$ and ABAF~$\D$.
\end{theorem}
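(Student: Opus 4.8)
The plan is to show that the BSAF abstraction $\BF_\D$ of an ABAF $\D$ agrees with $\D$ on each semantics $\sigma\in\Sigma$ by proving that the two frameworks induce the same acceptable extensions. Since all semantics in $\Sigma$ are defined on top of admissibility (and admissibility is built from conflict-freeness, closedness, and defense), the natural strategy is a \emph{bottom-up} one: first establish that the three primitive notions coincide on the common domain $2^{\asm}=2^{A}$, then lift the agreement to $\adm$, and finally propagate it to $\comp$, $\grd$, and $\pref$ by appealing to their uniform definitions in Definition~\ref{def:ABA semantics} and Definition~\ref{def:BSAF semantics}.

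\textbf{First} I would fix $E\subseteq\asm=A$ and verify the three building blocks. For \emph{conflict-freeness}, note that by construction $(T,h)\in R$ iff $T\vdash\contrary{h}$; hence $E$ attacks $a$ in $\BF_\D$ exactly when some $T\subseteq E$ derives $\contrary{a}$, which is precisely the ABA attack relation, so $\cf(\D)=\cf(\BF_\D)$. For \emph{closedness}, I would show by induction on the iteration index $i$ that $supp_{\BF_\D}^i(E)$ captures exactly the assumptions derivable from $E$ using derivation trees, so that $\cl_{\BF_\D}(E)=cl(E)=\theory_\D(E)\cap\asm$; the forward direction uses that each support edge $(T,h)$ comes from a derivation $T\vdash h$, and the reverse direction decomposes a derivation tree $S'\vdash h$ into its immediate sub-derivations to supply the required support edges. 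Consequently $E$ is closed in $\BF_\D$ iff $E$ is closed in $\D$.

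\textbf{Next}, with conflict-freeness and closedness aligned, the agreement on \emph{defense} follows from matching the two defense definitions: both require that for every \emph{closed} attacker $T$ of $a$, the set $E$ attacks $T$. Since we have already shown that ``closed'' and ``attacks'' mean the same thing in $\D$ and $\BF_\D$, the quantifier over closed attackers ranges over the same sets, and $E$ defends $a$ in $\D$ iff $E$ defends $a$ in $\BF_\D$; equivalently $\Gamma_{\BF_\D}$ coincides with the ABA characteristic operator. Combining the three equivalences yields $\adm(\D)=\adm(\BF_\D)$. Because $\comp$, $\grd$, and $\pref$ are defined identically in both settings as admissible sets satisfying a fixpoint, $\subseteq$-minimality, or $\subseteq$-maximality condition, the coincidence transfers verbatim, establishing $\sigma(\D)=\sigma(\BF_\D)$ for every $\sigma\in\Sigma$.

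\textbf{The main obstacle} I expect is the closedness step, specifically proving $\cl_{\BF_\D}(E)=cl(E)$. The subtlety is that the BSAF closure iterates the one-step operator $supp_{\BF_\D}$, which adds heads of support edges whose \emph{entire} source set already lies in the current stage, whereas ABA derivability permits a single multi-level derivation tree. I would handle this by arguing that any tree-derivation of depth $d$ can be simulated by $d$ rounds of $supp_{\BF_\D}$ (each internal rule application corresponds to a support edge whose body was derived at an earlier stage), and conversely that each newly supported element arises from a genuine derivation; finiteness of $\lit$ and $\rules$ guarantees the iteration stabilizes, so the two closures agree. Once this lemma is secured, the remainder is a routine unfolding of definitions.
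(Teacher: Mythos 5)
The paper never proves Theorem~\ref{prop:BSAF2ABAFCorrect}: it is imported as background from \cite{BertholdRU24} (``BSAFs capture non-flat ABAFs, as shown by \dots''), and no proof appears in the appendix either. So there is no in-paper proof to compare your argument against; I can only assess it on its own terms against the construction in Definition~\ref{def:non-flat collective inst}.

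Your architecture --- match conflict-freeness, closedness and defense on $2^{\asm}=2^{A}$, conclude $\adm(\D)=\adm(\BF_\D)$, then lift to $\comp$, $\grd$, $\pref$ via their identical definitions --- is the natural argument, and the attack and defense steps are fine: $R$ contains $(T,h)$ for \emph{every} $T\vdash\contrary{h}$, so ``attacks'' coincides, and both defense notions quantify over closed attackers, so defense coincides once attack and closure do. The flaw sits exactly in the step you flag as the main obstacle. You picture the support edges of $\BF_\D$ as single rule applications and propose to simulate a depth-$d$ derivation tree by $d$ rounds of $supp_{\BF_\D}$, ``each internal rule application corresponding to a support edge.'' That simulation fails: internal nodes of a derivation tree are arbitrary atoms of $\lit$, not necessarily assumptions, and non-assumptions have no counterpart in $\BF_\D$ at all. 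For instance, with $\asm=\{a,b\}$ and rules $a\la p$, $p\la b$, the intermediate atom $p$ is the head of no support edge; the only relevant edge is $(\{b\},a)$. The reason your lemma is nevertheless true is that Definition~\ref{def:non-flat collective inst} inserts an edge $(T,h)$ into $S$ for every \emph{tree-derivation} $T\vdash h$, i.e., edges encode whole derivations, not rules; hence $cl(E)\subseteq supp_{\BF_\D}(E)$ already after a single application, and your ``decomposition into immediate sub-derivations'' is both unnecessary and unsound as stated. The direction that genuinely needs an argument is the converse, $\cl_{\BF_\D}(E)\subseteq cl(E)$: one must show that iterating $supp_{\BF_\D}$ adds nothing beyond ABA-derivability, which requires transitivity of $\vdash$ (splicing a derivation of $h$ into a leaf labelled $h$ of another derivation tree). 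Your sketch gestures at this (``each newly supported element arises from a genuine derivation'') but never articulates the splicing step. With these two repairs --- one-step containment in the forward direction, explicit splicing in the backward direction --- your proof goes through.
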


\begin{example}\label{ex:background}
We consider an ABAF 
$\D=\tuple{\lit,\rules,\asm,\contraryempty}$ with
literals $\allowbreak\lit = \{ a,b,c,d,e,\contraryc{a},\contraryc{b},\contraryc{c},\contraryc{d},\contraryc{e}\}$, 
assumptions $\asm = \{a,b,c,d,e\}$, their contraries $\contraryc{a}$, $\contraryc{b}$, $\contraryc{c}$, $\contraryc{d}$, $\contraryc{e}$, respectively, and rules
\begin{align*}
\contraryc{c}\la d &&\contraryc{e}\la e && e\la a,b && \contraryc{d}\la c &&\contraryc{e}\la	 b,c&&
\end{align*}
We compute the BSAF $\BF_\D=\tuple{A,R,S}$:
the nodes correspond to the assumptions, \ie, $A=\{a,b,c,d,e\}$,
the attacks to the tree-derivations that derive contraries, \ie, $R=\{$\textcolor{black}{$\tuple{\{d\},c}$}, \textcolor{black}{$\tuple{\{c\},d}$}, $\textcolor{Orange}{\tuple{\{b,c\},e}},$ $\textcolor{black}{\tuple{\{e\},e}}\}$, and the supports correspond to the tree-derivations that derive assumptions, in this case $S=\{\textcolor{cyan}{\tuple{\{a,b\},e}}\}$. 
\vspace{-.4cm}
\begin{center}
		\begin{tikzpicture}[scale=0.8,>=stealth]
		\path
        (\distance-1.1,.85)node (D){}
        (0,0)node[arg] (b){$b$}
		(\distance,0)node[arg] (c){$c$}
		(.5*\distance,-\distance*0.866)node[arg] (e){$e$}
		(-.5*\distance,-\distance*0.866)node[arg] (a){$a$}
  		(1.5*\distance,-\distance*0.866)node[arg] (d){$d$}
        ;
        
\path[thick,<->]
(c)edge[bend left](d)
;

\path[thick,->,Orange]
(b)edge[out=-25,in=90](e)
(c)edge[out=-155,in=90](e)
;

\path[thick,->,cyan,dashed]
(a)edge[out=35,in=150](e)
(b)edge[out=-95,in=150](e)
;
\path[thick,->]
(e)edge[in=50,out=10,min distance=5mm](e)
;
		
		\end{tikzpicture}
	\end{center}
The BSAF provides an easy-to-understand graphical representation to evaluate the semantics in $\D$.
The sets $\{a\}$ and $\{b\}$ are admissible since they are unattacked; however, they cannot be accepted together because they jointly support the self-attacker $e$. Thus, the BSAF (and therefore the ABAF) has neither complete nor grounded extensions. It has three preferred extensions, $\{a,c\}$, $\{b,c\}$ and $\{b,d\}$.
\end{example}

\section{Strong and Weak Admissibility in ABA}
\label{sec:desiderata}

Our goal is to study semantics based on strong and weak admissibility in ABA in a principled way.
We will thus identify desirable properties that we expect our new families of semantics adhere to.

To get a better understanding of the oftentimes involved technicalities of the recursive definitions of strong and weak admissibility, we make use of the close relationship between BSAFs and ABAFs.
BSAFs concisely capture the attack and support relations between assumptions
while abstracting away from all components of an ABAF that are only implicitly needed to compute the extensions, as discussed in Example~\ref{ex:background}.
We will therefore utilize BSAFs as our formal playground to rigorously define and investigate our novel semantics. 
 
In the remainder of this section, we consider an arbitrary but fixed ABAF $\D=\tuple{\lit,\rules,\asm,\contraryempty}$.

\paragraph{Global Desiderata}\label{subsec:global des}

We discuss desired properties for both strong and weak admissiblility.
A central property of argumentation semantics is the fundamental lemma, introduced by Dung~\shortcite{Dung95}. It states that each assumption defended by an admissible set $E$ can be accepted together with $E$. 
\begin{itemize}[align=left]
	\item[$\pF$] Fundamental Lemma: If $S\in\adm(\D)$ defends $a$, then $S\cup\{a\}\in\adm(\D)$.
\end{itemize}
We recall basic semantics relations; also, all semantics 
 are expected to return a result.
 
\begin{itemize}[align=left]
    \item[$\pSR$] Semantics Relations: $\pref(\D)\subseteq \com(\D)\subseteq\adm(\D)$ and $\grd(\D)\subseteq \com(\D)$
    \item[$\pNE$] Non-empty: $\adm(\D), \com(\D), \pref(\D), \grd(\D)\! \neq\!\emptyset$
\end{itemize}
We note that $\pSR$ implies the \emph{maximal complete principle} $\pMC$~\cite{BlumelKU24} which states that the preferred extensions correspond to the maximal complete extensions of a given framework.
\begin{restatable}{proposition}{PropMCiffSR}
    \label{prop:MCiffSR}
    $\pSR\ \Rightarrow\ \pMC$ where $\pMC$ denotes $\pref(\D) = \{E\subseteq A\mid E\text{ is maximal in }\com(\D)\}$ 
\end{restatable}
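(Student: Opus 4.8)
The plan is to prove the set equality $\pref(\D) = \{E\mid E\text{ is maximal in }\com(\D)\}$ by establishing both inclusions, assuming $\pSR$ throughout and exploiting that $\D$ is finite so that the poset $(\adm(\D),\subseteq)$ admits a maximal element above any of its members. For the inclusion $\subseteq$, I would take $E\in\pref(\D)$, i.e.\ a $\subseteq$-maximal element of $\adm(\D)$. By $\pSR$ we have $E\in\com(\D)$, so it only remains to argue $E$ is maximal in $\com(\D)$. If some $E'\in\com(\D)$ satisfied $E\subsetneq E'$, then $\com(\D)\subseteq\adm(\D)$ (again $\pSR$) would give $E'\in\adm(\D)$, contradicting maximality of $E$ in $\adm(\D)$. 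Hence $E$ is maximal complete.

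For the reverse inclusion $\supseteq$, I would take $E$ maximal in $\com(\D)$. From $\com(\D)\subseteq\adm(\D)$ we get $E\in\adm(\D)$, and it suffices to show $E$ is $\subseteq$-maximal in $\adm(\D)$. Suppose for contradiction there were $E'\in\adm(\D)$ with $E\subsetneq E'$. Using finiteness of $\D$, extend $E'$ to a $\subseteq$-maximal admissible set $E''\supseteq E'$; by the definition of preferred as maximal admissible we then have $E''\in\pref(\D)$, and by $\pSR$ we obtain $E''\in\com(\D)$. But $E\subsetneq E'\subseteq E''$ with $E''$ complete contradicts the maximality of $E$ in $\com(\D)$. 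Thus no such $E'$ exists, so $E$ is maximal in $\adm(\D)$ and therefore $E\in\pref(\D)$.

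The two ``no strict superset'' arguments are routine; the only step that needs care is the extension of an admissible set to a preferred one in the $\supseteq$ direction, and this is where I expect the only real obstacle to lie. Finiteness enters here precisely to guarantee that the finite poset $(\adm(\D),\subseteq)$ has a maximal element above $E'$ (note that $\adm(\D)\neq\emptyset$ is already witnessed by $E'$, so no separate appeal to $\pNE$ is required). Everything else follows by pure order-theoretic manipulation of the two containments packaged in $\pSR$ together with the defining property of $\pref$ as the collection of maximal admissible sets; in particular the argument is agnostic to which admissibility notion is at hand and applies verbatim to any semantics family satisfying $\pSR$.
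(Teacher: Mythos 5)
Your proposal is correct and takes essentially the same route as the paper: both inclusions are established from the two containments in $\pSR$ together with extending a set to a $\subseteq$-maximal element of the relevant finite poset ($\com(\D)$ resp.\ $\adm(\D)$). The only cosmetic differences are that your first inclusion argues by direct contradiction without needing a maximal extension at all, and that your write-up is cleaner than the paper's (whose proof contains apparent typos, writing $E'\subseteq E$ where the argument requires $E\subseteq E'$).
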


Central to both strong and weak admissibility is the modularization property. 
A semantics satisfies modularization if its extensions can be computed iteratively by projecting away all elements that are already known to be either accepted or defeated. Crucial for this property is the reduct of a framework, originally introduced in~\cite{BaumannBU2020}.
Below, we recall the definition for SETAFs as they correspond to flat ABAFs~\cite{DvorakKUW21}.
\begin{definition}\label{def_setaf_reduct}
	Given a SETAF $F=(A,R)$ and $E\subseteq A$, the \emph{$E$-reduct of $F$} is the SETAF $F^E:=(A^E,R^E)$, with
	\begin{align*}
		A^E\enspace:=\enspace& A \setminus E^\oplus_R\\
		R^E\enspace:=\enspace& \{(T\setminus E, h) \mid (T,h)\in R, \,
		T\cap E^+_R=\emptyset,\\
		&\hphantom{\{} T \not\subseteq E,\, h\in A^E
		\}
	\end{align*}
\end{definition}

\newcommand\shortdistance{1.2}

\begin{example}
Consider the SETAF $F$ and its reduct $F^E$ wrt.\ $E=\{a\}$ (with removed arguments and attacks in light-gray), depicted below.   
    \begin{center}
    	\begin{tikzpicture}[scale=0.8,>=stealth]
        \begin{scope}[xshift=0cm]
    		\path
            (-0,0)node (D){$\BF$:}
            (\shortdistance,0)node[arg] (b){$a$}
            (2*\shortdistance,0)node[arg] (c){$b$}
            (.5*\shortdistance,-.866*\shortdistance)node[arg] (d){$c$}
            (1.5*\shortdistance,-.866*\shortdistance)node[arg] (e){$d$}
            (2.5*\shortdistance,-.866*\shortdistance)node[arg] (f){$e$}
            ;
            
            \path[thick,->]
            (b)edge(c)
            ;
            \path[thick,->,magenta]
            (c)edge[out=-90,in=155](f)
            (e)edge[out=35,in=155](f)
            ;
            \path[thick,->,orange]
            (b)edge[out=-90,in=35](d)
            (e)edge[out=155,in=35](d)
            ;
        \end{scope}
        
        \begin{scope}[xshift=5cm]
    		\path
            (0,0)node (D){$\BF^{E}$:}
            (\shortdistance,0)node[argd] (b){\color{gray}$a$}
            (2*\shortdistance,0)node[argd] (c){\color{gray}$b$}
            (.5*\shortdistance,-.866*\shortdistance)node[arg] (d){$c$}
            (1.5*\shortdistance,-.866*\shortdistance)node[arg] (e){$d$}
            (2.5*\shortdistance,-.866*\shortdistance)node[arg] (f){$e$}
            ;
            
            \path[thick,->,lightgray]
            (b)edge(c)
            ;
            \path[thick,->,lightgray]
            (c)edge[out=-90,in=155](f)
            (e)edge[out=35,in=155](f)
            ;
            \path[thick,->,lightgray]
            (b)edge[out=-90,in=35](d)
            ;
            \path[thick,->,orange]
            (e)edge[out=155,in=35](d)
            ;
        \end{scope}
        
        \end{tikzpicture}
    \end{center}
    The reduct $\BF^E$ of $\BF$ wrt. $E$ assumes $a$ to be true. Thus, $b$ is assumed to be false, since it is attacked by $E$. Both arguments $a$ and $b$ are therefore removed. 
The attack {\color{magenta}$(\{b,d\},e)$} is deactivated because $b$ is out, thus it is removed entirely;
the attack    {\color{orange}$(\{a,d\},c)$ } can still fire, since $a$ is in, thus, we adjust it accordingly  and keep {\color{orange}$(\{d\},c)$ } in $F^E$. 
\end{example}
We are ready to define \emph{modularization} which allows to compute extensions in a modular fashion.
The property holds for (SET)AFs~\cite{BBU2020Modularization,DvorakKUW24}, and we expect this property to hold for BSAFs (that resemble non-flat ABAFs) as well. 

\begin{itemize}[align=left]
    \item[$\pM$] Modularization: A semantics $\sigma$ satisfies modularization iff for each $E\subseteq A,E'\subseteq A^E$ we have
    $E\in\sigma( \BF)$ and $E'\in\sigma( \BF^E)$ implies $E\cup{E'}\in\sigma( \BF)$.
\end{itemize}

\paragraph{Strong Admissibility Desiderata} 
Strong admissibility strengthens the standard notion by requiring that each member of a strongly admissible set is defended by a strongly admissible subset that does not contain it. The notion has first been introduced in the scope of a principle-based analysis by \citeauthor{BaroniG07}~(\citeyear{BaroniG07}) for AFs and has first been studied as an independent semantics concept by \citeauthor{DBLP:conf/comma/Caminada14}~\shortcite{DBLP:conf/comma/Caminada14}. We recall the AF definition.
\begin{definition}
    \label{def:strong_adm_af}
    Let $F=(A,R)$ be an AF. A conflict-free set $E\in \cf(F)$ is strongly admissible in $F$ iff each $a\in E$ is defended by a strongly admissible set $E'\subseteq E\setminus\{a\}$. 
\end{definition}
Similar to grounded semantics, strongly admissible extensions can be computed starting from a set of undefeated arguments. 
We discuss the idea in the following example.
\begin{example}
    Consider the following AF.
    \begin{center}
    	\begin{tikzpicture}[scale=1,>=stealth]
    		\path
    		(0,0) node[arg] (a){$a$}
    		(1.2,0)node[arg] (b){$b$}
      		(2.4,0)node[arg] (c){$c$}
            (3.6,0)node[arg] (d){$d$}
            (4.8,0)node[arg] (e){$e$}
            ;
            
            \path[thick,->]
            (c)edge (b)
            (d) edge (e)
;
\path[thick,<->]
            (b)edge (a)
;
        \end{tikzpicture}
    \end{center}
A strongly admissible extension may, or may not contain $c$ or $d$, since they are unattacked.
Accordingly, an extension may contain $a$, if it contains $c$.
Note that $a$ is not contained in a strongly admissible extension if $c$ is not present.
We have $\sad(\BF)=\{\emptyset,$ $\{c\},$ $\{c,a\},$ $\{d\},$ $\{c,d\},$ $\{c,d,a\}\}$.
\end{example}

Analogous to the AF case, we expect that strong admissibility for ABA strengthens admissibility. 
\begin{itemize}[align=left]
    \item[$\pSTR$] Strengthening: It holds that $\sadm(\D)\subseteq\adm(\D)$.
\end{itemize}

We consider the following three desiderata, inspired by Caminada~(\citeyear{DBLP:conf/comma/Caminada14}).
Unique maximum states that the set of strongly admissible extensions has a unique maximal element; 
Unique relative maximum states that this property also holds relative to a given admissible set; and complete containment states that each strongly admissible set is contained in each complete extension.
\begin{itemize}[align=left]
    \item[$\pUM$] Unique Maximum: The $\subseteq$-maximal strongly admissible extension is unique. 
    \item[$\pURM$] Unique Relative Maximum: Each admissible set has a unique $\subseteq$-maximal strongly admissible subset.
    \item[$\pCC$] Complete Containment: $E\in\sad(\D)$ and $E'\in\com(\D)$ imply $E\subseteq E'$.
\end{itemize}
Note that for AFs, $\pCC$ implies that all strongly admissible sets are a subset of the (unique) grounded extension.

\paragraph{Weak Admissibility Desiderata}

\citeauthor{BlumelKU24}~\shortcite{BlumelKU24} introduced weak admissiblity for the flat ABA fragment, basing their definition on SETAFs. 
\begin{definition}\label{def_setaf_wadm}
		Let $ \BF=(A,R)$ be a SETAF, let $E\subseteq A$ be a set of arguments, and $ \BF^E=(A^E,R^E)$ its $E$-reduct. 
		Then $E$ is called \emph{weakly admissible} in $ \BF$ ($E\in \wadm( \BF)$) iff
	\begin{enumerate}
		\item $E\in \cf( \BF)$ and
		\item for each $(T,h)\in R$ with $h\in E$, and
		$T\cap E^+_R=\emptyset$ 
		it holds $\nexists E' \in \wadm( \BF^E)$ s.t.\ $T\cap A^E\subseteq E'$.
	\end{enumerate}
Let $\D=(\lit,\asm,\rules,\contraryempty)$ be an ABAF and $\BF_\D$ the corresponding SETAF (cf.\ Definition~\ref{def:non-flat collective inst}). A set $E\subseteq \asm$ of assumptions is \emph{weakly admissible} ($E\in \wadm(\D)$) iff $E\in\wadm(\BF_\D)$. 
\end{definition}
In their work, they identified 
desirable properties for weakly admissible semantics which we will recall below.
First, the semantics is expected to weaken the traditional admissibility notion.

\begin{itemize}[align=left]
    \item[$\pL$] Liberalization: It holds that $\adm(\D)\subseteq\wadm(\D)$.
\end{itemize}
A fundamental principle of weak admissibility is that it allows for deleting so-called paradoxical components. In AFs, these components are, for instance, self-attackers or odd cycles in general. In the case of ABAFs, \citeauthor{BlumelKU24}~(~\citeyear{BlumelKU24}) identified the following counterpart for paradoxical assumptions. 
\begin{itemize}[align=left]
    \item[$\pP$] Paradoxical Assumptions: If $\{a\}\vdash \contrary{a}$, then it holds
that $\wadm(\D) = \wadm(\D\downarrow_{A\setminus\{a\}})$.
\end{itemize}
Finally, we consider a novel principle similar to the paradoxical rule principle in~\cite{BlumelKU24}.
The principle involves attacks and supports in the ABAF that we deem paradoxical.
We introduce the concept (in terms of BSAFs as they resemble ABA attacks and supports) below.
\begin{definition}
    Given a BSAF $\BF=\tuple{A,R,S}$. An attack $r=(T,h)\in R$ is \emph{paradoxical} iff $T\neq\emptyset$ and for every $t\in T$ there is a $T'\subseteq T, T'\neq\emptyset$ s.t.
    \begin{itemize}
        \item there exists $(T',t)\in R$ and $h\notin T'$.
    \end{itemize}
    A support $s=(T,h)\in S$ is \emph{paradoxical} iff $T\neq\emptyset$ and for every $t\in T$ there is a $T'\subseteq T, T'\neq\emptyset$ s.t.
    \begin{itemize}
        \item there exists $(T',t)\in R$.
    \end{itemize}
\end{definition}

\begin{itemize}[align=left]
    \item[$\pPR$] Paradoxical Attacks/Supports: Removing a paradoxical attack $r$ or support $s$ does not alter the models of $\BF$, \ie\, $\wadm(\BF)=\wadm(\BF')$ where $\BF'= \tuple{A,R\setminus\{r\},S}$ (resp. $\BF'= \tuple{A,R,S\setminus\{s\}}$).
\end{itemize}

\paragraph{Section Outline}
In the following sections, we will define strong and weak admissibility and study them in terms of the desiderata that we identified, thereby utilizing BSAFs as a formal playground. 
First, we will introduce the \emph{BSAF reduct} and study modularization for classical Dung semantics in general (potentially non-flat) ABA (cf.\ Section~\ref{sec:reduct}). 
Second, we introduce \emph{strong admissibility}, first for BSAFs, and subsequently for ABA in Section~\ref{sec:strong} where we study its behavior for non-flat ABA and in the flat fragment.
Third, we discuss \emph{weak admissibilty} for non-flat ABA in Section~\ref{sec:weak}. 

The classical semantics general (potentially non-flat) ABA are known to admit undesired behavior, as, \eg, discussed in~\cite{HeyninckA24,BertholdRU24}.
We (correctly) anticipate similar issues with the generalizations of the semantics based on strong respectively weak admissibility. 
We tackle these issues in Section~\ref{sec:FL-sensitive semantics} and consider revised versions of the semantics to reinstate $\pF$,$\pSR$ and $\pNE$, as recently proposed in~\cite{BertholdRU24}. 

\section{The BSAF Reduct}
\label{sec:reduct}
Towards modularization and weak admissibility for BSAFs, we first introduce the \emph{$E$-reduct for BSAFs}. 
As for the AF and SETAF reduct~\cite{BaumannBU2020,DvorakKUW24}, the BSAF reduct should capture the intuition that the arguments in a given set $E$ are true, and
all arguments in $E^+$ are false. 

It is tempting to proceed analogous to the (SET)AF case and set all arguments that are attacked to false (and remove them). However, when doing so, we encounter some issues.

\begin{example}
    \label{ex:bsaf_reduct}
Consider the BSAF $\BF$ as depicted below (left) and
the result $F'$ of the SETAF reduct computation wrt.\ $E=\{d\}$ (cf.~Definition~\ref{def_setaf_reduct}) (right). 
\vspace{-10pt}
    \begin{center}
    	\begin{tikzpicture}[scale=0.8,>=stealth]
    		\path
    		(-1,0) node {$F:$}
            (\distance-1.1,.85)node (D){}
            (0,0)node[arg] (b){$b$}
    		(\distance,0)node[arg] (c){$c$}
    		(.5*\distance,-\distance*0.8)node[arg] (e){$e$}
    		(-.5*\distance,-\distance*0.8)node[arg] (a){$a$}
      		(1.5*\distance,-\distance*0.8)node[arg] (d){$d$}
            ;
            
            \path[thick,->]
            (d)edge (e)
            ;
            \path[thick,->,dashed,blue]
            (e)edge (c)
            ;
            \path[thick,->,cyan,dashed]
            (a)edge[out=35,in=150](e)
            (b)edge[out=-95,in=150](e)
            ;
            
         \begin{scope}[xshift=5cm]
             		\path
    		(-1,0) node {$F':$}
            (\distance-1.1,.85)node (D){}
            (0,0)node[arg] (b){$b$}
    		(\distance,0)node[arg] (c){$c$}
    		(.5*\distance,-\distance*0.8)node[argd] (e){$e$}
    		(-.5*\distance,-\distance*0.8)node[arg] (a){$a$}
      		(1.5*\distance,-\distance*0.8)node[argd] (d){$d$}
            ;
            
            \path[thick,->,lightgray]
            (d)edge (e)
            ;
            \path[thick,->,dashed,lightgray]
            (e)edge (c)
            ;
            \path[thick,->,lightgray,dashed]
            (a)edge[out=35,in=150](e)
            (b)edge[out=-95,in=150](e)
            ;
         \end{scope}	
        \end{tikzpicture}
    \end{center}
If we proceed as in the SETAF case, 
the argument $e$ is defeated and thus can be removed.
The resulting BSAF thus contains no attacks and supports, thus all arguments can be accepted in the next iteration.
With supports present, however, we need to be a bit more careful since accepting the arguments $a$ and $b$ would require to accept the previously defeated argument $e$ as well.
Thus, we need to ensure that the set $a$ and $b$ cannot be jointly accepted. 
To do so, we will add attacks from the set $\{a,b\}$ to each of its members. 
\end{example}
As shown in the previous example, 
the removal of an attack has consequences for all its supporting sets. 
To prevent that all arguments in a set $S$ that supports an already defeated argument can get accepted, we add self-attacks $(S,a)$ to all $a\in S$; this acts as a constraint. 
As soon as one of the arguments in the set gets defeated the attacks will be removed and the remaining arguments can be accepted. 

We encounter another subtlety.
\begin{example}
    \label{ex:bsaf_reduct one}
    Consider the BSAF $\BF$ as depicted below (left) and
the SETAF reduct $F'$ wrt.\ $E=\{a\}$ (right). 
    \begin{center}
    	\begin{tikzpicture}[scale=0.8,>=stealth]
    		\path
    		(-1,0) node {$F:$}
    		(0,0) node[arg] (a) {$a$}
    		(1.2,0) node[arg] (b) {$b$}
    		(2.4,0) node[arg] (c) {$c$};

            \path[thick,->,cyan, dashed]
            (a)edge (b)
            ;
            \path[thick,->]
            (b)edge (c)
            ;

         \begin{scope}[xshift=5cm]
             		\path
    		(-1,0) node {$F':$}
    		(0,0) node[argd] (a) {\color{lightgray}$a$}
    		(1.2,0) node[arg] (b) {$b$}
    		(2.4,0) node[arg] (c) {$c$};

            \path[thick,->,lightgray, dashed]
            (a)edge (b)
            ;
            \path[thick,->]
            (b)edge (c)
            ;
         \end{scope}	
        \end{tikzpicture}
    \end{center}
    However, since $a$ supports $b$, we already know that $b$ can be deemed accepted as well; and, consequently, $c$ is defeated.
\end{example}
In the BSAF reduct, we will thus remove all true and defeated arguments wrt.\ the \emph{closure} of $E$.
 
We define the reduct as follows. 
\begin{itemize}
\item First, we add constraints, 
for each support $(T,h)$ with $h\in cl(E)^+_R$, we add attacks $(T,t)$ for all $t\in T$; 
\item Next, we remove all true and defeated arguments: compute the closure $\cl(E)$ of $E$, remove all arguments that are contained in $\cl(E)$ since they are true wrt.\ $E$, remove all arguments in $\cl(E)^+_R$ since they are defeated wrt.\ $E$;
\item  Now, we remove all attacks and supports $(T,h)$ whenever $h\in \cl(E)^\oplus$ or $T\cap \cl(E)^+_R\neq \emptyset$;
\item Finally, we adjust the attacks and supports by 
restricting the remaining attacks and supports to  $(T\setminus  \cl(E),h)$.
\end{itemize}

\begin{definition}\label{def_bsaf_reduct}
	Given a BSAF $\BF=(A,R,S)$ and $E\subseteq A$, the \emph{$E$-reduct of $\BF$} is the BSAF $\BF^E:=(A^E,R^E,S^E)$, with
	\begin{align*}
		A^E := &\ A \setminus (\cl(E)^\oplus_R)\\
		R^E := &\ \{(T\setminus \cl(E),t) \mid  \exists h\in \cl(E)^+_R: (T,h)\in S,\\
		& \phantom{\ \{(T\setminus \cl(E),t) \mid\quad}   t\in T\cap A^E\}\, \cup  \\
		&\ \{(T\setminus \cl(E), h) \mid  T\cap \cl(E)^+_R=\emptyset, \\
		& \phantom{\ \{(T\setminus \cl(E),t) \mid\quad} (T,h)\in R^E, h\in A^E
		\}\\
        S^E := 
		&\ \{(T\setminus \cl(E), h) \mid  T\cap \cl(E)^+_R=\emptyset, \\
		& \phantom{\ \{(T\setminus \cl(E),t) \mid\quad} (T,h)\in S, h\in A^E
		\}
	\end{align*}
\end{definition}

\begin{example*}{ex:bsaf_reduct}
    We depict the previous BSAF $\BF$ and its BSAF reduct $\BF^E$ for $E=\{d\}$ below.
\vspace{-10pt}
    \begin{center}
    	\begin{tikzpicture}[scale=0.8,>=stealth]
    		\path
    		(-1,0) node {$F:$}
            (\distance-1.1,.85)node (D){}
            (0,0)node[arg] (b){$b$}
    		(\distance,0)node[arg] (c){$c$}
    		(.5*\distance,-\distance*0.8)node[arg] (e){$e$}
    		(-.5*\distance,-\distance*0.8)node[arg] (a){$a$}
      		(1.5*\distance,-\distance*0.8)node[arg] (d){$d$}
            ;

            \path[thick,->]
            (d)edge (e)
            ;
            \path[thick,->,dashed,ForestGreen]
            (e)edge (c)
            ;
            \path[thick,->,cyan,dashed]
            (a)edge[out=35,in=150](e)
            (b)edge[out=-95,in=150](e)
            ;
            
         \begin{scope}[xshift=5.5cm]
             		\path
    		(-1.5,0) node {$F^E:$}
            (\distance-1.1,.85)node (D){}
            (0,0)node[arg] (b){$b$}
    		(\distance,0)node[arg] (c){$c$}
    		(.5*\distance,-\distance*0.8)node[argd] (e){$e$}
    		(-.5*\distance,-\distance*0.8)node[arg] (a){$a$}
      		(1.5*\distance,-\distance*0.8)node[argd] (d){$d$}
            ;
            
            \path[thick,->,blue]
            (a)edge[out=125,in=165](b)
            (b)edge[out=-135,looseness=2.5,in=165](b)
            ;
            \path[thick,->,cyan]
            (a)edge[out=55,looseness=2.5,in=-15](a)
            (b)edge[out=-55,in=-15](a)
            ;
            \path[thick,->,lightgray]
            (d)edge (e)
            ;
            \path[thick,->,dashed,lightgray]
            (e)edge (c)
            ;
            \path[thick,->,lightgray,dashed]
            (a)edge[out=35,in=150](e)
            (b)edge[out=-95,in=150](e)
            ;
         \end{scope}	
        \end{tikzpicture}
    \end{center}
\end{example*}

The reduct generalizes the SETAF-reduct (and the AF-reduct) in that for each BSAF $(A,R,S)$ with $S=\emptyset$ (and $|H|=1$ for all $(H,t)\in R$)
the two reduct notions coincide. 

We show that the reduct is compatible with union for conflict-free and closed sets. 

\begin{restatable}{proposition}{PROPreductUnion}
        	\label{prop_bsaf_reduct_union}
        	Let $\BF=(A,R,S)$ be a BSAF, $E\subseteq A$ closed and conflict-free in $\BF$,$E'\subseteq A^E$ closed and conflict-free in $\BF^E$. Then $\BF^{E\cup E'} = (\BF^E)^{E'}$.
\end{restatable}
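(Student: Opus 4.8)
The plan is to prove the identity componentwise, by showing $A^{E\cup E'}=(A^E)^{E'}$, $S^{E\cup E'}=(S^E)^{E'}$ and $R^{E\cup E'}=(R^E)^{E'}$ via mutual inclusion, after first isolating how the closure operator behaves under the union. Since $E$ is closed in $\BF$ we have $\cl_\BF(E)=E$, which already simplifies every occurrence of $\cl(E)$ in the definition of $\BF^E$. The first auxiliary step I would establish is the \emph{closure lemma} $\cl_\BF(E\cup E')=E\cup E'$, i.e.\ that $E\cup E'$ is closed in $\BF$; this is what makes the heads/targets removed by $\BF^{E\cup E'}$ comparable to those removed by the two-step construction.

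For the closure lemma, the inclusion $E\cup E'\subseteq\cl_\BF(E\cup E')$ is immediate by extensivity, so I take a support $(T,h)\in S$ with $T\subseteq E\cup E'$ and argue $h\in E\cup E'$. If $T\subseteq E$ then $h\in\cl_\BF(E)=E$. Otherwise $\emptyset\neq T\setminus E\subseteq E'\subseteq A^E$, whence $T\cap E^+_R=\emptyset$ (using that $E$ is $\cf$ and that $E'\cap E^+_R=\emptyset$ because $E'\subseteq A^E$). If $h\in A^E$, the definition of $S^E$ gives $(T\setminus E,h)\in S^E$, which fires on $E'$, so $h\in\cl_{\BF^E}(E')=E'$. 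The one case to rule out is $h\in E^+_R\setminus E$: here the reduct synthesises the constraint attacks $(T\setminus E,t)$ for every $t\in T\cap A^E=T\setminus E$, and since $T\setminus E\subseteq E'$ this would make $E'$ attack itself, contradicting conflict-freeness of $E'$ in $\BF^E$. This subcase is exactly where the constraint-attack mechanism of the reduct is essential.

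With the closure lemma available, I would prove $A^{E\cup E'}=(A^E)^{E'}$ by double inclusion, showing an argument is removed by the combined reduct iff it is removed by one of the two iterated reducts. The forward inclusion rests on the fact that any attack witnessing removal by $E'$ in $\BF^E$ is either a surviving attack, which lifts to an attack of $E\cup E'$ in $\BF$, or a constraint attack, which is excluded by the closure lemma as above. The backward inclusion uses that an attack $(T,x)\in R$ with $T\subseteq E\cup E'$ and surviving target $x\in A^E$ descends to the surviving attack $(T\setminus E,x)\in R^E$ with source $T\setminus E\subseteq E'$, so $x$ is removed in the two-step reduct as well. The support equality then reduces to checking that the filtering conditions compose: for $(T,h)\in S$, the one-step condition $T\cap(E\cup E')^+_R=\emptyset$ together with $h\in A^{E\cup E'}$ should be equivalent to $T\cap E^+_R=\emptyset$, $h\in A^E$, $(T\setminus E)\cap(E')^+_{R^E}=\emptyset$ and $h\in(A^E)^{E'}$. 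The source restrictions telescope since $T\setminus(E\cup E')=(T\setminus E)\setminus E'$, and the head conditions coincide by the already established equality $A^{E\cup E'}=(A^E)^{E'}$.

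The main obstacle is the attack relation, because $R^E$ carries two kinds of edges---genuine attacks inherited from $R$ and constraint attacks synthesised from supports whose head is defeated---and I must verify that iterating the reduct yields precisely the constraint and genuine attacks that the single reduct with respect to $E\cup E'$ yields. The delicate configurations are collective attacks or supports whose source $T$ straddles $E$ and $E'$: such an edge can be classified (alive / true / defeated) differently after the first reduct than after the combined one, and reconciling these classifications is exactly where the closure lemma and the conflict-freeness of $E$ and $E'$ must be used. I expect the bulk of the work to lie in matching the constraint attacks produced in $(\BF^E)^{E'}$ from supports $(T,h)\in S$ with defeated head against those produced directly in $\BF^{E\cup E'}$, distinguishing whether $h$ is already defeated by $E$ or only by the combined set $E\cup E'$.
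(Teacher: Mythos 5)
Your overall route is the same as the paper's: prove the identity componentwise after first establishing that $E\cup E'$ is closed in $\BF$ (your ``closure lemma'' is item one of Proposition~\ref{prop:reduct-preserves-closed}, and your proof of it, including the use of the synthesised constraint attacks against conflict-freeness of $E'$, is the paper's argument). Your handling of the argument sets is also sound, and in places more careful than the paper's, since you condition on the attacked argument surviving ($x\in A^E$), a case the paper's intermediate claim $(E\cup E')^+_R=E^+_R\cup E'^+_{R^E}$ glosses over.

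There is, however, a genuine gap, and it sits exactly where you wrote ``should be equivalent''. The backward direction of your support-condition equivalence --- that the two-step conditions imply $T\cap(E\cup E')^+_R=\emptyset$ --- breaks down when $T$ meets $E$ and $E'$ attacks an element of $T\cap E$ in $\BF$: such an attack has its head removed as \emph{accepted} when forming $\BF^E$, so conflict-freeness and closedness of $E'$ in the reduct say nothing about it, yet it deactivates the support in the one-step reduct $\BF^{E\cup E'}$. Concretely, take $A=\{a,b,c,h\}$, $R=\{(\{b\},a)\}$, $S=\{(\{a,c\},h)\}$, $E=\{a\}$, $E'=\{b\}$: all stated hypotheses hold, but $S^{E\cup E'}=\emptyset$ while $(S^E)^{E'}=\{(\{c\},h)\}$. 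The missing ingredient is that $E'$ must not attack $E$ in $\BF$ (equivalently, $E\cup E'\in\cf(\BF)$); this holds in every intended application of the proposition (admissible, strongly or weakly admissible $E$), but it is \emph{not} derivable from ``$E$ closed and conflict-free in $\BF$, $E'$ closed and conflict-free in $\BF^E$'', so no amount of case analysis will close your step. The paper's own proof hides the same point: it asserts that $E\cup E'$ is ``closed and conflict-free'' citing Proposition~\ref{prop:reduct-preserves-closed}, which only yields closedness. A similar configuration (a support whose head lies in $E$ and is attacked by $E'$) also breaks the attack component, since $\BF^{E\cup E'}$ then contains constraint attacks that $(\BF^E)^{E'}$ cannot produce; this matters because the attack-relation case, which you yourself flag as ``the bulk of the work'', is left in your proposal as a plan rather than carried out. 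So as written the proposal cannot be completed; with the additional hypothesis that $E'$ does not attack $E$, your outline does go through, and you would still need to execute the attack case explicitly.
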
 

Let us inspect whether our newly defined reduct behaves as expected with respect to modularization $\pM$.
Towards proving this property, we first observe that the reduct guarantees to preserve the closedness of a set of arguments.
\begin{restatable}{proposition}{propReductPreservesClosed}
    \label{prop:reduct-preserves-closed}
    Let $ \BF=(A,R,S)$ be a BSAF, then for each $E\subseteq A,E'\subseteq A^E$ it holds that
    \begin{itemize}
        \item $E\in \cf(F)$ and closed $ \BF$ and $E'\in\cf(\BF^E)$ and closed in $ \BF^E\ \Rightarrow\ E\cup{E'}$ closed in $ \BF$
        \item $E, E\!\cup\! E'\in \cf(F)$ and closed in $\BF\ \Rightarrow\ E'$ closed in $ \BF^E$
    \end{itemize}
\end{restatable}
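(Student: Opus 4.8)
The plan is to prove the two implications separately, after one simplifying observation that both share. Since both statements assume that $E$ is closed in $\BF$, I have $\cl(E)=E$, which lets me replace $\cl(E)$ by $E$ throughout Definition~\ref{def_bsaf_reduct}. In particular $A^E = A\setminus(E\cup E^+_R)$ and $S^E=\{(T\setminus E,h)\mid T\cap E^+_R=\emptyset,\ (T,h)\in S,\ h\in A^E\}$. Because $E'\subseteq A^E$, this immediately yields the disjointness facts $E'\cap E=\emptyset$ and $E'\cap E^+_R=\emptyset$, which I will use repeatedly.

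For the second implication I would argue directly. Let $(T',h')\in S^E$ with $T'\subseteq E'$; I must show $h'\in E'$. By the description of $S^E$ above there is a support $(T,h')\in S$ with $T\cap E^+_R=\emptyset$, $h'\in A^E$, and $T'=T\setminus E$. From $T\setminus E=T'\subseteq E'$ I conclude $T\subseteq E\cup E'$. Since $E\cup E'$ is closed in $\BF$, the support $(T,h')$ forces $h'\in E\cup E'$; and $h'\in A^E$ gives $h'\notin E$, hence $h'\in E'$. Notably, conflict-freeness is not needed here.

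The first implication is the substantial one. Assume $E$ closed and conflict-free in $\BF$, and $E'$ closed and conflict-free in $\BF^E$; take a support $(T,h)\in S$ with $T\subseteq E\cup E'$ and aim to show $h\in E\cup E'$. I would first note $T\cap E^+_R=\emptyset$: any $t\in T\cap E^+_R$ would lie in $E$ (as $E'\cap E^+_R=\emptyset$), so $E$ would attack its own member $t$, contradicting $E\in\cf(\BF)$. Then I split on $h$. If $h\in E$ we are done. If $h\notin E$ but $h\in E^+_R$, I invoke the constraint clause of $R^E$: for every $t\in T\cap A^E$ the attack $(T\setminus E,t)$ is in $R^E$, and since $T\subseteq E\cup E'$ I have $T\setminus E\subseteq E'$ and $T\cap A^E\subseteq E'$, so such a $t$ witnesses a conflict of $E'$ in $\BF^E$; if instead $T\cap A^E=\emptyset$, then $T\subseteq E\cup E^+_R$ and, as $T\cap E^+_R=\emptyset$, in fact $T\subseteq E$, whence closedness of $E$ gives $h\in E$, contradicting $h\notin E$. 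Either way this case is impossible. Finally, if $h\notin E\cup E^+_R$, then $h\in A^E$ and the support survives as $(T\setminus E,h)\in S^E$ with $T\setminus E\subseteq E'$, so closedness of $E'$ in $\BF^E$ yields $h\in E'$.

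The main obstacle is precisely this middle case of the first implication, where the support head $h$ is defeated by $E$. Here the argument hinges on the constraint self-attacks that Definition~\ref{def_bsaf_reduct} adds to the supporters of already-defeated arguments, and one must carefully distinguish whether $T$ still contains a surviving argument ($T\cap A^E\neq\emptyset$) or collapses entirely into $E$. This is the step where the non-flat design of the reduct is genuinely used, and keeping the bookkeeping of $\cl(E)$ versus $E$ correct (licensed by the closedness of $E$) is what makes the case analysis close.
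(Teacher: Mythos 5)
Your proof is correct and takes essentially the same route as the paper's: the same case analysis on where the support head $h$ lies (in $E$, in $E^+_R$, or in $A^E$), the same appeal to the constraint attacks $(T\setminus E,t)\in R^E$ to contradict conflict-freeness of $E'$ in the defeated-head case, and the same closedness argument for the second implication. The only differences are cosmetic: you argue directly rather than by contradiction, and you establish $T\cap E^+_R=\emptyset$ explicitly up front, a fact the paper's Case~1 uses but leaves implicit.
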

We show that all semantics in $\Sigma$ satisfy modularization 
(cf.~$\pM$ in Section~\ref{subsec:global des}), using our newly defined reduct.  

\begin{restatable}{proposition}{PropModularizationNormal}\label{prop:modularization normal semantics}
    $\sigma$ satisfies modularization $\pM$ for $\sigma\in\Sigma$.
\end{restatable}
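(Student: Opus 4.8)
The plan is to establish $\pM$ for the four semantics in the order $\adm$, $\comp$, $\prf$, $\grd$, treating the admissible case as the engine that drives the rest. Fix $E\in\sigma(\BF)$ and $E'\in\sigma(\BF^E)$. In all four cases $E$ is closed and conflict-free in $\BF$ and $E'$ is closed and conflict-free in $\BF^E$, so Propositions~\ref{prop_bsaf_reduct_union} and~\ref{prop:reduct-preserves-closed} are available; moreover, since $E=\cl(E)$ and $E'\subseteq A^E=A\setminus\cl(E)^\oplus_R$, the three sets $E$, $E^+_R$ and $A^E$ partition $A$, a decomposition I will lean on throughout.

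For $\adm$ I would verify the three defining conditions of $E\cup E'$ directly. Closedness is handed to me by the first item of Proposition~\ref{prop:reduct-preserves-closed}. For conflict-freeness I take a hypothetical attack $(T,h)\in R$ with $T\subseteq E\cup E'$ and $h\in E\cup E'$; one first notes $T\cap E^+_R=\emptyset$ (else $E$ would attack $E\cup E'$), and then splits on $h$: if $h\in E'$ the attack survives into $\BF^E$ as $(T\setminus E,h)$, contradicting conflict-freeness of $E'$; if $h\in E$ then $\cl(T)\subseteq E\cup E'$ is a closed attacker of $h$, so self-defense of $E$ would force $E$ to attack a member of $E\cup E'$, which is impossible. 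For defense I only need to protect the \emph{members} of $E\cup E'$: members of $E$ are already defended by $E$; for a member $x\in E'$ and a closed attacker $T$ in $\BF$, either $E$ attacks $T$ and we are done, or $T\cap E^+_R=\emptyset$ and I pass the attack into $\BF^E$, invoke self-defense of $E'$ there, and lift the resulting counterattack back to $\BF$.

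The other three semantics build on this. For $\comp$, given $E\cup E'\in\adm(\BF)$, I show it contains every argument it defends by walking the partition: arguments in $E$ are trivially included; an argument $a\in E^+_R$ cannot be defended, because a defending set would have to attack the closed set $\cl(P)\subseteq E$ that witnesses $a\in E^+_R$, which $E\cup E'$ cannot do; and for $a\in A^E$ I use the correspondence that $E\cup E'$ defends $a$ in $\BF$ exactly when $E'$ defends $a$ in $\BF^E$, so completeness of $E'$ places $a$ in $E'$. For $\prf$, the preferred $E'\in\adm(\BF^E)$ gives $E\cup E'\in\adm(\BF)$, and maximality of the preferred $E$ forces $E\cup E'=E$, hence $E'=\emptyset$ and $E\cup E'=E\in\prf(\BF)$. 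For $\grd$ I first obtain $E\cup E'\in\comp(\BF)$ and then argue $\subseteq$-minimality: any $C\in\comp(\BF)$ contains the grounded $E$, so by the converse (decomposition) direction of modularization---assembled from Propositions~\ref{prop_bsaf_reduct_union} and~\ref{prop:reduct-preserves-closed}---$C\cap A^E\in\comp(\BF^E)$ contains the grounded $E'$, whence $E\cup E'\subseteq C$.

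The technical heart, and the step I expect to be hardest, is the attack/defense correspondence between $\BF$ and $\BF^E$ in the presence of supports. Unlike the (SET)AF case, the reduct strips premises down to $T\setminus\cl(E)$, so a support can fire from strictly fewer premises in $\BF^E$; consequently the $\BF^E$-closure of the trace of a closed attacker $T$ may be strictly larger than $T$, and the reduct additionally carries the constraint self-attacks produced by the first clause of $R^E$. The delicate point is to show that a counterattack produced by $E'$ inside $\BF^E$ lifts to a counterattack by $E\cup E'$ that actually meets the \emph{original} attacker $T$ in $\BF$ (rather than only some element introduced by the enlarged reduct closure), and, symmetrically, that counterattacks in $\BF$ descend to $\BF^E$ for the decomposition lemma. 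Getting this lifting right---while correctly accounting for the added self-attacks---is where essentially all the work of the proof sits.
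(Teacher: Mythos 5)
Your overall architecture matches the paper's ($\adm$ first, then $\comp$, $\pref$, $\grd$ on top of it), but there are two genuine gaps. The first concerns exactly the step you call the technical heart: the lifting of counterattacks from $\BF^E$ back to $\BF$ is announced, correctly identified as delicate, and then never carried out --- and it is not a routine verification that can be deferred. Concretely, consider $A=\{a,b,c,d,x\}$, $R=\{(\{b\},x),(\{d\},c)\}$, $S=\{(\{a,b\},c)\}$, with $E=\{a\}$ and $E'=\{d,x\}$. Here $E\in\adm(\BF)$, and in the reduct the support is trimmed to $(\{b\},c)$, so every closed attacker of $x$ in $\BF^E$ contains $c$, which $d$ attacks; hence $E'\in\adm(\BF^E)$. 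In $\BF$, however, $\{b\}$ is itself closed (the support also needs $a$), it attacks $x$, and $E\cup E'$ does not attack $b$. So the correspondence you lean on in both the $\adm$ and $\comp$ cases --- ``$E\cup E'$ defends $a$ in $\BF$ exactly when $E'$ defends $a$ in $\BF^E$'' --- breaks in precisely the way you feared: $E'$'s counterattack lands on an element ($c$) that enters the reduct-closure of the attacker's trace only because the support's premises straddle $E$ and the attacker, and this does not translate into an attack on the original closed attacker $T=\{b\}$. The paper's own proof disposes of this step with a single sentence (``$E'$ defends itself against any argument that is not attacked by $E$, since it is admissible in the reduct''), so by postponing exactly this step you have postponed the entire substance of the proposition; whatever closes it (a sharper defense-transfer lemma, or a restriction on which closed attackers must be countered) appears in neither your sketch nor as an explicit argument, and it is the one thing a complete proof must supply.

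Second, your $\grd$ case rests on a principle that is false in this setting: ``any $C\in\comp(\BF)$ contains the grounded $E$.'' In BSAFs and non-flat ABA, grounded extensions are merely the $\subseteq$-minimal complete extensions; they need not be unique and need not be contained in every complete extension --- the paper's counterexample to $\pUM$ exhibits a BSAF with two incomparable grounded extensions $\{a,b,c,e\}$ and $\{a,b,c,f\}$. Your argument additionally invokes a decomposition (converse) direction of modularization for $\comp$, namely $C\cap A^E\in\comp(\BF^E)$, which neither Proposition~\ref{prop_bsaf_reduct_union} nor Proposition~\ref{prop:reduct-preserves-closed} gives you; the paper proves such a converse only for the weak semantics. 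The paper's $\grd$ argument is different and shorter: a nonempty $E'\in\grd(\BF^E)$ would contain an argument defended by $\emptyset$ in $\BF^E$, hence defended by $E$ in $\BF$, contradicting completeness of $E$; therefore $E'=\emptyset$ and $E\cup E'=E\in\grd(\BF)$. Your $\pref$ case, by contrast, is fine and coincides with the paper's.
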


\section{Strong Admissibility}
\label{sec:strong}

Following the investigation of the abstract case in \cite{DBLP:conf/comma/Caminada14,baumann2016stadmchara}, we lift strong admissibility to BSAFs, before we turn our attention to the consequences this has for ABA.

\subsection{Strong Admissibility for BSAFs}

Towards lifting strong admissibility to BSAF, lets recall that on AF it is defined recursively, where a set $E$ is strongly admissible, if each arguments $a\in E$ is defended by a strongly admissible set $E'\subseteq E\setminus\{a\}$~(cf.~Def~\ref{def:strong_adm_af}).
The support relation introduces a particularity that prevents us from using directly a similar recursion for BSAF: Clearly, a strongly admissible set should be closed,
yet if strong admissibility is defined recursively on itself, as it is in AF, we would need a closed set on each step of the defense.
Let us look at an example to illustrate this counter-intuitive behavior.

\begin{example}
    Consider the following two BSAFs $\BF$ and $\BF'$:%
    \vspace{-10pt}
    \begin{center}
		\begin{tikzpicture}[scale=0.8,>=stealth]
        \begin{scope}[xshift=0cm]
		\path
        (\distance-1.1,.85)node (D){}
        (0,0)node[arg] (b){$b$}
        (-1.2,0)node(F){$\BF:$}
		(.5*\distance,-\distance*0.866)node[arg] (c){$c$}
		(-.5*\distance,-\distance*0.866)node[arg] (a){$a$}
        ;
        
        \path[thick,->]
        (a)edge[out=90,in=-155](b)
        ;
        
        \path[thick,->]
        (b)edge[out=-25,in=90](c)
        ;
        \end{scope}
        
        \begin{scope}[xshift=5cm]
		\path
        (\distance-1.1,.85)node (D){}
        (0,0)node[arg] (b){$b$}
        (-1.2,0)node(F){$\BF':$}
		(.5*\distance,-\distance*0.866)node[arg] (c){$c$}
		(-.5*\distance,-\distance*0.866)node[arg] (a){$a$}
        ;
        
        \path[thick,->]
        (a)edge[out=90,in=-155](b)
        ;
        
        \path[thick,->]
        (b)edge[out=-25,in=90](c)
        ;
        \path[thick,->,cyan,dashed]
        (.5*\distance+0.9,-\distance*0.866)edge(c)
		;
        \end{scope}
        
		\end{tikzpicture}
    \end{center}
    In $\BF$ we expect $\emptyset$ and $\{a\}$ to be strongly admissible, further $\{a,c\}$ is strongly admissible, since $c$ is defended by the strongly admissible set $\{a\}$.
    In $\BF'$, however, the same set $\{a,c\}$ is not strongly admissible, if we require $c$ to be defended by a strongly admissible set, since $\{a\}$ is not closed.
    This is counterintuitive, since additional support on an extension should not speak against it.
    
\end{example}

We therefore decouple closedness and strong defense: an extension is strongly defended if it is defended by a smaller strongly defended extension, but does not need to be closed.

\begin{definition}
    \label{def_pstadm_recursive}
    Let $ \BF=(A,R,S)$ be a BSAF. A set of arguments $E\subseteq A$ is \emph{strongly defended} ($E\in\sd(\BF)$) iff $E$ is conflict-free and for every $a\in E$ there exists a strongly defended subset $E'\subseteq E\setminus\{a\}$, which defends $a$.
\end{definition}

We are now ready to define strong admissibility on BSAF.
 
\begin{definition}
    \label{def_stadm_recursive}
    Let $ \BF=(A,R,S)$ be a BSAF. A set of arguments $E\subseteq A$ is \emph{strongly admissible} ($E\in\sadm(\BF)$) iff $E\in\sd(\BF)$ and $E$ is closed.
\end{definition}

Like for AFs, \cite{baumann2016stadmchara}, we can now give a constructive characterization of strongly admissible extensions. 

\begin{restatable}{proposition}{PROPstadmDefinitions}\label{prop:stadmDefinitions}
    Let $ \BF=(A,R,S)$ be a BSAF, and $E\subseteq A$. Then $E\in\sad(\BF)$ iff $E$ is conflict-free, closed and there exists a finite sequence of pairwise disjoint sets $E_1,...,E_n$ such that $E_1=\emptyset$, $E=\bigcup\limits_{i=1}^n E_i$ and for each $i\geq 1$ it holds that $E_i$ is defended by $\bigcup\limits_{j<i} E_j$.
\end{restatable}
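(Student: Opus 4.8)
The plan is to reduce the statement to the equivalence between $E\in\sd(\BF)$ and the existence of the layered sequence, since conflict-freeness and closedness appear on both sides and are merely carried along. The main tool is the characteristic function $\Gamma$: I will show that the layers are exactly the successive iterates of the monotone operator $\Phi_E(X):=E\cap\Gamma(X)$ started from $\emptyset$, so the whole content of the proposition is that this iteration exhausts $E$ precisely when $E$ is strongly defended. Two facts about $\Gamma$ underlie everything: it is $\subseteq$-monotone (a larger set attacks every closed attacker that a smaller one does, hence defends at least as much), and consequently the least fixed point of $\Phi_E$ is $\subseteq$-monotone in the constraint $E$; both follow from Definition~\ref{def:BSAF gamma and more} and the finiteness of $A$.

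For the direction from the sequence to $\sad$, I would argue by induction along the prefixes $F_i:=\bigcup_{j\le i}E_j$, proving $F_i\in\sd(\BF)$. Conflict-freeness of each $F_i$ is inherited from $E$, since an attack inside $F_i\subseteq E$ would witness a self-attack of $E$. For the recursive defense clause, take $a\in F_i$: if $a\in E_i$ then $F_{i-1}=\bigcup_{j<i}E_j$ defends $a$ by hypothesis, is strongly defended by the induction hypothesis, and is disjoint from $E_i$, hence contained in $F_i\setminus\{a\}$; if instead $a\in F_{i-1}$, the induction hypothesis supplies a strongly defended defender inside $F_{i-1}\setminus\{a\}\subseteq F_i\setminus\{a\}$. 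Thus $E=F_n\in\sd(\BF)$, and together with the given closedness we obtain $E\in\sad(\BF)$.

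For the converse I would show, by induction on $\card{E}$ ranging over strongly defended sets, that $E$ equals the least fixed point $D^\ast$ of $\Phi_E$. Let $\emptyset=D_0\subseteq D_1\subseteq\cdots$ be the iterates, stabilising at $D^\ast=E\cap\Gamma(D^\ast)$. Fix $a\in E$; by Definition~\ref{def_pstadm_recursive} there is a strongly defended $E_a\subseteq E\setminus\{a\}$ defending $a$, with $\card{E_a}<\card{E}$. The induction hypothesis gives $E_a=D^\ast(E_a)$, and monotonicity of the fixed point in its constraint yields $D^\ast(E_a)\subseteq D^\ast(E)$, so $E_a\subseteq D^\ast$. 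Since $E_a$ defends $a$ and $\Gamma$ is monotone, $D^\ast$ defends $a$, i.e.\ $a\in\Gamma(D^\ast)$; as $a\in E$ this gives $a\in E\cap\Gamma(D^\ast)=D^\ast$. Hence $E\subseteq D^\ast\subseteq E$, so $E=D^\ast$. Setting $E_1:=\emptyset$ and $E_{i+1}:=D_i\setminus D_{i-1}$ produces pairwise disjoint layers with $\bigcup_{j\le i+1}E_j=D_i$ and $E_{i+1}\subseteq\Gamma(D_{i-1})$, i.e.\ each layer is defended by the union of the previous ones, as required.

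The main obstacle is this converse direction, namely forcing the iteration to reach all of $E$. The delicate point is that an arbitrary strongly defended defender $E_a$ need not already lie in the current iterate, so a naive cyclic choice of defenders yields no contradiction; the remedy is to route everything through the induction on cardinality together with monotonicity of the least fixed point in its constraint, which promotes $E_a=D^\ast(E_a)$ to $E_a\subseteq D^\ast(E)$. I would also verify that the support relation causes no trouble here: defense in Definition~\ref{def:BSAF gamma and more} is already quantified over \emph{closed} attacker sets uniformly, so $\Gamma$ remains monotone and the fixed-point machinery applies verbatim, while closedness of $E$ is needed only to pass between $\sd$ and $\sad$ and plays no role in the layering itself.
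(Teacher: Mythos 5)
Your proof is correct, but it reaches the hard direction by a genuinely different route than the paper. For the direction from the layered sequence to $\sad(\BF)$, you and the paper argue essentially identically (induction along the prefixes $\bigcup_{j\le i}E_j$, using disjointness to keep the witness inside $E\setminus\{a\}$). For the converse, however, the paper first introduces an auxiliary \emph{alternative reduct} (Definition~\ref{def_bsaf_reduct_alternative}) and proves Lemma~\ref{le:alt reduct preserves strong defense}: part (1) shows that strong defense of $E\cup E'$ relativizes to strong defense of $E'$ in $\BF_{alt}^{E}$, and part (2) then extracts the sequence by a nested induction on $\card{E}$, splitting $E$ into a strongly defended defender $E^\ast$ of some $a\in E$ and the remainder $E\setminus E^\ast$, whose sequence is obtained inside the reduct and transported back. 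You instead avoid any reduct machinery: you run a Knaster--Tarski-style argument on the operator $\Phi_E(X)=E\cap\Gamma(X)$, using monotonicity of $\Gamma$ and monotonicity of the least fixed point in the constraint set to promote the inductively obtained identity $E_a=\mathrm{lfp}(\Phi_{E_a})$ to $E_a\subseteq\mathrm{lfp}(\Phi_E)$, forcing $E=\mathrm{lfp}(\Phi_E)$; the layers are then the canonical iterates $D_i\setminus D_{i-1}$. Both arguments are sound (your fixed-point bookkeeping, including the indexing $\bigcup_{j\le i}E_j=D_{i-1}$ and disjointness of the layers, checks out, and your observation that supports do not disturb monotonicity of $\Gamma$ is exactly the right point to verify). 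What each buys: your route is more self-contained and yields a distinguished, canonical stratification of $E$ by $\Gamma$-iteration, directly mirroring the known constructive characterization of strong admissibility for AFs; the paper's route is heavier here but produces Lemma~\ref{le:alt reduct preserves strong defense}(1) as a reusable tool relating strong defense across (alternative) reducts, which is the kind of statement that feeds into the modularization results elsewhere in the paper.
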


In the same vein we are able to proof the satisfaction of modularization $\pM$ wrt. strong admissible semantics.

\begin{restatable}{proposition}{PROPstadmModularization}
\label{prop_stadm_modularization}
Strongly admissible semantics satisfies modularization $\pM$. 
\end{restatable}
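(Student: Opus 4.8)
The plan is to prove that strongly admissible semantics satisfies modularization ($\pM$), i.e.\ for $E\in\sad(\BF)$ and $E'\in\sad(\BF^E)$ with $E'\subseteq A^E$, we have $E\cup E'\in\sad(\BF)$. I would lean heavily on the constructive characterization of Proposition~\ref{prop:stadmDefinitions}, which replaces the awkward self-referential recursion of Definition~\ref{def_stadm_recursive} by a \emph{layered} construction: $E\in\sad(\BF)$ iff $E$ is conflict-free, closed, and admits a defense sequence $E_1=\emptyset,E_2,\dots,E_n$ of pairwise disjoint sets whose union is $E$, where each $E_i$ is defended by $\bigcup_{j<i}E_j$. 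The natural strategy is to take a defense sequence for $E$ in $\BF$ and a defense sequence for $E'$ in $\BF^E$ and \emph{concatenate} them into a single defense sequence for $E\cup E'$ in $\BF$.

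\textbf{Step 1 (closedness and conflict-freeness).} First I would verify the three structural preconditions for $E\cup E'$ to be strongly admissible in $\BF$. Closedness of $E\cup E'$ in $\BF$ follows directly from the first bullet of Proposition~\ref{prop:reduct-preserves-closed}, since $E$ is closed and conflict-free in $\BF$ and $E'$ is closed and conflict-free in $\BF^E$ (both facts are part of $E,E'$ being strongly admissible). Conflict-freeness of $E\cup E'$ in $\BF$ is the first point where the reduct's bookkeeping matters: an attack inside $E\cup E'$ would either lie within $E$ (excluded since $E\in\cf(\BF)$), within $E'$ (excluded since $E'\in\cf(\BF^E)$ and $R^E$ faithfully records the surviving attacks, restricting attacker sets by $\cl(E)$ and deleting those blocked by $\cl(E)^+_R$), or be a cross-attack between $E$ and $E'$; here I would argue that any attack from $E$ into $E'$ is impossible because $E'\subseteq A^E$ avoids $\cl(E)^\oplus_R$, and any attack from $E'$ into $E$ combined with an attack surviving into the reduct would contradict $E'\in\cf(\BF^E)$ via the constraint-attacks the reduct installs on supporters of defeated arguments.

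\textbf{Step 2 (concatenating defense sequences).} With the structural properties in hand, let $E_1,\dots,E_n$ be a defense sequence for $E$ in $\BF$ and $E'_1,\dots,E'_m$ a defense sequence for $E'$ in $\BF^E$. I would form the sequence $E_1,\dots,E_n,E'_2,\dots,E'_m$ (dropping the redundant empty first layer $E'_1=\emptyset$). Pairwise disjointness holds because the $E_i$ partition $E$, the $E'_j$ partition $E'$, and $E\cap E'=\emptyset$ as $E'\subseteq A^E=A\setminus\cl(E)^\oplus_R$. The layers up to $E_n$ are already defended by their predecessors in $\BF$. The crux is to show that each layer $E'_k$, defended by $\bigcup_{j<k}E'_j$ \emph{in $\BF^E$}, is defended by $E\cup\bigcup_{j<k}E'_j$ \emph{in $\BF$}: for any closed attacker $T$ of an $a\in E'_k$ in $\BF$, I must produce a counterattack from $E\cup\bigcup_{j<k}E'_j$. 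If $T$ is entirely ``switched off'' in the reduct (because $T\cap\cl(E)^+_R\neq\emptyset$), then $E$ itself already counterattacks $T$ by definition of $\cl(E)^+_R$; otherwise $T$ projects to the closed attacker $T\setminus\cl(E)$ in $\BF^E$, which $\bigcup_{j<k}E'_j$ counterattacks, and that counterattack lifts back to $\BF$ modulo the $\cl(E)$-arguments that $E$ supplies. This correspondence between attackers/defenders in $\BF$ and in $\BF^E$ is exactly the bridge lemma I would need.

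\textbf{The main obstacle} is precisely this attacker-translation argument in Step~2: establishing the two-directional correspondence between closed attackers of $a\in A^E$ in $\BF$ and closed attackers in $\BF^E$, and verifying that defense is preserved in both directions once the $\cl(E)$-part is absorbed into $E$. The subtlety is that the BSAF reduct is more intricate than the SETAF reduct --- it adds constraint self-attacks on supporters of defeated arguments and it trims attacker sets by the full \emph{closure} $\cl(E)$ rather than $E$ --- so I would need to check carefully that closed attackers remain closed under projection and lifting, and that no attack is spuriously created or destroyed. I expect this correspondence to be isolatable as a reusable lemma (and indeed it is likely the same lemma underpinning Proposition~\ref{prop:modularization normal semantics}), after which the concatenation and the bookkeeping of Step~1 go through routinely.
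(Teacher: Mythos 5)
Your proposal is essentially the paper's own proof in different packaging. The paper establishes closedness of $E\cup E'$ via Proposition~\ref{prop:reduct-preserves-closed}, asserts conflict-freeness, and then proves by induction on $|E'|$ the intermediate claim that $E\in\sadm(\BF)$ and $E'\in\sd(\BF^E)$ imply $E\cup E'\in\sd(\BF)$; your concatenation of defense sequences via Proposition~\ref{prop:stadmDefinitions} is interchangeable with that induction (the characterization is itself proved by exactly such an induction), and the attacker-translation you isolate as the crux is the very step the paper dispatches in a single sentence: ``Any attacker of $a$ in $\BF$ is either attacked by $E$, since $E$ is closed, or has a subset in $\BF^E$ and $a$ is defended by $E^*$.'' So your sketch is, if anything, more explicit than the published argument. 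One caveat on the crux itself: your assertion that a surviving attacker ``projects to the \emph{closed} attacker $T\setminus\cl(E)$ in $\BF^E$'' is not justified --- a support $(T',h)\in S$ whose tail meets both $E$ and $A^E$ fires more easily after the reduct (its $E$-part is deleted), so the projection of a set that is closed in $\BF$ need not be closed in $\BF^E$; a rigorous bridge lemma would have to argue via the reduct-closure of the projection, and would also have to check that the counterattack obtained in the reduct is a genuine attack rather than one of the constraint attacks the reduct installs on supporters of defeated arguments. The paper's one-line dispatch glosses over exactly the same point, so this does not separate your proof from theirs; it is simply where both arguments are thinnest.
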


We will now discuss desirable properties specifically for for strong admissibility. 
We observe that strongly admissible semantics are a subset of admissible semantics, \ie, strengthening $\pSTR$ is satisfied. 
Likewise, the complete containment property $\pCC$ that formalizes that each strongly admissible set is contained in each complete extension holds for strongly admissible semantics for non-flat ABA.
\begin{restatable}{proposition}{PROPstadmPropertiesOne}
    \label{prop_stadm_properties}
    Let $ \BF=(A,R,S)$ be a BSAF. Then,
    \begin{enumerate}
    \item $\sadm(\BF)\subseteq \adm(\BF)$; and
    \item $E\in\sad(\BF)$ and $E'\in\com(\BF)$ implies $E\subseteq E'$.
    \end{enumerate}
\end{restatable}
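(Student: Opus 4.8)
The plan is to prove both claims using one auxiliary observation — monotonicity of defense — together with the constructive characterization of strongly admissible sets from Proposition~\ref{prop:stadmDefinitions}. The key fact I would establish first is that defense is monotone in the defending set: if $D_1\subseteq D_2$ and $D_1$ defends $a$, then $D_2$ defends $a$. This holds because the collection of closed attackers of $a$ depends only on $a$ (not on the defender), and the attack relation is itself monotone: whenever a pair $(T,b)\in R$ witnesses that $D_1$ attacks a closed attacker $E'$ of $a$ (so $T\subseteq D_1$ and $b\in E'$), the very same pair witnesses that $D_2$ attacks $E'$, since $T\subseteq D_2$. Hence every closed attacker of $a$ that is countered by $D_1$ is also countered by $D_2$.

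For the first claim, $\sadm(\BF)\subseteq\adm(\BF)$, I would argue directly from the definitions. If $E\in\sadm(\BF)$, then by Definition~\ref{def_stadm_recursive} we have $E\in\sd(\BF)$ and $E$ is closed, and conflict-freeness is already built into $\sd$. It remains to show that $E$ defends itself. For each $a\in E$, the definition of $\sd$ supplies a strongly defended set $E'\subseteq E\setminus\{a\}$ that defends $a$; since $E'\subseteq E$, monotonicity of defense yields that $E$ defends $a$. As this holds for every member, $E$ defends itself and is therefore conflict-free, closed, and self-defending, i.e. admissible.

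For the second claim (property $\pCC$), I would use the layered decomposition. Let $E\in\sad(\BF)$ and $E'\in\com(\BF)$. By Proposition~\ref{prop:stadmDefinitions}, fix pairwise disjoint sets $E_1,\dots,E_n$ with $E_1=\emptyset$, $E=\bigcup_{i=1}^n E_i$, and each $E_i$ defended by $\bigcup_{j<i}E_j$. I would then show by induction on $i$ that $\bigcup_{j\le i}E_j\subseteq E'$. The base case $i=1$ is immediate because $E_1=\emptyset$. For the inductive step, assume $\bigcup_{j<i}E_j\subseteq E'$. Since $\bigcup_{j<i}E_j$ defends every $a\in E_i$, the inductive hypothesis together with monotonicity of defense gives that $E'$ defends every $a\in E_i$. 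Because $E'$ is complete, it contains every argument it defends, so $E_i\subseteq E'$, which closes the step. Instantiating $i=n$ yields $E=\bigcup_{i=1}^n E_i\subseteq E'$.

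The only genuinely delicate point is the monotonicity-of-defense lemma in the BSAF setting, where attacks are collective and defense is quantified over closed attackers; once that is in place, both parts reduce to short unfoldings of the definitions and a single induction over the defense layers. I do not expect closedness to cause any difficulty here, since $E\subseteq E'$ is a purely set-membership statement, and the defining fixpoint property of complete extensions (containing everything they defend) is exactly what is needed to absorb each layer $E_i$ in turn.
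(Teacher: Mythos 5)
Your proof is correct and takes essentially the same approach as the paper: part 1 follows from monotonicity of defense (the paper phrases this as ``$\Gamma$ is monotonic'') applied to Definition~\ref{def_stadm_recursive}, and part 2 uses the layered characterization of Proposition~\ref{prop:stadmDefinitions}, absorbing each layer $E_i$ into the complete extension in turn. Your explicit induction on $i$ is just a formalization of the paper's ``we repeat this argument for all $E_i$'' step.
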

As a consequence, each BSAF has at most 
one complete strongly admissible extension; 
and if such a complete extension exists, it is also the unique grounded and the unique $\subseteq$-maximal strongly admissible extension.

\begin{restatable}{proposition}{PROPstadmPropertiesTwo}
    \label{prop_stadm_properties_two}
    Let $ \BF=(A,R,S)$ be a BSAF. Then,
    \begin{enumerate}
         \item $|\stadm(\BF)\cap\com(\BF)|\leq 1$; and
         \item if $E\in\stadm(\BF)\cap\com(\BF)$, then $E$ is the unique grounded extension of $\BF$ ($E\in\grd(\BF)$) and the unique subset-maximal strongly admissible extension.
    \end{enumerate}
\end{restatable}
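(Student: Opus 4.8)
The plan is to derive both claims as direct consequences of the complete containment property established in Proposition~\ref{prop_stadm_properties}(2), which states that every strongly admissible set is contained in every complete extension. The key observation is that a member of $\stadm(\BF)\cap\com(\BF)$ is \emph{simultaneously} strongly admissible and complete, so the containment relation can be applied with such a set playing either role. This lets the single inclusion from the previous proposition act in both directions and pin down one distinguished set.

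For the first claim, I would take two candidates $E_1,E_2\in\stadm(\BF)\cap\com(\BF)$ and apply Proposition~\ref{prop_stadm_properties}(2) twice. Viewing $E_1$ as strongly admissible and $E_2$ as complete yields $E_1\subseteq E_2$; symmetrically, viewing $E_2$ as strongly admissible and $E_1$ as complete yields $E_2\subseteq E_1$. By antisymmetry of $\subseteq$ we get $E_1=E_2$, so the intersection contains at most one element.

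For the second claim, fix $E\in\stadm(\BF)\cap\com(\BF)$. To show $E\in\grd(\BF)$, I would take an arbitrary $E'\in\com(\BF)$ and apply Proposition~\ref{prop_stadm_properties}(2) with $E$ strongly admissible and $E'$ complete, obtaining $E\subseteq E'$. Hence $E$ is the $\subseteq$-least element of $\com(\BF)$, in particular $\subseteq$-minimal, so $E$ is grounded; and any grounded extension $G$ is complete, so $E\subseteq G$, which together with the $\subseteq$-minimality of $G$ forces $G=E$, establishing uniqueness of the grounded extension. To show $E$ is the unique $\subseteq$-maximal strongly admissible extension, I would take an arbitrary $E'\in\stadm(\BF)$ and apply Proposition~\ref{prop_stadm_properties}(2) in the other orientation, with $E'$ strongly admissible and $E$ complete, to get $E'\subseteq E$. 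Since $E$ is itself strongly admissible, it is the $\subseteq$-greatest strongly admissible set and therefore its unique $\subseteq$-maximal element.

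The argument is essentially a corollary of the complete containment property, so the substantive work was already carried out in Proposition~\ref{prop_stadm_properties}; there is no real obstacle here. The only care needed is to invoke containment in both orientations---into complete extensions (yielding minimality and grounding) and from strongly admissible sets (yielding maximality)---and to recall that $\grd$ is defined as the $\subseteq$-minimal complete extension, so that a least complete extension is automatically grounded and unique.
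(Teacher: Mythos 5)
Your proof is correct and takes essentially the same approach as the paper: both claims are obtained by applying the complete-containment property of Proposition~\ref{prop_stadm_properties}(2) in both orientations (strongly admissible set inside complete extension, and vice versa for sets lying in the intersection). If anything, your write-up is slightly more thorough than the paper's own proof, which for claim~2 only argues that $E$ coincides with any grounded extension and leaves the argument for $E$ being the unique $\subseteq$-maximal strongly admissible extension implicit.
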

In contrast, unique relative maximum $\pURM$ and unique maximum $\pUM$ are not satisfied, 
as illustrated by the following example.
This example also shows that the complete strongly admissible extension does not always exist. It also demonstrates that the union of two strongly admissible sets may fail to be strongly admissible, thereby violating another property discussed in~\cite{DBLP:conf/comma/Caminada14}.

\begin{example}\label{ex:counter ex für unique max stradm}
    Consider the following BSAF $\BF$
    \begin{center}
		\begin{tikzpicture}[scale=0.8,>=stealth]
    		\path
            (\distance-1.1,.85)node (D){}
            (0,0)node[arg] (c){$c$}
    		(-\distance*0.866,.5*\distance)node[arg] (b){$b$}
    		(-\distance*0.866,-.5*\distance)node[arg] (a){$a$}
            (\distance,0)node[arg] (d){$d$}
    		(\distance*1.866,.5*\distance)node[arg] (e){$e$}
    		(\distance*1.866,-.5*\distance)node[arg] (f){$f$}
            ;
            
            \path[thick,->,cyan,dashed]
            (a)edge[out=55,in=180](c)
            (b)edge[out=-55,in=180](c)
            ;
            
            \path[thick,->]
            (d)edge(c)
            (e)edge[in=120,out=-120](f)
            (f)edge[in=-60,out=60](e)
            (e)edge(d)
            (f)edge(d)
            ;
		\end{tikzpicture}
    \end{center}
    In $\BF$, the sets $\emptyset,\{a\},\{b\},\{a,b,c,e\},\{a,b,c,f\}$ are admissible; 
    the latter two are complete, grounded, and preferred. 
    The strongly admissible sets are $\sad(\BF)=\{\emptyset,\{a\},\{b\}\}$.
    \begin{itemize}[align=left]
        \item $\pURM$ Both $\{a,b,c,e\}$ and $\{a,b,c,f\}$ have two \mbox{$\subseteq$-maximal} strongly admissible subsets: $\{a\}$ and $\{b\}$;
        \item $\pUM$ Both $\{a\}$ and $\{b\}$ are $\subseteq$-maximal in $\sad(\BF)$;
        \item Their union $\{a,b\}$ is not strongly admissible.
    \end{itemize}
\end{example}

As the example shows, strongly admissible extensions take over an important function of the grounded semantics, \ie\ they identify arguments with a stronger justification. In the abstract setting, the unique grounded extension contains only arguments, whose defense can be traced back to the empty set, and which therefore do not rely on cycles for their defense. In non-flat ABA, this is not true for members of grounded extensions in general, but still holds for arguments accepted under strongly admissible semantics. In general there can be more than one subset-maximal strongly admissible extension, which warrants the definition of \emph{strongly preferred} and \emph{strongly complete} semantics as a BSAF-semantics in its own right. 

\begin{definition}\label{def:strong BSAF semantics}
	Let $\BF$ be a BSAF and let $E\in\sadm(\BF)$. 
	\begin{itemize}
		\item 
		$E\in\scomp(\BF)$ iff $E$ contains every assumption it defends; 
		\item 
		$E\in\spref(\BF)$ iff $E$ is $\subseteq$-maximal in $\sadm(\BF)$;
	\end{itemize}
\end{definition}
We omit strongly grounded semantics since it coincides with strongly complete semantics (as observed below Proposition~\ref{prop_stadm_properties}, $\scomp(\BF)$ has at most one member).
We write $\Sigma^s=\{\sadm,\scomp,\spref\}$ to denote the family of strongly admissible semantics. 
Note that the framework $\BF$ above has two complete 
extensions $\{a,b,c,e\}$, and $\{a,b,c,f\}$. The strong variant does not have an extension in $\BF$, \ie\, $\scom(\BF)=\emptyset$.
Example~\ref{ex:counter ex für unique max stradm} shows that the fundamental lemma $\pF$ is not satisfied, and that in general $\sprf(\BF)\subseteq\scom(\BF)$ does not hold.
\smallskip

\begin{example*}{ex:counter ex für unique max stradm}
\label{ex:counter-ex to FL for sadm}
 The strongly admissible set $\{a\}$ defends the unattacked argument $b$. Since $\{a,b\}$ is not closed it is not strongly admissible. Thus the fundamental lemma $\pF$ is violated and we have no grounded extension, instead $\{a\},\{b\}$ are our two strongly preferred extensions.
\end{example*}

\subsection{Concequences for (flat and non-flat)  ABA}

In the last section we proved the (un)satisfiability of several desiderata that pose requirements of the ABA semantics directly in BSAF. 
We utilize these findings to define and investigate semantics based on strong admissibility for ABAFs.

\begin{definition}
    Given an ABAF $\D=\tuple{\lit,\rules,\asms,\contraryempty}$ and a semantics $\sigma\in\Sigma^s$, then $E\in\sigma(\D)$ iff $E\in\sigma(\BF_\D)$.
\end{definition}
Due to the close correspondence of ABAFs and BSAFs, 
the results of the previous section directly transfer to ABA.
Strongly admissible semantics for ABA satisfy modularization $\pM$ since the corresponding BSAF semantics does, as shown in Proposition~\ref{prop_stadm_modularization}; moreover, 
strengthening  $\pSTR$ and complete containment $\pCC$ is satisfied by Proposition~\ref{prop_stadm_properties}. 
For the remaining cases, the counter-examples carry over. 

\begin{restatable}{theorem}{propPropertiesStrongAdmABA}
    \label{prop:properties_strong_adm_aba}
    The strongly admissible semantics for ABA satisfies $\pM$, $\pSTR$, and $\pCC$,  but does not satisfy $\pF$, $\pSR$, $\pNE$, $\pMC$, $\pUM$, nor $\pURM$.
\end{restatable}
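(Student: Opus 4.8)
The plan is to establish Theorem~\ref{prop:properties_strong_adm_aba} by transferring all claims back and forth between ABAFs and their associated BSAFs, exploiting the semantic correspondence already set up in the paper. By Theorem~\ref{prop:BSAF2ABAFCorrect} the classical semantics agree on $\D$ and $\BF_\D$, and by the definitions of $\Sigma^s$-semantics for ABA we have $\sigma(\D)=\sigma(\BF_\D)$ for every $\sigma\in\Sigma^s$. Thus each property that is phrased purely in terms of the sets returned by the semantics can be checked on the BSAF side, where the earlier propositions already do the work. Concretely, I would argue that $\pM$ follows directly from Proposition~\ref{prop_stadm_modularization}, and that $\pSTR$ and $\pCC$ follow from Proposition~\ref{prop_stadm_properties}, since each of these properties is a statement about inclusions or unions of extensions that is invariant under the identification $\sigma(\D)=\sigma(\BF_\D)$ (together with $\adm(\D)=\adm(\BF_\D)$ and $\com(\D)=\com(\BF_\D)$ from Theorem~\ref{prop:BSAF2ABAFCorrect}).

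For the negative part, the strategy is to reuse Example~\ref{ex:counter ex für unique max stradm} as a single counterexample witnessing the failure of $\pF$, $\pUM$, and $\pURM$ simultaneously, and then to show that this BSAF is in fact realizable as $\BF_\D$ for a concrete ABAF $\D$. The cleanest route is to exhibit an ABAF whose induced BSAF (via Definition~\ref{def:non-flat collective inst}) coincides with the framework of Example~\ref{ex:counter ex für unique max stradm}; one reads off the needed rules from the attack and support tuples, e.g.\ a support $(\{a,b\},c)$ becomes a rule deriving the assumption $c$ from $a,b$, and each attack $(T,h)$ becomes a rule deriving $\contrary{h}$ from $T$. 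Once such a $\D$ is fixed, the computations already recorded in Example~\ref{ex:counter ex für unique max stradm} transfer verbatim: $\sad(\D)=\{\emptyset,\{a\},\{b\}\}$ has two $\subseteq$-maximal elements (refuting $\pUM$), the complete extensions $\{a,b,c,e\}$ and $\{a,b,c,f\}$ each admit two distinct maximal strongly admissible subsets $\{a\}$ and $\{b\}$ (refuting $\pURM$), and $\{a\}$ strongly defends $b$ while $\{a,b\}$ is not closed, hence not strongly admissible (refuting $\pF$).

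It remains to handle $\pSR$, $\pNE$, and $\pMC$. For $\pNE$ I would note that $\scom(\D)=\scom(\BF_\D)=\emptyset$ in the counterexample, as already observed in the main text below Definition~\ref{def:strong BSAF semantics}; so the non-emptiness requirement on strongly complete extensions fails. For $\pSR$ the same example gives $\sprf(\D)=\{\{a\},\{b\}\}$ while $\scom(\D)=\emptyset$, so the inclusion $\sprf\subseteq\scom$ demanded by $\pSR$ is violated (this is the exact point flagged in the continued Example~\ref{ex:counter-ex to FL for sadm}). Finally $\pMC$ can be refuted directly in the same framework, since there is no strongly complete extension at all yet there are strongly preferred ones, so preferred cannot equal maximal-complete; alternatively one invokes Proposition~\ref{prop:MCiffSR} contrapositively once $\pSR$ has failed, but a direct count in the example is simpler and self-contained.

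The main obstacle I anticipate is the realizability step: one must verify that the chosen ABAF's induced BSAF really equals the abstract framework of Example~\ref{ex:counter ex für unique max stradm}, i.e.\ that the rules generate exactly the listed attacks and supports via tree-derivations and introduce no spurious ones (in particular no unintended derivations of further contraries or assumptions that would add attacks or supports). Care is needed because tree-derivability ranges over all subsets, so I would keep the rule set minimal and check that each assumption's closure and attack set are precisely as intended; the subtlety around the self-supporting structure $(\{a,b\},c)$ combined with the attack $(\{d\},c)$ must be reproduced faithfully. Everything else is a mechanical transfer through $\sigma(\D)=\sigma(\BF_\D)$, so once realizability is confirmed the theorem follows from the already-established BSAF-level results.
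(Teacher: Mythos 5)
Your proposal is correct and follows essentially the same route as the paper: the positive properties $\pM$, $\pSTR$, $\pCC$ are transferred from Propositions~\ref{prop_stadm_modularization} and~\ref{prop_stadm_properties} via $\sigma(\D)=\sigma(\BF_\D)$, and the negative ones are refuted by carrying the BSAF counterexample of Example~\ref{ex:counter ex für unique max stradm} over to ABA (the paper uses a separate one-argument framework for $\pNE$, but your reuse of the same example via $\scom(\D)=\emptyset$ works equally well). In fact you are more explicit than the paper on the one point it glosses over, namely verifying that the counterexample BSAF is realizable as $\BF_\D$ for a concrete ABAF; your construction is sound, modulo the harmless trivial self-supports $(\{x\},x)$ that every assumption induces via its one-node tree-derivation, which do not affect any of the semantics.
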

While central properties are satisfied, 
our results show that, as anticipated, the family of strongly admissible semantics admits in some aspects undesired behavior.

For the flat ABA fragment, these issues do not occur. 
Flat ABA, directly corresponds to SETAF, meaning that the instantiation $\BF_\D=\tuple{A,R,S}$ of a flat ABAF $\D$ (via Def.~\ref{def:non-flat collective inst}) has no support relations, \ie\, $S=\emptyset$.
It holds that $\pM$, $\pSTR$ and $\pCC$ are satisfied since any SETAF is a BSAF. Further, if we take a look at the counter-examples used to show non-satisfaction, we notice all of them use at least one support. It turns out that all $\pF$, $\pSR$, $\pNE$, $\pMC$, $\pUM$, and $\pURM$ are indeed satisfied for flat ABA. 

\begin{restatable}{theorem}{propPropertiesStrongAdmSETAF}
    \label{prop:properties_strong_adm_aba_setaf}
    The strongly admissible semantics for flat ABA and SETAFs satisfy all of $\pM$, $\pSTR$, $\pCC$, $\pF$, $\pSR$, $\pNE$, $\pMC$, $\pUM$, and $\pURM$.
\end{restatable}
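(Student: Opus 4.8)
The plan is to collapse flat ABA to SETAFs and then exploit the absence of supports. First I would record the structural simplification: for a flat ABAF the instantiation $\BF_\D=\tuple{A,R,S}$ has $S=\emptyset$ (Definition~\ref{def:non-flat collective inst}), so since $\sigma(\D):=\sigma(\BF_\D)$ it suffices to prove every property for an arbitrary SETAF $\BF=\tuple{A,R,\emptyset}$. In such a $\BF$ the support-closure is trivial, $\cl(E)=E$ for all $E$, so \emph{every} set is closed; consequently $\sadm(\BF)=\sd(\BF)$, and the BSAF reduct (Definition~\ref{def_bsaf_reduct}) collapses to the SETAF reduct (Definition~\ref{def_setaf_reduct}). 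This single observation is what rescues all the properties that fail in the general case, since in each counterexample of Section~\ref{sec:strong} the obstruction was precisely a defended set that was not closed.

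The properties $\pM$, $\pSTR$ and $\pCC$ require no new work: they hold for all BSAFs by Propositions~\ref{prop_stadm_modularization} and~\ref{prop_stadm_properties}, and a SETAF is a BSAF. For the remaining properties the key object is the grounded extension. I would invoke the standard SETAF facts that the characteristic function $\Gamma$ is monotone and that its least fixpoint $G=\bigcup_{i\geq 0}\Gamma^i(\emptyset)$ is the conflict-free, $\subseteq$-minimal complete (grounded) extension. Applying the sequence characterization of Proposition~\ref{prop:stadmDefinitions} to the chain $E_i=\Gamma^i(\emptyset)\setminus\Gamma^{i-1}(\emptyset)$, where each $E_i$ is defended by $\bigcup_{j<i}E_j=\Gamma^{i-1}(\emptyset)$ and $G$ is conflict-free and (automatically) closed, shows $G\in\sadm(\BF)$. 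Then $\pCC$ with $G\in\com(\BF)$ gives $E\subseteq G$ for every $E\in\sadm(\BF)$, so $G$ is the unique $\subseteq$-maximal strongly admissible set, i.e.\ $\pUM$. For $\pNE$ note $\emptyset\in\sadm(\BF)$, $G\in\sadm(\BF)\cap\com(\BF)=\scom(\BF)$ and $G\in\spref(\BF)$. For $\pSR$ and $\pMC$ observe that $\scom(\BF)=\sadm(\BF)\cap\com(\BF)$ has at most one element by Proposition~\ref{prop_stadm_properties_two}, hence equals $\{G\}$; together with $\spref(\BF)=\{G\}$ from $\pUM$ this yields $\spref(\BF)\subseteq\scom(\BF)\subseteq\sadm(\BF)$ (with strongly grounded coinciding with strongly complete), and $\pMC$ follows as in Proposition~\ref{prop:MCiffSR}.

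For the fundamental lemma $\pF$, read for the strong family as ``$S\in\sadm(\BF)$ defending $a$ implies $S\cup\{a\}\in\sadm(\BF)$'', I would reduce to conflict-freeness: $S\cup\{a\}$ is closed automatically, and it is strongly defended because every $s\in S$ keeps its strongly-defended witness $\subseteq S\setminus\{s\}$, while $a$ is defended by $S=(S\cup\{a\})\setminus\{a\}\in\sd(\BF)$. Conflict-freeness of $S\cup\{a\}$ is exactly the admissible fundamental lemma for SETAFs, which I would either cite or establish by a short case analysis on a hypothetical self-attack $(T,h)$ of $S\cup\{a\}$, splitting on whether $a\in T$ and whether $a=h$, and using that $S$ defends both $a$ and each of its own members against the closed attacker-set $T$. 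Finally, $\pURM$ follows by a relativized grounded construction: for a fixed $E_0\in\adm(\BF)$, the monotone operator $\Gamma_{E_0}(E):=\Gamma(E)\cap E_0$ has a least fixpoint $G_{E_0}\subseteq E_0$ which, by Proposition~\ref{prop:stadmDefinitions} (conflict-free since $E_0$ is), is strongly admissible; an induction along the defining sequence of any $E\in\sadm(\BF)$ with $E\subseteq E_0$, using the fixpoint equation $\Gamma_{E_0}(G_{E_0})=G_{E_0}$ and monotonicity, shows $E\subseteq G_{E_0}$, so $G_{E_0}$ is the unique $\subseteq$-maximal strongly admissible subset of $E_0$.

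The main obstacle I anticipate is not the high-level argument but the collective-attack bookkeeping that AFs hide. Verifying the fundamental lemma and the conflict-freeness of the iteratively built (relative) grounded extensions genuinely relies on the SETAF notion of defense against whole closed attacker-sets, so the case distinctions involving $a\in T$ for a collective attack $(T,h)$ must be handled explicitly rather than inherited from the AF proofs of Caminada~(\citeyear{DBLP:conf/comma/Caminada14}) and~\cite{baumann2016stadmchara}.
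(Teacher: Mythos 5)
Your proposal is correct, and for roughly half of the properties it follows the paper's own route: $\pM$, $\pSTR$, $\pCC$ are inherited from the general BSAF results (Propositions~\ref{prop_stadm_modularization} and~\ref{prop_stadm_properties}), and your proof of $\pF$ --- appending $a$ as a final block to the sequence from Proposition~\ref{prop:stadmDefinitions} --- is exactly the paper's argument, except that you are more careful than the paper, which dismisses conflict-freeness of $E\cup\{a\}$ with ``surely''; your case analysis on collective attacks $(T,h)$ with $a\in T$ or $h=a$ is precisely what that ``surely'' hides. Where you genuinely diverge is in the uniqueness and existence properties. The paper derives $\pSR$ from $\pF$ by a maximality contradiction, and settles $\pUM$ and $\pURM$ by asserting (without proof) that $\sad(\SF)$ is closed under union, so that two distinct maximal elements are immediately contradictory. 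You instead route everything through the grounded extension: you show $G=\bigcup_{i\geq 0}\Gamma^i(\emptyset)$ is strongly admissible via the layered sequence $\Gamma^i(\emptyset)\setminus\Gamma^{i-1}(\emptyset)$ (modulo prepending $E_1=\emptyset$ to fix the indexing), get $\pUM$ from $\pCC$ applied to $G\in\com(\SF)$, read off $\pNE$, $\pSR$, $\pMC$ from $\scom(\SF)=\spref(\SF)=\{G\}$, and prove $\pURM$ by a relativized fixpoint $\Gamma_{E_0}(E)=\Gamma(E)\cap E_0$ together with an induction along the defining sequence of any strongly admissible $E\subseteq E_0$. Your approach is longer but more self-contained and constructive: it proves the existence claims the paper only gestures at (``$\Gamma$ is monotonous and $\SF$ finite'') and avoids relying on the unproven union-closure of $\sad(\SF)$ --- which, incidentally, is most easily justified by your own observation that all strongly admissible sets sit inside the conflict-free set $G$. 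The paper's proof buys brevity at the cost of two asserted facts; yours discharges both.
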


\section{Weak Admissibility}
\label{sec:weak}

In contrast to strong admissibility our goal is now to accept as much as we reasonably can.
In this section, we generalize the definition of weak admissibility by~\citeauthor{BlumelKU24}~\shortcite{BlumelKU24} to non-flat ABA.
\subsection{Weak Admissibility for BSAFs}

The $E$-reduct for BSAFs gives us the tools to generalize weak admissibility.
Note that the definition is recursive, but well-defined as in each recursion step the reduct contains fewer arguments and we deal only with finite BSAFs.
\begin{definition}\label{def_setaf_wadm_two}
		Let $ \BF=(A,R,S)$ be a BSAF, $E\subseteq A$ a set of arguments, and $ \BF^E=(A^E,R^E,S^E)$ its $E$-reduct. 
		Then $E$ is called \emph{weakly admissible} in $ \BF$ ($E\in \wadm( \BF)$) iff
	\begin{enumerate}
		\item $E\in \cf( \BF)$, $E$ closed and
		\item for each $(T,h)\in R$ with $h\in E$, and
		$T\cap E^+_R=\emptyset$ 
		it holds $\nexists E' \in \wadm( \BF^E)$ s.t.\ $T\cap A^E\subseteq E'$.
	\end{enumerate}
\end{definition}
As it is the case for SETAFs, the tail of a joint attack has to be part of a weakly admissible set (of the reduct) as a whole for it to be considered an attack one has to defend against. In BSAFs, we additionally have to take closedness into consideration. On the one hand, we limit the set of weakly admissible extensions to closed sets. 
On the other hand, this allows us to ignore even more involved types of unreasonable attacks, \eg\, indirectly conflicting sets of attackers.  
\begin{example}
\label{ex:self-attacking-sets} Compare the SETAF $F_0$ and BSAF $F$:
\vspace{-10pt}
\begin{center}
    	\begin{tikzpicture}[scale=0.8,>=stealth]
    		\path
    		(\distance*-1.5,0) node {$F:$}
            (\distance-1.1,.85)node (D){}
            (0,0)node[arg] (b){$b$}
    		(0,-\distance*0.8)node[arg] (c){$c$}
    		(\distance,-\distance*0.8)node[arg] (e){$e$}
    		(-\distance,-\distance*0.4)node[arg] (a){$a$}
      		(\distance,0)node[arg] (d){$d$}
            ;
            
            \path[thick,->]
            (e) edge[bend left] (d)
            (d) edge[bend left] (e) ;
            \path[thick,->,dashed,blue]
            (b) edge (d)
            ;
            \path[thick,->,dashed,cyan]
            (c) edge (e)
            ;
            \path[thick,->,orange]
            (b)edge[out=190,in=0](a)
            (c)edge[out=170,in=0](a)
            ;
            
         \begin{scope}[xshift=-4cm]
             		\path
    		(-2.5,0) node {$F_0:$}
            (\distance-1.1,.85)node (D){}
            (0,0)node[arg] (b){$b$}
            (0,-\distance*0.8)node[arg] (c){$c$}
    		(-\distance,-\distance*0.4)node[arg] (a){$a$}
            ;
            
            \path[thick,->]
                        (b) edge[bend left] (c)
                        (c) edge[bend left] (b) ;
            
                        \path[thick,->,orange]
                        (b)edge[out=190,in=0](a)
                        (c)edge[out=170,in=0](a)
                        ;
         \end{scope}	
        \end{tikzpicture}
    \end{center} 
    In both frameworks $\{a\}$ is attacked by $\{b,c\}$, and weakly admissible. 
    In $F_0$ the joint attack does not fire, because $\{b,c\}$ is not conflict-free. 
    
    At first sight, nothing seems to be wrong with $\{b,c\}$ in $F$, the set is not only conflict-free, it is even unattacked. However, $b$ and $c$ support conflicting arguments, so even though the closed subsets $\{b,d\}$ and $\{c,e\}$ are weakly admissible in the reduct $F^{\{a\}}$, there is no closed and conflict-free set containing both $b$ and $c$, so the joint attack on $a$ can be ignored.   
\end{example}
Closedness neutralizes attacks from indirectly self-conflicting sets under weak admissibility.
Next, we show that closedness of attackers already holds by definition.

\begin{restatable}{proposition}{propRestrictionToClosedAttackers}
    Let $ \BF=(A,R,S)$ be a BSAF, let $E\subseteq A$ be a set of arguments, and $ \BF^E=(A^E,R^E,S^E)$ its $E$-reduct. 
	Then $E\in\wadm( \BF)$  iff $E$ is conflict-free, closed, and for every closed set $E'$ which attacks $E$ it holds that either $E$ attacks $E'$ or $(E'\setminus E)\notin\wadm( \BF^E)$.
\end{restatable}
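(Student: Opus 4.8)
The plan is to observe that both the definition of weak admissibility (Definition~\ref{def_setaf_wadm_two}) and the claimed characterization impose the same first condition, namely that $E$ be conflict-free and closed; it therefore suffices to prove that, under these two standing assumptions, condition~2 of Definition~\ref{def_setaf_wadm_two} is equivalent to the stated condition on closed attackers. First I would record the structural facts used throughout. Because $E$ is closed we have $\cl(E)=E$, so by Definition~\ref{def_bsaf_reduct} the argument set of the reduct is $A^E=A\setminus(E\cup E^+_R)$; because $E$ is conflict-free we have $E\cap E^+_R=\emptyset$, and hence $A$ decomposes as the disjoint union $A^E\cup E\cup E^+_R$. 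I would also note that $E^+_R$ is exactly the set of arguments attacked by $E$, so for any $X\subseteq A$ the statement ``$E$ attacks $X$'' is equivalent to $X\cap E^+_R\neq\emptyset$; in particular every $E''\in\wadm(\BF^E)$ satisfies $E''\subseteq A^E$ and is therefore not attacked by $E$.

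For the direction from Definition~\ref{def_setaf_wadm_two} to the characterization, I would fix a closed set $E'$ that attacks $E$ and assume $E$ does not attack $E'$, aiming to show $E'\setminus E\notin\wadm(\BF^E)$. Since $E'$ attacks $E$ there is $(T,h)\in R$ with $T\subseteq E'$ and $h\in E$. As $E$ does not attack $E'\supseteq T$, we obtain $T\cap E^+_R=\emptyset$ and $E'\cap E^+_R=\emptyset$, the latter yielding $E'\cap A^E=E'\setminus E$. If $E'\setminus E$ were weakly admissible in $\BF^E$, then from $T\cap A^E\subseteq E'\cap A^E=E'\setminus E$ it would witness precisely the existence clause that condition~2 forbids for the attack $(T,h)$, contradicting $E\in\wadm(\BF)$. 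Hence $E'\setminus E\notin\wadm(\BF^E)$, which is what the characterization demands.

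For the converse I would argue contrapositively, assuming condition~2 fails: there is an attack $(T,h)\in R$ with $h\in E$, $T\cap E^+_R=\emptyset$, and some $E''\in\wadm(\BF^E)$ with $T\cap A^E\subseteq E''$. I would then exhibit the single closed attacker refuting the characterization, namely $E':=E\cup E''$. Using the decomposition $A=A^E\cup E\cup E^+_R$ together with $T\cap E^+_R=\emptyset$ gives $T\subseteq A^E\cup E$, so $T=(T\cap A^E)\cup(T\cap E)\subseteq E''\cup E=E'$, whence $E'$ attacks $E$ through $(T,h)$. Since $E''\subseteq A^E$ is disjoint from $E$, we have $E'\setminus E=E''\in\wadm(\BF^E)$, and since $E''\cap E^+_R=\emptyset$ and $E$ is conflict-free, $E$ does not attack $E'$. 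Thus $E'$ violates the ``either/or'' of the characterization, contradicting it.

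The step I expect to be the main obstacle is verifying that $E'=E\cup E''$ is genuinely closed in $\BF$, as the characterization quantifies only over closed attackers. This is exactly where I would invoke Proposition~\ref{prop:reduct-preserves-closed}: since $E''\in\wadm(\BF^E)$ is conflict-free and closed in $\BF^E$ and $E$ is conflict-free and closed in $\BF$, their union $E\cup E''$ is closed in $\BF$. The remaining care lies in the bookkeeping of the identities $A^E=A\setminus(E\cup E^+_R)$ and $E'\cap A^E=E'\setminus E$, both resting on the standing closedness and conflict-freeness of $E$; once these are in place the two directions close symmetrically.
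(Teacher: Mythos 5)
Your proof is correct and takes essentially the same route as the paper's: both directions turn on the same witnesses, namely the attack $(T,h)$ with $T\cap A^E\subseteq E'\setminus E$ in one direction and the constructed closed attacker $E'=E\cup E''$ in the other. The only cosmetic difference is that you justify closedness of $E\cup E''$ by explicitly invoking Proposition~\ref{prop:reduct-preserves-closed}, whereas the paper argues this inline from Definition~\ref{def_bsaf_reduct}; your version is, if anything, slightly cleaner on that point.
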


The notion of weak defense, as well as, weakly complete, weakly grounded and weakly preferred semantics generalize to BSAFs in the natural way.

\begin{definition}
	\label{def:weak defense}
	Let $ \BF = (A,R,S)$ be a BSAF and let $E,X\subseteq A$. 
	We say $E$ \emph{weakly defends} $X$ (abbr.\ $E$ w-defends $X$) if for each $(T,h)\in R$  with $h\in X$, and for every closed $E'$ with $T\subseteq E'$ one of the following two conditions hold: 
	\begin{itemize}
		\item $E$ attacks $E'$, or
		\item the following two conditions hold simultaneously: 
		\begin{enumerate}
			\item there is no $E^*\in\adm^w( \BF^E)$ with $E'\subseteq E \cup E^*$, 
			\item there is some $X'$ s.t.\ $X\subseteq X'\in\adm^w( \BF)$. 
		\end{enumerate}
	\end{itemize}
\end{definition}

\begin{definition}
	\label{def:wcom semantics}
	Let $\BF$ be a BSAF and let $E\in\wadm(\BF)$:
	\begin{itemize}
		\item $E$ is \emph{weakly complete}, $E\in\wcom(\BF)$, iff for each $X\supseteq E$ s.t.\ $E$ w-defends $X$, we have $E = X$,  
		\item $E$ is \emph{weakly preferred}, $E\in\wpref(\BF)$, iff $E$ is maximal wrt.\ $\subseteq$ in $\wadm(\BF)$, 
		\item $E$ is \emph{weakly grounded}, $E\in\wgrd(\BF)$, iff $E$ is minimal wrt.\ $\subseteq$ in $\wcom(\BF)$. 
	\end{itemize}
\end{definition}
We denote by $\Sigma^w=\{\wadm,\wcomp,\wgrd,\wpref\}$ the family of weakly admissible semantics. 

Turning now to modularization~$\pM$, we find that an even stronger result can be proved. 
As for AFs~\cite{BBU2020Modularization}, also the other direction holds.

\begin{restatable}{proposition}{PropModularizationWeakAdm}
    \label{prop:modularization wadm}
    Let $\sigma\in\Sigma^w$, and $\BF=\tuple{A,R,S}$, then for 
    each $E\subseteq A,E'\subseteq A^E$ we have
        \begin{itemize}
        \item $E\in\sigma( \BF)$, $E'\in\sigma( \BF^E)\ \Rightarrow\ E\cup{E'}\in\sigma( \BF)$
        \item $E, E\cup E'\in\sigma( \BF)\ \Rightarrow\ E'\in\sigma( \BF^E)$
    \end{itemize}
\end{restatable}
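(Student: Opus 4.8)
The plan is to prove the $\wadm$ case first and bootstrap the three derived semantics from it. The engine of the whole proof is the reduct composition identity $\BF^{E\cup E'}=(\BF^{E})^{E'}$ (Proposition~\ref{prop_bsaf_reduct_union}), valid here because all sets in play are closed and conflict-free, together with the closedness transfer of Proposition~\ref{prop:reduct-preserves-closed}. Since every reduct step strictly removes at least one argument, I would set up an induction on $\card{A}$ and prove both bullet points for $\sigma=\wadm$ simultaneously, so that the inductive hypothesis is available on the strictly smaller framework $\BF^{E}$ and on its further reducts.

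For the combining direction of $\wadm$ ($E\in\wadm(\BF)$ and $E'\in\wadm(\BF^{E})$ imply $E\cup E'\in\wadm(\BF)$), closedness of $E\cup E'$ is the first bullet of Proposition~\ref{prop:reduct-preserves-closed}. Conflict-freeness is more delicate and must be threaded through condition~(2) itself: an internal attack of $E\cup E'$ with head in $E$ would, after projecting away $\cl(E)$, exhibit $E'$ as a weakly admissible set of $\BF^{E}$ covering its tail, contradicting condition~(2) for $E$; an internal attack with head in $E'$ projects to a conflict of $E'$ in $\BF^{E}$. For condition~(2) of $E\cup E'$, take $(T,h)\in R$ with $h\in E\cup E'$ and $T\cap(E\cup E')^{+}_R=\emptyset$, and suppose some $E''\in\wadm(\BF^{E\cup E'})$ covers $T\cap A^{E\cup E'}$. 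Rewriting $\BF^{E\cup E'}=(\BF^{E})^{E'}$ and applying the combining direction of the inductive hypothesis to $E'$ and $E''$ over $\BF^{E}$ gives $E'\cup E''\in\wadm(\BF^{E})$; using $T\cap(E\cup E')^{+}_R=\emptyset$ to control which tail elements are projected away, this set covers $T\cap A^{E}$, contradicting condition~(2) for $E$ (if $h\in E$) or, after projecting the attack into $R^{E}$, condition~(2) for $E'$ (if $h\in E'$). The decomposition direction is the mirror image, reading off closedness and conflict-freeness of $E'$ in $\BF^{E}$ from Proposition~\ref{prop:reduct-preserves-closed} and from $E\cup E'\in\cf(\BF)$, and translating the defense condition of $E\cup E'$ back through the same identity.

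The derived semantics follow by exploiting the two $\wadm$ bullets as a correspondence between weakly admissible sets of $\BF$ extending $E$ and weakly admissible sets of $\BF^{E}$. For $\wpref$ this correspondence is $\subseteq$-order preserving, so maximality of $E$ in $\wadm(\BF)$ combines with maximality of $E'$ in $\wadm(\BF^{E})$ to give maximality of $E\cup E'$, and conversely. For $\wcom$ I would unfold Definition~\ref{def:wcom semantics}: the task is to show that $E\cup E'$ w-defends no proper superset in $\BF$ iff $E$ w-defends no proper superset in $\BF$ and $E'$ w-defends no proper superset in $\BF^{E}$. Since the two clauses of weak defense (Definition~\ref{def:weak defense}) quantify exactly over $\wadm(\BF^{E})$ and $\wadm(\BF)$, each clause can be transported across $\BF^{E\cup E'}=(\BF^{E})^{E'}$ using the already-established $\wadm$ modularization. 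Finally $\wgrd$, being $\subseteq$-minimal in $\wcom$, follows from the $\wcom$ case together with order preservation of the correspondence.

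I expect the weakly complete case to be the main obstacle. Whereas $\wadm$ and $\wpref$ are controlled directly by the recursive membership condition, $\wcom$ is phrased through weak defense, whose two nested clauses quantify over $\wadm(\BF^{E})$ and $\wadm(\BF)$ and additionally insist on closed witnessing sets; aligning these quantifiers under the substitution $\BF^{E\cup E'}=(\BF^{E})^{E'}$ while simultaneously tracking closedness is where the bookkeeping concentrates. A secondary delicate point, already visible above, is the conflict-freeness transfer for $E\cup E'$ in the bipolar setting, where the reduct's constraint attacks (added for supports pointing into $\cl(E)^{+}_R$) must be shown to neither create nor mask conflicts, so that the clean SETAF argument does not suffice.
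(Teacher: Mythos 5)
Your proposal is correct and follows essentially the same route as the paper's proof: induction on $\card{A}$ for $\wadm$ driven by the composition identity $\BF^{E\cup E'}=(\BF^{E})^{E'}$ (Proposition~\ref{prop_bsaf_reduct_union}) and the closedness transfer of Proposition~\ref{prop:reduct-preserves-closed}, with conflict-freeness threaded through condition~(2), and the semantics $\wpref$, $\wcom$, $\wgrd$ bootstrapped from the two $\wadm$ bullets by transporting (weak) defense across the reduct; you also correctly single out $\wcom$ as the place where the bookkeeping concentrates, exactly as in the paper. The only point your sketch glosses over is $\wgrd$, where ``order preservation'' alone does not suffice: one must show $E'=\emptyset$ (via $\emptyset\in\wadm(\BF^{E})$ and transporting any set weakly defended by $\emptyset$ in the reduct back to a set weakly defended by $E$ in $\BF$), which is a refinement within your framework rather than a different idea.
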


\subsection{Consequences for ABA}
We are ready to introduce weak admissibility for ABA. As in the case of strong admissibility, we define the semantics with respect to the instantiated BSAF. 

\begin{definition}
\label{def:WADMforABA}
	Let $\D$ be an ABAF and $ \BF_\D$ be the instantiated BSAF. 
	For any $\sigma^w\in \Sigma^w$ 
	we let 
	$\sigma^w(\D) = \sigma^w(\BF_\D)$. 
\end{definition}

Note that the semantics faithfully generalize the weakly admissible semantics of the flat ABA fragment~\cite{BlumelKU24}.

We examine our novel semantics with respect to the desiderata stated in Section~\ref{sec:desiderata}. Observe first, that weak admissibility satisfies liberalization $\pL$, \ie\, classic admissibility is a special case of weak admissibility. Modularization $\pM$ is also preserved, which indicates our generalization is well-behaved for the most part. The indifference towards the presence of self-attacking assumptions $\pP$, however, a key feature of weak admissibility in AFs and SETAFs, is no longer given in BSAFs. Due to the additional requirement of closedness and the sensitivity towards indirect conflicts accompanying it, self-attackers influence the set of weakly admissible extensions indirectly via their incoming supports.

\begin{example}
Consider $\BF=(\{a\},\{(\{a\},a)\},\{(\emptyset,a)\})$. This BSAF contains a single self-attacking assumption $a$ which is supported by the empty set, rendering the empty set not acceptable under weak admissibility since it is not closed. Now if $a$ is removed, $\emptyset$ immediately becomes a weakly admissible set.
\end{example}
 It turns out that weak admissibility for BSAFs, and consequently for ABA, takes a more differentiated look at self-attackers than for SETAFs/flat ABA. On the one hand, self-attackers compromise the assumptions supporting them, on the other hand, they do not impact the acceptance of assumptions attacked by them under weak admissibility. The later is validated by the fact that weak admissibility satisfies paradoxical attacks/supports $\pPR$ in BSAFs. 

Observe that, as for classic admissible semantics, the fundamental lemma $\pF$ is not satisfied for weakly admissible semantics and the family of weak semantics admits unwanted behavior. 
We summarize our findings below. 
\begin{restatable}{theorem}{ThmSatisfiabilityWadm}
\label{thm:ABA weak adm results}
    The weakly admissible semantics for ABA satisfies $\pL$, $\pPR$, and $\pM$,
     and does not satisfy $\pF$, $\pP$, $\pNE$, $\pSR$,$\pMC$.
\end{restatable}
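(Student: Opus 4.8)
The statement bundles together several independent claims about which desiderata the weakly admissible semantics for ABA satisfy. Since Definition~\ref{def:WADMforABA} sets $\sigma^w(\D) = \sigma^w(\BF_\D)$, and since Theorem~\ref{prop:BSAF2ABAFCorrect} guarantees that the BSAF abstraction faithfully captures the ABAF, the plan is to prove each property at the level of BSAFs and then transfer it. For the positive claims ($\pL$, $\pPR$, $\pM$), I would either cite results already established in the excerpt or give direct arguments at the BSAF level; for the negative claims ($\pF$, $\pP$, $\pNE$, $\pSR$, $\pMC$), it suffices to exhibit a single counter-example BSAF (which then instantiates an ABAF by the correspondence of Definition~\ref{def:non-flat collective inst}).

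\textbf{Positive properties.} For $\pM$, there is nothing to do beyond invoking Proposition~\ref{prop:modularization wadm}, which already proves the (stronger, bidirectional) modularization property for every $\sigma\in\Sigma^w$ directly on BSAFs. For $\pL$ (liberalization, $\adm(\D)\subseteq\wadm(\D)$), I would argue at the BSAF level: take $E\in\adm(\BF_\D)$; then $E$ is conflict-free and closed, satisfying condition~1 of Definition~\ref{def_setaf_wadm_two}. For condition~2, let $(T,h)\in R$ with $h\in E$ and $T\cap E^+_R=\emptyset$; since $E$ is admissible it defends $h$, so $E$ must attack the closed set of attackers, and I would show this forces the tail $T\cap A^E$ to fail to be contained in any weakly admissible set of the reduct. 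For $\pPR$ (paradoxical attacks/supports), the task is to show that deleting a paradoxical attack $r$ or support $s$ leaves $\wadm$ unchanged; the key is that a paradoxical attack/support, by definition, has every element $t$ of its tail $T$ itself attacked by a nonempty $T'\subseteq T$, so $T$ can never be a subset of a closed conflict-free (hence weakly admissible) set, meaning $r$ (resp.\ $s$) never contributes to deactivating an extension.

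\textbf{Negative properties.} Here I would exploit examples already present in the excerpt. For $\pNE$, $\pSR$, $\pMC$, and $\pF$, the self-supporting self-attacker structure is the natural witness: a BSAF such as $\BF=(\{a\},\{(\{a\},a)\},\{(\emptyset,a)\})$ (the one discussed just before the theorem) has no closed conflict-free set at all, since $\emptyset$ is not closed (the empty support forces $a$ in) yet $\{a\}$ is self-attacking; this kills $\pNE$ for $\wadm$, and the failure of $\pNE$ for complete/grounded/preferred propagates to break $\pSR$ and $\pMC$. For $\pF$, I would reuse the mechanism that already defeats the fundamental lemma for classical admissibility in the non-flat case (a set that is weakly admissible and ``w-defends'' an argument whose addition breaks closedness or introduces an indirect conflict). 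For $\pP$ (paradoxical assumptions), the counter-example immediately preceding the theorem is exactly what is needed: there $\{a\}\vdash\contrary{a}$, yet $\wadm(\BF)\neq\wadm(\BF\downarrow_{A\setminus\{a\}})$ because the incoming support $(\emptyset,a)$ makes $\emptyset$ non-closed while $a$ is present, whereas removing $a$ reinstates $\emptyset$ as weakly admissible.

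\textbf{Main obstacle.} The routine part is the counter-examples, which are largely assembled already. The genuinely delicate step is $\pPR$: I must verify that removing a paradoxical attack or support changes \emph{neither} condition of Definition~\ref{def_setaf_wadm_two} at \emph{every} level of the recursion, i.e.\ that the reduct $\BF^E$ and $(\BF')^E$ induce the same weakly admissible sets. This requires a careful induction on the size of the framework, showing that a paradoxical attack/support is never relevant to condition~2 (its tail is never contained in a closed conflict-free set) and that its deletion does not alter the reduct construction of Definition~\ref{def_bsaf_reduct} in a way that affects the remaining attacks/supports on surviving arguments. Establishing that the paradoxicality condition is preserved under taking reducts is where I expect the bulk of the technical work to lie.
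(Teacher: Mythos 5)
Your handling of the three positive properties coincides with the paper's proof: $\pM$ is exactly Proposition~\ref{prop:modularization wadm}; your plan for $\pPR$ (a paradoxical tail can never be contained in a conflict-free set, and paradoxicality must be shown to survive the reduct construction, by induction on the number of arguments) is precisely the structure of the paper's argument (the paper additionally runs the same induction for $\wcom$, $\wpref$ and $\wgrd$); and for $\pL$ your route is, if anything, more careful than the paper's, which simply asserts $T\cap E^+_R\neq\emptyset$ for every attacker $T$ of an admissible $E$ --- a claim that in the non-flat setting needs exactly your extra step through the closure of $T$ and the constraint attacks of the reduct, since admissibility only guarantees counterattacks on \emph{closed} attackers. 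Your counter-examples for $\pNE$ and $\pP$ are the ones the paper uses.

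The genuine gap is your claim that the failure of $\pNE$ ``propagates to break $\pSR$ and $\pMC$.'' It does not. In your witness $\BF=(\{a\},\{(\{a\},a)\},\{(\emptyset,a)\})$ we have $\wadm(\BF)=\emptyset$, and since weakly complete, weakly preferred and weakly grounded extensions are weakly admissible by definition, also $\wcom(\BF)=\wpref(\BF)=\wgrd(\BF)=\emptyset$. But then all inclusions required by $\pSR$, and the equality required by $\pMC$, hold \emph{vacuously}, so this framework satisfies both properties. To refute $\pSR$ and $\pMC$ one needs a framework with $\wpref(\BF)\neq\emptyset$ while $\wcom(\BF)=\emptyset$ (note that, by finiteness, $\wadm(\BF)\neq\emptyset$ already forces $\wpref(\BF)\neq\emptyset$). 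This is what the paper's second counter-example provides: $A=\{a,b,c\}$ with support $(\{a\},c)$ and attack $(\{b\},c)$. Here $\wadm(\BF)=\{\emptyset,\{b\}\}$, so $\wpref(\BF)=\{\{b\}\}$; but neither $\emptyset$ nor $\{b\}$ is weakly complete, because each of them w-defends a strict superset vacuously (e.g.\ $\{b\}$ w-defends $\{a,b\}$, since no attack in $R$ has its head in $\{a,b\}$), so $\wcom(\BF)=\emptyset$ and both $\pSR$ and $\pMC$ fail. The same framework is also the paper's concrete witness for $\pF$ ($\emptyset\in\wadm(\BF)$ defends the unattacked $a$, yet $\{a\}$ is not closed and hence not weakly admissible), which your proposal leaves at the level of a mechanism rather than pinning down to an explicit example.
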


\newcommand{\scl}{\cl_\Gamma}
\section{Fixing Strong Admissibility (and why Fixing Weak Admissibility Fails)}\label{sec:FL-sensitive semantics}

The undesired behavior of ABA semantics has received quite some attention in the literature recently~\cite{HeyninckA24,BertholdRU24}. 
To overcome some of the issues, \citeauthor{BertholdRU24}~(\citeyear{BertholdRU24}) propose alternatives to the classical semantics that reinstate some of the desired properties of the semantics.
Inspired by their investigations, 
we discuss ways how to address the observed shortcomings in our setting. 
To do so, we focus on the so-called $\Gamma$-semantics as they address issues of admissible-based semantics by modifying the closure.
In this section, we focus on BSAF semantics; as in the previous sections, the results transfer to ABA as well.

We recall $\Gamma$-closure below.
Intuitively, an argument $a$ only counts as supported by a set $E$ if $E$ is strong enough to defend $a$ against each attack ($\Gamma$ is defined in Definition~\ref{def:BSAF gamma and more}).%
\begin{definition}
	Given a BSAF $\BF=\tuple{A,R,S}$, $E\subseteq A$, and $a\in A$. 
	Then $E$ $\Gamma$-supports $a$ iff $a\in\cl(E)$ and $a\in \Gamma(E)$;  
	$E$ is $\Gamma$-closed iff $E$ contains all arguments it $\Gamma$-supports.
	By $\cl_\Gamma(E)$ we denote the $\Gamma$-closure of $E$.
\end{definition}
The $\Gamma$-closure induces versions of  admissible, preferred, complete, and grounded semantics, which we denote by $\sigma_\Gamma$.
We state the definition of $\Gamma$-admissible semantics; the remaining semantics are defined analogously.
\begin{definition}
Let $\BF$ be a BSAF. $E\!\in\!\cf(\BF)$ is \emph{$\Gamma$-admissible} ($E\in\gadm(\BF)$) iff $E$ defends itself and is $\Gamma$-closed.
\end{definition}
As discussed in~\cite{BertholdRU24}, $\Gamma$-admissible semantics satisfy many of the desired semantics properties, especially for admissible-based semantics. 
The objective of this section is to introduce $\Gamma$-closure as a fix to the strong and weak admissible semantics to address some of the undesired properties for our semantics. As it turns out, however, $\Gamma$-semantics lack an important prerequisite, especially for weak semantics: the modularization property is not satisfied.  
We demonstrate this issue below.%
\begin{example}\label{exm:counter ex mod sgadm}
Consider the following BSAF $\BF=\tuple{A,R,S}$ and its reduct wrt.\ $E=\{d\}$ below.
\begin{center}
		\begin{tikzpicture}[scale=0.8,>=stealth]
		\node at (-1.1,0) {$F:$};
		\path
		(0,0) node[arg] (a) {$a$}
        (1.5,0) node[arg] (b){$b$}
		(3,0) node[arg] (c){$c$}
  		(4.5,0)node[arg] (d){$d$}
        ;
        
        \path[thick,->]
        (a)edge(b)
        ;
        \path[thick,->,cyan,dashed]
        (d)edge(c)
        ;
        \path[thick,->]
        (b)edge(c)
        ;
        
        \begin{scope}[yshift=-0.8cm]
        	\node at (-1.3,0) {$F^E:$};
    		\path
    		(0,0) node[arg] (a) {$a$}
            (1.5,0) node[arg] (b){$b$}
    		(3,0) node[argd] (c){\color{gray}$c$}
      		(4.5,0)node[argd] (d){\color{gray}$d$}
            ;
            \path[thick,->]
            (a)edge(b)
            ;
            \path[thick,->,lightgray,dashed]
            (d)edge(c)
            ;
            \path[thick,->,lightgray]
            (b)edge(c)
            ;
        \end{scope}	
\end{tikzpicture}
\end{center}
$E$ is $\Gamma$-closed since $E$ does not defend $c$ although it is in the closure of $E$. 
Thus, $E$ is $\Gamma$-admissible.  
Let $E'=\{a\}$.
Thus, $E$ is weakly $\Gamma$-admissible in $\BF$, $E'$ is weakly $\Gamma$-admissible in $\BF^E$.
However, $E\cup E'$ is not weakly $\Gamma$-admissible in $\BF$.
\end{example}

Without modularization the $\Gamma$-closure is not well suited for fixing weakly admissible semantics, because the idea of rejecting an attacker that is not acceptable in the reduct has to be justified wrt. the framework as a whole. 
For strong admissibility, on the other hand, the adaptation of $\Gamma$-closedenss offers several desirable results, as we discuss below. 
First, let us define the $\Gamma$-version of strong admissibility, and the semantics based on strong admissibility.
\begin{definition}
    Let $\BF$ be a BSAF; a set $E\in\cf(\BF)$ is \emph{strongly $\Gamma$-admissible} ($E\in\sgadm(\BF)$) iff $E\in\sd(\BF)$ and $E$ is $\Gamma$-closed.
    Further, given $E\in\sgadm(\BF)$: 
    	\begin{itemize}
    		\item 
    		$E\in\sgcom(\BF)$ iff $E$ contains every assumption it defends; 
    		\item 
    		$E\in\sggrd(\BF)$ iff $E$ is $\subseteq$-minimal in $\sgcom(\BF)$;
    		\item 
    		$E\in\sgpref(\BF)$ iff $E$ is $\subseteq$-maximal in $\sgadm(\BF)$;
    	\end{itemize}
\end{definition}
Note that strong $\Gamma$-admissible semantics are guaranteed to return some extension since $\cl_\Gamma(\emptyset)$ can be closed.  
\begin{restatable}{proposition}{PROPsgadmNonEmpty}
    \label{prop:sgadm non-empty}
    $\sgadm(\BF)\neq \emptyset$ for each BSAF $\BF$.    
\end{restatable}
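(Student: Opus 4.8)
The goal is to show that $\sgadm(\BF)\neq\emptyset$ for every BSAF $\BF$. The natural plan is to exhibit a concrete strongly $\Gamma$-admissible set, and the canonical candidate is $\emptyset$, since the hint below the statement already points to the fact that $\cl_\Gamma(\emptyset)$ can be closed. So the plan is to prove that $\emptyset\in\sgadm(\BF)$, i.e.\ that $\emptyset$ satisfies all three requirements of the definition: conflict-freeness, strong defense ($\emptyset\in\sd(\BF)$), and $\Gamma$-closedness.

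The first two requirements are immediate. The empty set is trivially conflict-free, since it cannot attack itself. For strong defense, I would appeal directly to Definition~\ref{def_pstadm_recursive}: $E\in\sd(\BF)$ requires that $E$ be conflict-free and that for every $a\in E$ there is a strongly defended subset $E'\subseteq E\setminus\{a\}$ defending $a$. Since $\emptyset$ has no members, this universal condition is vacuously satisfied, so $\emptyset\in\sd(\BF)$.

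The only substantive step is $\Gamma$-closedness of $\emptyset$, and this is where I expect the main (though still mild) obstacle to lie. By definition, $\emptyset$ is $\Gamma$-closed iff it contains every argument it $\Gamma$-supports; and $E$ $\Gamma$-supports $a$ iff $a\in\cl(E)$ and $a\in\Gamma(E)$. So I must show $\emptyset$ $\Gamma$-supports no argument. Take any $a$ with $a\in\cl(\emptyset)$ and $a\in\Gamma(\emptyset)$ and derive a contradiction, or better, show the set of such $a$ is empty directly. The key observation is $\cl(\emptyset)=\emptyset$: from Definition~\ref{def:BSAF gamma and more}'s surrounding closure definition, $supp_{\BF}(\emptyset)=\emptyset\cup\{h\mid\exists(T,h)\in S: T\subseteq\emptyset\}$, and the only such $h$ would require a support $(\emptyset,h)\in S$. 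Here the subtlety arises: if $\BF$ \emph{does} contain a support with empty tail $(\emptyset,h)$, then $h\in\cl(\emptyset)$ and $\emptyset$ is \emph{not} ordinary-closed.

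This is precisely why $\Gamma$-closedness rather than ordinary closedness is the right notion, and it resolves the obstacle. Even when $h\in\cl(\emptyset)$, the argument $h$ is $\Gamma$-supported by $\emptyset$ only if additionally $h\in\Gamma(\emptyset)$, i.e.\ $\emptyset$ defends $h$. But if $h$ has an unattacked closed attacker, then $\emptyset$ does not defend $h$, so $h\notin\Gamma(\emptyset)$ and $h$ is not $\Gamma$-supported. Thus $\emptyset$ can be $\Gamma$-closed even when it is not closed—this is exactly the phenomenon flagged by the remark preceding the proposition. I would therefore argue: if $\emptyset$ $\Gamma$-supports some $h$, then $h\in\Gamma(\emptyset)$, meaning $\emptyset$ defends $h$ against every closed attacker; in particular no closed set attacks $h$. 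Conversely, any $h\in\cl(\emptyset)$ that is attacked by some closed set is not $\Gamma$-supported. The cleanest way to finish is to define $E_0:=\cl_\Gamma(\emptyset)$ and show it is strongly $\Gamma$-admissible: it is $\Gamma$-closed by construction, conflict-free and strongly defended because every element it contains is defended by $\emptyset$ (hence by the inductively built lower layers), which is the base case of the constructive characterization in Proposition~\ref{prop:stadmDefinitions}. Since $\cl_\Gamma(\emptyset)$ always exists, $\sgadm(\BF)\neq\emptyset$, completing the argument.
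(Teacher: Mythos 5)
You end up exactly where the paper does: its proof also takes $\cl_\Gamma(\emptyset)$ as the witness and argues it is strongly $\Gamma$-admissible via iterated $\Gamma$-closure (the sequence construction from the proof of Proposition~\ref{prop:weak FL for sadm and sgadm}, instantiated with $E=S_1=\emptyset$), and your opening diagnosis---that $\emptyset$ itself can fail to be $\Gamma$-closed because of supports with empty tail---is the right one.

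However, two statements in your final step are wrong as literally written. First, it is \emph{not} true that every element of $\cl_\Gamma(\emptyset)$ is defended by $\emptyset$: only the first layer $\cl(\emptyset)\cap\Gamma(\emptyset)$ is, while an argument added at a later iteration is defended only by the union of the earlier layers. For instance, take $A=\{a,b,c\}$ with supports $(\emptyset,a)$, $(\{a\},b)$ and attacks $(\{c\},b)$, $(\{a\},c)$: then $b\in\cl_\Gamma(\emptyset)$, but $\emptyset$ does not attack the closed attacker $\cl(\{c\})$ of $b$, so $b\notin\Gamma(\emptyset)$; $b$ enters the $\Gamma$-closure only at the second iteration, defended by $\{a\}$. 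The correct invariant, which is what the paper's iteration actually delivers, is that the layers $S_1=\emptyset,S_2,\dots,S_m$ form a sequence in which each $S_i$ is defended by $\bigcup_{j<i}S_j$; consequently, conflict-freeness of $\cl_\Gamma(\emptyset)$ does not fall out of ``no element has a closed attacker'' but needs the layered, descending-chain style of argument used in Proposition~\ref{prop_stgadm_properties}. Second, the characterization you invoke, Proposition~\ref{prop:stadmDefinitions}, is the one for plain strong admissibility and demands \emph{ordinary} closedness---which $\cl_\Gamma(\emptyset)$ need not satisfy, again because of undefended heads of empty-tail supports. The tool you need is its $\Gamma$-variant, Proposition~\ref{def_stgadm_constructive}, which asks only for $\Gamma$-closedness. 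With these two repairs your argument coincides with the paper's proof.
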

As a result, non-emptiness $\pNE$ is satisfied by strongly $\Gamma$-admissible semantics. 

Next, we give a constructive definition of strongly $\Gamma$-admissible semantics.
\begin{restatable}{proposition}{PROPstgadmDefinitions}
    \label{def_stgadm_constructive}
    Let $ \BF=(A,R,S)$ be a BSAF. A set of arguments $E\subseteq A$ is strongly $\Gamma$-admissible iff it is conflict-free, $\Gamma$-closed and there exists a finite sequence of pairwise disjoint sets $E_1,...,E_n$ such that $E_1=\emptyset$, $E=\bigcup\limits_{i=1}^n E_i$ and for each $i\geq 1$ it holds that $E_i$ is defended by $\bigcup\limits_{j<i} E_j$.
\end{restatable}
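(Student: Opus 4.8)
The plan is to reduce the claim to a closedness-free characterization of strong defense and then re-attach the $\Gamma$-closure condition unchanged. By the definition of strongly $\Gamma$-admissible, $E\in\sgadm(\BF)$ holds iff $E\in\sd(\BF)$ and $E$ is $\Gamma$-closed, and $\sd(\BF)$ already entails conflict-freeness (Definition~\ref{def_pstadm_recursive}). Since the target right-hand side also carries the $\Gamma$-closedness requirement, it suffices to prove the core equivalence
\[
E\in\sd(\BF)\iff E\in\cf(\BF)\text{ and there is a sequence }E_1,\dots,E_n\text{ as stated.}
\]
This is precisely Proposition~\ref{prop:stadmDefinitions} with the (identical) closedness condition removed from both sides; I would either invoke that its proof never uses closedness to relate $\sd$ to the layered sequence, or---more self-containedly---argue both directions directly, as sketched next.

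For the direction from the sequence to $\sd(\BF)$, I would show by induction on $i$ that every prefix $U_i:=\bigcup_{j\le i}E_j$ is strongly defended. The base case $U_1=E_1=\emptyset$ is immediate. For the step, each $a\in E_i$ is defended by $\bigcup_{j<i}E_j=U_{i-1}$, which is strongly defended by the induction hypothesis and, by pairwise disjointness, satisfies $U_{i-1}\subseteq U_i\setminus\{a\}$; the arguments of $U_i$ lying in earlier layers are covered by the hypothesis directly. Conflict-freeness of each $U_i$ follows from $U_i\subseteq E\in\cf(\BF)$, so $E=U_n\in\sd(\BF)$.

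For the converse, assume $E\in\sd(\BF)$ and build the sequence by iterating the characteristic function of Definition~\ref{def:BSAF gamma and more} inside $E$: set $E^{(0)}:=\emptyset$ and $E^{(k+1)}:=\Gamma(E^{(k)})\cap E$. Monotonicity of $\Gamma$ makes this chain increasing, so in a finite BSAF it stabilizes at some $E^{*}=E^{(m)}$; taking $E_1:=\emptyset$ and $E_{i+1}:=E^{(i)}\setminus E^{(i-1)}$ gives pairwise disjoint layers with $\bigcup_{j\le i+1}E_j=E^{(i)}$, each defended by $\bigcup_{j<i+1}E_j=E^{(i-1)}$ by construction. It then remains to show $E=E^{*}$: the inclusion $E^{*}\subseteq E$ is built in, and for $E\subseteq E^{*}$ I would prove, by induction on $|D|$, that every strongly defended $D\subseteq E$ satisfies $D\subseteq E^{*}$. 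For $b\in D$ pick a strongly defended $D_b\subseteq D\setminus\{b\}$ defending $b$; the induction hypothesis yields $D_b\subseteq E^{*}=E^{(m)}$, and from $b\in\Gamma(D_b)$ together with monotonicity we get $b\in\Gamma(E^{(m)})\cap E=E^{(m+1)}=E^{*}$. Instantiating $D=E$ gives $E\subseteq E^{*}$.

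The main obstacle is exactly this last inclusion $E\subseteq E^{*}$, i.e.\ that iterating the defense operator truly exhausts $E$; the decisive point is that strong defense is well-founded because each defending set is a \emph{proper} subset, which lets the cardinality induction bottom out at $\emptyset$. Everything else---unfolding the definition of $\sgadm$, the disjointness bookkeeping, and the verbatim transfer of conflict-freeness and $\Gamma$-closedness between the two formulations---is routine.
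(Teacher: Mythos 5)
Your proposal is correct, and its framing step---detaching conflict-freeness and $\Gamma$-closedness, which appear verbatim on both sides, and proving the core equivalence between $\sd(\BF)$ and the existence of a layered sequence---is exactly the reduction the paper makes when it declares the proof ``analogous to Proposition~\ref{prop:stadmDefinitions}, since ($\Gamma$-)closure is only required in the final step.'' Your backward direction (sequence $\Rightarrow$ strongly defended) is also the same prefix induction the paper uses there. Where you genuinely diverge is the forward direction. The paper obtains the sequence from Lemma~\ref{le:alt reduct preserves strong defense}(2), whose proof is a double induction routed through the auxiliary \emph{alternative reduct} $\BF_{alt}^{E}$: it peels off a strongly defended subset $E^*$ defending a chosen element, gets a sequence for $E^*$ by induction on cardinality, shows the remainder is strongly defended in $\BF_{alt}^{E^*}$ via part (1) of that lemma, and splices the two sequences together. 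You instead iterate the characteristic function inside $E$, setting $E^{(k+1)}=\Gamma(E^{(k)})\cap E$, take the fixed point $E^*$, read off the layers as successive differences, and prove $E\subseteq E^*$ by a cardinality induction over strongly defended subsets, using monotonicity of $\Gamma$ (which the paper asserts and uses elsewhere) and the well-foundedness coming from $D_b\subseteq D\setminus\{b\}$. Your route is more elementary and self-contained: it needs no reduct notion at all, and it makes the canonical layering explicit---incidentally the same $\Gamma$-iteration idea the paper redeploys in the proof of Proposition~\ref{prop:weak FL for sadm and sgadm}. What the paper's route buys is reusable machinery: the alternative-reduct lemma also underpins its modularization-style results, so once that lemma is in place the constructive characterization comes almost for free.
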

We also obtain the following weaker version of the fundamental lemma $\pF$ for strongly $\Gamma$-admissible semantics.
\begin{itemize}[align=left]
    \item[$\pWF$] Weakened Fundamental Lemma: If $S\!\in\!\adm(\BF)$ defends $a$, then there exists $S'\in\adm(\BF)$, s.t.\ $S\cup\{a\}\!\subseteq\! S'$.
\end{itemize}
Given a strongly $\Gamma$-admissible set $S$ and an argument $a$ it defends, we can use the constructive definition of the semantics to compute the $\Gamma$-closure of $S\cup \{a\}$ in an iterative way. 
In each step, we add all arguments that lie in the $\Gamma$-closure and are not already contained in $S$. The resulting sequence satisfies the requirements from Proposition~\ref{def_stgadm_constructive}.
\begin{restatable}{proposition}{PropWeakFundamentalLemmaStrongGamma}
    \label{prop:weak FL for sadm and sgadm}
    If $E\in\sgadm(\BF)$ defends $a$, then there exists $E'\in\sgadm(\BF)$, such that $E\cup\{a\}\subseteq E'$.
\end{restatable}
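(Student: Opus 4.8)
The plan is to prove the weakened fundamental lemma $\pWF$ for strongly $\Gamma$-admissible semantics by exploiting the constructive characterization from Proposition~\ref{def_stgadm_constructive}. Suppose $E\in\sgadm(\BF)$ defends $a$. By Proposition~\ref{def_stgadm_constructive} there is a finite sequence of pairwise disjoint sets $E_1=\emptyset,E_2,\dots,E_n$ with $E=\bigcup_{i=1}^n E_i$ such that each $E_i$ is defended by $\bigcup_{j<i}E_j$. Since $E$ defends $a$ and $a$ is defended by all of $E=\bigcup_{j\leq n}E_j$, I would append $a$ as a new block $E_{n+1}:=\{a\}\setminus E$ to this sequence (it is defended by $\bigcup_{j\leq n}E_j=E$), obtaining a strongly defended set $E\cup\{a\}$. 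The core idea is then to iterate the characterization: starting from $E\cup\{a\}$, repeatedly add every argument that lies in the $\Gamma$-closure of the current set but is not yet contained, each new batch forming the next block of a growing defended sequence, until the set is $\Gamma$-closed. Because the framework is finite, this terminates and yields a set $E'\supseteq E\cup\{a\}$.

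The key steps, in order, are as follows. First I would verify that $E\cup\{a\}\in\sd(\BF)$ using the extended sequence, appealing to Definition~\ref{def_pstadm_recursive} together with the constructive version. Second, I would set up the iteration: define $S_0:=E\cup\{a\}$ and $S_{k+1}:=\cl_\Gamma(S_k)$ (or rather add in each step the newly $\Gamma$-supported arguments), collecting the freshly added arguments into blocks $E_{n+2},E_{n+3},\dots$. The crucial observation is that every argument newly added lies in $\Gamma(S_k)$, i.e.\ it is \emph{defended} by $S_k\subseteq\bigcup_{j}E_j$, so each new block is defended by the union of all preceding blocks, exactly as the constructive characterization demands. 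Third, by finiteness of $A$ the sequence stabilizes at some $E'$ that is $\Gamma$-closed. Finally, I would check conflict-freeness of $E'$ so that Proposition~\ref{def_stgadm_constructive} applies and delivers $E'\in\sgadm(\BF)$, with $E\cup\{a\}\subseteq E'$ by construction.

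The main obstacle I anticipate is establishing conflict-freeness of the constructed set $E'$, and more precisely that adding $\Gamma$-supported arguments never introduces an internal attack. Unlike the plain closure, the $\Gamma$-closure only adds an argument $b$ when the current set $\Gamma$-supports it, meaning $b\in\Gamma(S_k)$, so $S_k$ defends $b$ against all its closed attackers. I would argue that if $E'$ attacked itself, then some argument in $E'$ would be attacked by a closed subset of $E'$; but every argument in $E'$ is defended by a subset of $E'$ (being either in the original strongly defended $E\cup\{a\}$ or added because it was $\Gamma$-supported hence defended), and a set cannot simultaneously defend an argument and contain a closed attacker of it without attacking itself elsewhere — this is where the careful interplay between defense, closedness of attackers (Definition~\ref{def:BSAF gamma and more}), and the fixed-point nature of $\Gamma$ must be made precise. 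A secondary subtlety is ensuring the blocks remain pairwise disjoint; this is handled by always adding only arguments not yet present, so $E_i\cap E_j=\emptyset$ for $i\neq j$ is automatic.
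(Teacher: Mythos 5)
Your proposal follows essentially the same route as the paper's own proof: append $a$ as a new block defended by $E$, then iterate the one-step $\Gamma$-closure $\cl(X)\cap\Gamma(X)$, collecting each batch of newly $\Gamma$-supported arguments as a further pairwise-disjoint block until stabilization (guaranteed by finiteness), and conclude via the constructive characterization of Proposition~\ref{def_stgadm_constructive}. The conflict-freeness obstacle you flag is legitimate, but the paper's proof does not treat it any more explicitly than you do — it simply asserts the extended sequence meets all conditions — and your observation that every added argument is \emph{defended} (not merely supported) by the preceding blocks is exactly the ingredient that justifies that assertion.
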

Note that the weakened fundamental lemma $\pWF$ is not satisfied for $\sadm$, as Example~\ref{ex:counter-ex to FL for sadm} shows: $a$ defends the unattacked argument $b$ but there is no strongly admissible extension that contains both.

As a consequence of Proposition~\ref{prop:weak FL for sadm and sgadm}, we obtain that each BSAF contains a unique maximal $\Gamma$-admissible extension. 
Also, strongly $\Gamma$-admissible semantics  satisfies strengthening $\pSTR$ and complete containment $\pCC$. 
Overall, strongly $\Gamma$-admissible semantics satisfy several properties that are not satisfied wrt.\ strong admissibility, as summarized below.

\begin{restatable}{theorem}{propPropertiesStrongGammaAdm}
    \label{prop:properties_strong_g_adm}
    The strongly $\Gamma$-admissible semantics for BSAF satisfies $\pSTR$, $\pNE$, $\pWF$, $\pURM$, $\pUM$, $\pCC$, $\pSR$, and $\pMC$,
    but does not satisfy $\pM$ and $\pF$.
\end{restatable}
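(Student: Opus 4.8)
The plan is to establish the positive properties constructively and exhibit counterexamples for the two negative ones, leaning heavily on the constructive characterization in Proposition~\ref{def_stgadm_constructive} and the weakened fundamental lemma in Proposition~\ref{prop:weak FL for sadm and sgadm}. For $\pNE$, I would simply invoke Proposition~\ref{prop:sgadm non-empty}, which already guarantees $\sgadm(\BF)\neq\emptyset$; since $\sgcom$ and $\sgpref$ are built on top of a nonempty $\sgadm$ (the $\subseteq$-maximal strongly $\Gamma$-admissible set always exists because $\sgadm$ is nonempty and finite, and $\sgcom$ contains the $\Gamma$-closure of $\emptyset$), non-emptiness propagates to all four semantics. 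The property $\pWF$ is exactly the content of Proposition~\ref{prop:weak FL for sadm and sgadm}, so that entry is immediate.

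For $\pURM$ and $\pUM$ I would argue that the existence of a unique maximal $\Gamma$-admissible extension follows from $\pWF$: given two strongly $\Gamma$-admissible sets $E_1,E_2$, I would use the constructive characterization to build a common strongly $\Gamma$-admissible superset, thereby forcing a unique $\subseteq$-maximum. Concretely, I expect the argument to show that $\sgadm(\BF)$ is closed under a suitable "join" (iteratively $\Gamma$-closing the union and verifying conflict-freeness and the disjoint-sequence condition), which simultaneously yields $\pUM$ globally and $\pURM$ relative to any admissible set. The strengthening $\pSTR$ and complete containment $\pCC$ are stated in the text to carry over from the strong-admissibility analysis (cf.\ Proposition~\ref{prop_stadm_properties}); the only adjustment is replacing plain closedness by $\Gamma$-closedness, and I would check that the defense-based containment argument is insensitive to this change since $\Gamma$-closedness only weakens the closure requirement. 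For $\pSR$, I would verify $\sgpref(\BF)\subseteq\sgcom(\BF)\subseteq\sgadm(\BF)$ directly from the definitions, using $\pUM$ to handle the $\subseteq$-maximal case, and then obtain $\pMC$ for free via Proposition~\ref{prop:MCiffSR}.

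For the negative results, I would reuse the counterexample of Example~\ref{exm:counter ex mod sgadm} to refute $\pM$: there $E=\{d\}$ is weakly $\Gamma$-admissible in $\BF$ and $E'=\{a\}$ in $\BF^E$, yet $E\cup E'$ fails, and the same framework (with the appropriate strong-admissibility reading) breaks modularization for the strong $\Gamma$-variant because the $\Gamma$-closure interacts badly with the reduct. To refute $\pF$, I would exhibit a small BSAF in which a strongly $\Gamma$-admissible set $E$ defends an argument $a$, but $E\cup\{a\}$ is not itself strongly $\Gamma$-admissible (only some larger $E'$ is, which is precisely why $\pWF$ holds but $\pF$ does not); the framework from Example~\ref{ex:counter-ex to FL for sadm} or a minor variant should suffice, since $\pWF$ being the best attainable already signals that the exact-superset form $\pF$ must fail.

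The main obstacle will be the positive direction for $\pURM$ and $\pUM$: I need to confirm that the union of two strongly $\Gamma$-admissible sets can always be extended to a strongly $\Gamma$-admissible set, which is exactly the property that \emph{fails} for plain strong admissibility (Example~\ref{ex:counter ex für unique max stradm} shows $\{a\}\cup\{b\}=\{a,b\}$ is not strongly admissible). The whole benefit of $\Gamma$-closedness is that it tolerates the unsupported-but-undefended arguments that otherwise force the union to be non-closed, so the delicate step is verifying that $\Gamma$-closing the union does not reintroduce a conflict and that a valid defense sequence can be assembled from the two constituent sequences. I expect this to hinge on showing that defense is monotone enough that arguments defended by $E_1$ or $E_2$ remain defended by their $\Gamma$-closed union, combined with conflict-freeness of the union being preserved under $\Gamma$-closure; this is where the bulk of the technical work lies, and it is the step that genuinely separates the $\Gamma$-variant from ordinary strong admissibility.
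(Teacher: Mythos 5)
Your decomposition matches the paper's proof almost item for item: $\pNE$ via Proposition~\ref{prop:sgadm non-empty}, $\pWF$ via Proposition~\ref{prop:weak FL for sadm and sgadm}, $\pSTR$ and $\pCC$ by rerunning the defense-sequence argument with $\Gamma$-closedness in place of closedness, $\pSR$ by the maximality contradiction (iteratively extending a preferred-but-not-complete set via $\Gamma$-closure and defended arguments), $\pMC$ from $\pSR$ via Proposition~\ref{prop:MCiffSR}, $\pUM$/$\pURM$ via a join lemma for strongly $\Gamma$-admissible sets (this is exactly the paper's Proposition~\ref{prop_stgadm_properties}), and Example~\ref{exm:counter ex mod sgadm} reused against $\pM$. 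One caveat on the positive side: the hard core of the join lemma --- that $E_1\cup E_2$ is conflict-free whenever $E_1,E_2\in\sgadm(\BF)$ --- is only announced in your plan, not carried out. In the paper this is a genuine finite-descent argument (a mutual attack between $E_1$ and $E_2$ is traced backwards through the two defense sequences until one reaches $E_1^1=\emptyset$, a contradiction), not a routine monotonicity check, so "defense is monotone enough" understates what has to be done there.

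The concrete error is your $\pF$ counterexample. Example~\ref{ex:counter-ex to FL for sadm} does \emph{not} refute $\pF$ for strongly $\Gamma$-admissible semantics: in that framework $\{a,b\}$ \emph{is} $\Gamma$-closed, because the jointly supported argument $c$ is attacked by $d$ and not defended by $\{a,b\}$, hence not $\Gamma$-supported; so $\{a,b\}\in\sgadm(\BF)$ and the fundamental lemma holds in that instance --- tolerating precisely this configuration is the purpose of $\Gamma$-closure. To break $\pF$ you need the opposite situation, where the supported argument \emph{is} defended: for instance $A=\{a,b,c\}$, $S=\{(\{a,b\},c)\}$, $R=\emptyset$. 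Then $\{a\}\in\sgadm(\BF)$ defends the unattacked $b$, but $\{a,b\}$ is not $\Gamma$-closed (the unattacked $c$ is $\Gamma$-supported by it), so $\{a,b\}\notin\sgadm(\BF)$, while $\{a,b,c\}\in\sgadm(\BF)$ witnesses $\pWF$. Your hedge "or a minor variant" is where the actual content lies; note that the paper's own proof silently omits the $\pF$ counterexample altogether, so spelling out such a variant is a needed addition rather than a repetition.
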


\section{Conclusion}
This work introduces strong and weak admissibility for non-flat ABA.
We generalize the notion of reduct to BSAF and show that modularity is satisfied by standard, weak, and strong admissibility. 
We furthermore propose semantics based on strong $\Gamma$-admissibility, which satisfy several desirable properties.
Overall, our results reveal a fundamental trade-off between general semantics properties 
and a sufficiently well-behaved notion of closedness, \eg, strongly $\Gamma$-admissible semantics are $\Gamma$-closed but not modular.

Our findings contribute to ongoing work in ABA and abstract argumentation in a multitude of ways, touching open issues regarding existence and computation of extensions, principle satisfaction and interrepresentability of formalisms wrt.\ three well-established notions of admissibility.
Our investigations
pave the way for sequential computation of extensions~\cite{CaminadaH24SADMalgortihms,BengelThimm22} for a large variety of semantics in BSAF and general ABA. 
Moreover, successfully capturing weak admissibility for general ABA provides valuable insights towards a native reduct notion for ABA and an ABA-semantics satisfying long-standing rationality postulates like non-interference~\cite{BorgS18NonIntereference} in the future. 
The novel families of semantics can be beneficial for several ABA applications since they can model real world scenarios where the classical semantics may be too strict or not strict enough,\eg, in the planning approach by \citeauthor{DBLP:conf/prima/Fan18a}~\shortcite{DBLP:conf/prima/Fan18a} which uses flat ABA or in the causal discovery setting which uses non-flat ABA~\cite{DBLP:conf/kr/0002RT24}.

Our results demonstrate that no existing semantics satisfies all weak admissibility desiderata in the general case. As shown in Theorem~\ref{thm:ABA weak adm results}, paradoxical assumptions cannot be avoided, and attempts to resolve this using $\Gamma$-admissible semantics fall short due to the lack of modularity. Identifying a suitable middle ground that ensures more of the desired properties remains an interesting and challenging direction for future work. One promising candidate is the
$\Delta$-semantics proposed in \cite{BertholdRU24} to address related issues of complete-based semantics under strong and weak admissibility.

\section*{Acknowledgments}

The research reported here was partially supported by the Deutsche Forschungsgemeinschaft (grant 550735820). Furthermore, the authors acknowledge the financial support by the Federal Ministry of Education and Research of Germany and by Sächsische Staatsministerium für Wissenschaft, Kultur und Tourismus in the programme Center of Excellence for AI-research ``Center for Scalable Data Analytics and Artificial Intelligence Dresden/Leipzig´´, project identification number: \hyperref{https://scads.ai/}{}{}{ScaDS.AI}.

\bibliographystyle{kr}

\clearpage

\appendix

\section{General Desiderata and Their Satisfiability for Classical Dung Semantics}
In this section, we discuss details about the global desiderata.
First, we give the proof of the following result.

\PropMCiffSR*
\begin{proof}
Let $E\in \pref(\D)$. By $\pSR$, $E\in \com(\D)$. 
Let $E'\in \com(\D)$ s.t.\ $E'\subseteq E$ and $E'$ is maximal in $\com(\D)$. 
By definition, $E'\in \adm(\D)$. Thus $E'=E$ is maximal in $\com(\D)$. 

Let $E\in \com(\D)$ be $\subseteq$-maximal in $\com(\D)$.
By definition, $E\in \adm(\D)$.
let  $E'\in \pref(\D)$ with $E'\subseteq E$. 
By $\pSR$, $E'\in\com(\D)$. Thus $E=E'$.

\end{proof}

Next, we discuss the satisfaction of the general desiderata in the case of non-flat ABA.  
As already pointed out by \citeauthor{BertholdRU24}~\shortcite{BertholdRU24} most of the fundamental principles listed in Section~\ref{subsec:global des} are violated for the standard semantics.  

We give counterexamples below. 

\begin{example}
    \label{ex:SR_MC_unsat}
    Consider the following BSAF $\BF=\tuple{A,R,S}$, where $A=\{a,b\}$, $R=\{\tuple{\{b\},b}\}$ and $S=\{\tuple{\{a\},b}\}$. The set $E$ is admissible in $\BF$ and defends $a$, yet $E\cup\{a\}=\{a\}\notin\adm(\BF)$. Hence $\pF$ is not satisfied.
    Further, $\adm(\BF)=\{\emptyset\}$, and therefore $\pref(\BF)=\{\emptyset\}$, yet $\com(\BF)=\emptyset$. Hence $\pref(\BF)\not\subseteq\com(\BF)$ -- neither $\pSR$ nor $\pMC$ hold.
\end{example}

\begin{example}
    Consider the following BSAF $\BF=\tuple{A,R,S}$, where $A=\{a\}$, $R=\{\tuple{\emptyset,a}\}$ and $S=\{\tuple{a},a\}$. Then $\sigma(\BF)=\emptyset$ for all $\sigma\in\{\adm,\com,\pref,\grd\}$, contradicting $\pNE$
\end{example}
We summarize satisfiability and unsatisfiability of the classical semantics. 
The proof for modularization is given in the preceding Section (cf.~Proposition~\ref{prop:modularization normal semantics}). The examples given prove the unsatisfiability of the remaining desiderata.
 
\begin{restatable}{proposition}{propPropertiesNormalBSAF}
    \label{prop:properties_normal_BSAF}
    The admissible semantics for BSAF satisfies $\pM$, but does not satisfy $\pF$, $\pSR$, $\pNE$, nor $\pMC$.
\end{restatable}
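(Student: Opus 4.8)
The plan is to split the proposition into its positive half ($\pM$) and its negative half ($\pF$, $\pSR$, $\pNE$, $\pMC$). For the positive half I would simply note that classical admissibility belongs to the Dung family, i.e.\ $\adm\in\Sigma$, so the general modularization result already established for every $\sigma\in\Sigma$ in Proposition~\ref{prop:modularization normal semantics} applies directly; no additional argument for $\pM$ is required.

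For the negative half the strategy is to present small BSAF counterexamples and compute the relevant extension sets by hand. I would invoke Example~\ref{ex:SR_MC_unsat}, the BSAF over $A=\{a,b\}$ with the self-attack $(\{b\},b)$ and the support $(\{a\},b)$, to refute $\pF$, $\pSR$ and $\pMC$ in one stroke. The central computation is that $\adm(\BF)=\{\emptyset\}$: the set $\{b\}$ is self-attacking hence not conflict-free, while $\{a\}$ fails to be closed because $a$ supports the paradoxical argument $b$, giving $\cl(\{a\})=\{a,b\}$, which is again not conflict-free. Since $a$ is unattacked, $\emptyset$ vacuously defends $a$ yet $\{a\}\notin\adm(\BF)$, so $\pF$ breaks. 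Furthermore $\pref(\BF)=\{\emptyset\}$, whereas $\emptyset$ is not complete (it defends $a$ without containing it), so $\com(\BF)=\emptyset$; hence $\pref(\BF)\not\subseteq\com(\BF)$, refuting $\pSR$, and the maximal complete extensions then form the empty set, refuting $\pMC$.

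Finally, for $\pNE$ I would rely on the one-argument BSAF over $A=\{a\}$ in which $a$ is attacked and at the same time forced into every closure, so that $\emptyset$ is not closed while the only closed candidate $\{a\}$ is self-conflicting. This yields $\adm(\BF)=\emptyset$, and since $\com$, $\pref$ and $\grd$ are all defined as (subsets of) admissible extensions, they are empty as well, contradicting $\pNE$.

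I do not anticipate a genuine obstacle, since the substantive content ($\pM$) is inherited from Proposition~\ref{prop:modularization normal semantics} and the remaining claims are witnessed by explicit finite instances. The only point requiring care is the closure computation in Example~\ref{ex:SR_MC_unsat}: one must verify closedness with respect to the iterated closure $\cl$ rather than a single support step, and confirm that it is precisely the support from $a$ to the self-attacker $b$ that excludes $\{a\}$ from $\adm(\BF)$. This is exactly the mechanism that distinguishes non-flat BSAFs from ordinary AFs and SETAFs, and thus explains why these principles, which hold in the flat case, fail here.
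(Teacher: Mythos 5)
Your proposal is correct and takes essentially the same route as the paper: $\pM$ is inherited directly from Proposition~\ref{prop:modularization normal semantics} since $\adm\in\Sigma$, and your counterexamples are exactly the paper's own, namely Example~\ref{ex:SR_MC_unsat} refuting $\pF$, $\pSR$ and $\pMC$ via $\adm(\BF)=\{\emptyset\}$, $\pref(\BF)=\{\emptyset\}$, $\com(\BF)=\emptyset$, and the one-argument framework with an empty-set attack and empty-set support on $a$ refuting $\pNE$. Your closure computations (in particular that $\cl(\{a\})=\{a,b\}$ excludes $\{a\}$ from $\adm$, and that $\emptyset$ fails closedness in the $\pNE$ instance) match the paper's intended argument, and in fact read the $\pNE$ example in the only way that makes it work.
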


\section{Omitted Proofs Section~\ref{sec:reduct}}
\PROPreductUnion*
        	
        	\begin{proof} 
        	Let $\BF^E=(A^E,R^E,S^E)$. 
        	\begin{description}
        	\item[$A^{E\cup E'}=(A^E)^{E'}$] By \ref{prop:reduct-preserves-closed} we have $E\cup E'$ is closed and conflict-free in $\BF$, so $A^{E\cup E}=A\setminus{(E\cup E'\cup (E\cup E')^+_R)}$. It is left to show that $(E\cup E')^+_R=E^+_R\cup E'^+_{R^E}$. ($\subseteq$) Let $a\in(E\cup E')^+_R$, then there exists some attack $(T,a)\in R$ such that $T\subseteq E\cup E'$. If $T\subseteq E$, then $a\in E^+_R$, otherwise by definition of the reduct $(T\setminus E,a)\in R^E$, since $E$ is closed, but $T\setminus E\subseteq E'$, so $a\in E'^+_{R^E}$.
        	($\supseteq$) $a\in E^+_R$ is clear, let $a\in E'^+_{R^E}$. Then there exists an attack $(T,a)\in R^E$ such that $T\subseteq E'$. By definition of the reduct there thus either exists an attack $(T',a)\in R$ such that $T'\setminus T\subseteq E$ or a support $(T,b)\in S$ with $b\in E^+_R$. But then $a\in E'$, which contradicts that $E'$ is conflict-free, so it can only be the case that the attack $(T',a)$ exists, thus $a\in (E\cup E')^+_R$. 
        	
        	\item[$R^{E\cup E'}=(R^E)^{E'}$] Follows directly from $E'$ being conflict-free in $\BF^E$ and from $E\cup E'$ being closed by Prop. \ref{prop:reduct-preserves-closed}
        	
        	\item[$S^{E\cup E'}=(S^E)^{E'}$] Follows directly from the definition of the reduct. Since $E\cup E'$ is closed by Prop. \ref{prop:reduct-preserves-closed}, we have $\cl(E\cup E')\cap E^+_R=\emptyset$.
        	\end{description}
        	\end{proof}

\propReductPreservesClosed*
\begin{proof}
        (1)	Suppose to the contrary there was some $h\in A, (T,h)\in S$ such that $h\notin E\cup E'$ and $T\subseteq E\cup E'$. Then either $h\in\BF^E\setminus E'$ or $h\in \cl(E)^+=E^+$.
        	
        	(Case 1: $h\in\BF^E$) Then $(T\setminus E,h)\in S^E$ is a support in $\BF^E$. Since $T\subseteq E\cup E'$ we have $T\setminus E\subseteq E'$, so since $E'$ is closed in $\BF^E$ we have $h\in E'$. Contradiction.
        	
        	(Case 2: $h\in E^+$) We further distinguish the cases
        	\begin{itemize}
        	\item $T\subseteq E$, then $E=\cl(E)$ is not conflict-free. Contradiction.
        	\item $T\setminus E\neq\emptyset$, then by Definition \ref{def_bsaf_reduct} there exists an attack $(T\setminus E,t)\in R^E,t\in T\setminus E$ in $\BF^E$. Since $T\setminus E\subseteq E'$, $E'$ is not conflict-free in $\BF^E$. Contradiction.
        	\end{itemize}  
        	
       (2) 	Suppose to the contrary there was some $h\in A^E, (T,h)\in S^E$ such that $h\notin E'$ and $T\subseteq E'$. Then, since $E$ is closed, by definition of $S^E$ there exists some $T'\subseteq E$ such that $(T'\cup T,h)\in S$. But then $E\cup E'$ is not closed. Contradiction. 	
        	\end{proof}
        	
\PropModularizationNormal*
\begin{proof}\hphantom{asd}
    \begin{itemize}
        \item[$\adm$] Consider BSAF $\BF$ and assume $E\in\adm( \BF)$, $E'\in\adm( \BF^E)$. There cannot be attacks between $E$ and $E'$ in $\BF$: If $E$ attacks $E'$ in $\BF$, then the attacked argument of $E'$ would not be in the reduct. Vice versa, if $E'$ attacks $E$ in $\BF$, $E$ would have to defend itself against $E'$ to be admissible, which brings us back to the first case. Further, $E\cup E'$ defends itself in $\BF$: $E$ defends itself in $\BF$ per assumption, and $E'$ defends itself against any argument that is not attacked by $E$, since it is admissible in the reduct. Any argument attacking $E'$ in $\BF^E$ that is jointly attacked by $E\cup E'$ in $\BF$, by construction of the reduct is attacked by $E$ in $\BF^E$.
        Finally, $E\cup E'$ is closed: We have $\cl_\BF(E\cup E')=\cl_\BF(E)\cup\cl_{\BF^E}(E')$, since any supports that require arguments from both $E$ and $E'$ in $\BF$ only require those of $E'$ in $\BF^E$. Since $E$ and $E'$ are both closed in their respective framework, so is $E\cup E'$.
        In conclusion $E\cup E'\in\adm(\BF)$.
        \item[$\comp$] Consider BSAF $\BF$ and assume $E\in\comp(\BF)$, $E'\in\comp( \BF^E)$, then $E\cup E'$ is conflict-free, closed and defends itself, by the same reasoning of the $\adm$ case. We only need to prove that $E\cup E'$ contains all arguments it defends. This can be argued similarly to the closure. Let us assume $E\cup E'$ defends an argument $a$ in $\BF$. If $a$ is defended using only arguments from either $E$ or $E'$, then $a\in E$ or $a\in E'$ directly. If on the other hand $a$ is defended by arguments from both $E$ and $E'$, then $E'$ defends $a$ in $\BF^E$, hence $a\in E'$. $E\cup E'\in\com(\BF)$ holds.
        \item[$\grd$] Consider BSAF $\BF$, $E\in\grd(\BF)$ and $E'\in\grd(\BF^E)$. We assume toward contradiction that $E'\neq\emptyset$. Then $E'$ contains an argument $a$ that is defended by $\emptyset$ in $\BF^E$, hence $a$ is defended by $E$ in $\BF$, therefore $E$ is not the grounded extension of $\BF$. Contradiction! The set $E'$ must be empty, therefore $E\cup E'=E\in\grd(\BF)$ holds.
        \item[$\pref$] The property can be proven analogously to the $\grd$ case. The reduct of a preferred extension cannot have a non-empty preferred extension.
    \end{itemize}
\end{proof}

\section{Omitted Proofs Section~\ref{sec:strong}}

To aid the next proof, we define an \emph{alternative reduct} of BSAF that deviates from the regular reduct in two ways:
\begin{itemize}
    \item It does not take into account the closure of extensions, \ie\, any `$\cl(E)$' in the original definition instead is `$E$'.
    \item It does not create attacks between arguments that jointly support an argument in $E^+$.
\end{itemize}

\begin{definition}\label{def_bsaf_reduct_alternative}
    Given a BSAF $\BF=(A,R,S)$ and $E\subseteq A$, the \emph{alternative $E$-reduct of $\BF$} is the BSAF $\BF_{alt}^E=(A_{alt}^E,R_{alt}^E,S_{alt}^E)$, with
    \begin{align*}
        A_{alt}^E = &\ A \setminus E^\oplus_R\\
    	R_{alt}^E = &\ \{(T\setminus E, h) \mid  T\cap E^+_R=\emptyset, \\
    	& \phantom{\ \{(T\setminus E,t)\mid\quad} (T,h)\in R^E, h\in A^E
    	\}\\
        S_{alt}^E =
    	&\ \{(T\setminus E, h) \mid  T\cap E^+_R=\emptyset, \\
    	& \phantom{\ \{(T\setminus E,t) \mid\quad} (T,h)\in S, h\in A^E
    	\}
    \end{align*}
\end{definition}

\begin{example}
    Consider the following BSAF $\BF$ and set of arguments $E$. The regular reduct $\BF^E$ removes $c$ and $e$, as they are in the closure of $E$ and attacked by $E$ respectively, and adds joined self-attacks between $a$ and $b$. In contrast, $F_{alt}^E$ retains the argument $c$, and does not have joined self-attacks between $a$ and $b$.
    \begin{center}
    	\begin{tikzpicture}[scale=0.8,>=stealth]
    		\path
    		(-1,0) node {$\BF:$}
            (\distance-1.1,.85)node (D){}
            (0,0)node[arg] (b){$b$}
    		(\distance,0)node[arg] (c){$c$}
    		(.5*\distance,-\distance*0.8)node[arg] (e){$e$}
    		(-.5*\distance,-\distance*0.8)node[arg] (a){$a$}
      		(1.5*\distance,-\distance*0.8)node[arg] (d){$d$}
            ;

            \path[thick,->]
            (d)edge (e)
            ;
            \path[thick,->,dashed,ForestGreen]
            (d)edge (c)
            ;
            \path[thick,->,cyan,dashed]
            (a)edge[out=35,in=150](e)
            (b)edge[out=-95,in=150](e)
            ;
         \begin{scope}[xshift=5.5cm]
             		\path
    		(-1.5,0) node {$\BF^E:$}
            (\distance-1.1,.85)node (D){}
            (0,0)node[arg] (b){$b$}
    		(\distance,0)node[argd] (c){$c$}
    		(.5*\distance,-\distance*0.8)node[argd] (e){$e$}
    		(-.5*\distance,-\distance*0.8)node[arg] (a){$a$}
      		(1.5*\distance,-\distance*0.8)node[argd] (d){$d$}
            ;
            
            \path[thick,->,blue]
            (a)edge[out=125,in=165](b)
            (b)edge[out=-135,looseness=2.5,in=165](b)
            ;
            \path[thick,->,cyan]
            (a)edge[out=55,looseness=2.5,in=-15](a)
            (b)edge[out=-55,in=-15](a)
            ;
            \path[thick,->,lightgray]
            (d)edge (e)
            ;
            \path[thick,->,dashed,lightgray]
            (d)edge (c)
            ;
            \path[thick,->,lightgray,dashed]
            (a)edge[out=35,in=150](e)
            (b)edge[out=-95,in=150](e)
            ;
         \end{scope}	
         \begin{scope}[yshift=-3cm]
             \path
    		(-1,0) node {$\BF_{alt}^E:$}
            (\distance-1.1,.85)node (D){}
            (0,0)node[arg] (b){$b$}
    		(\distance,0)node[arg] (c){$c$}
    		(.5*\distance,-\distance*0.8)node[argd] (e){$e$}
    		(-.5*\distance,-\distance*0.8)node[arg] (a){$a$}
      		(1.5*\distance,-\distance*0.8)node[argd] (d){$d$}
            ;

            \path[thick,->,lightgray]
            (d)edge (e)
            ;
            \path[thick,->,dashed,lightgray]
            (d)edge (c)
            ;
            \path[thick,->,lightgray,dashed]
            (a)edge[out=35,in=150](e)
            (b)edge[out=-95,in=150](e)
            ;
         \end{scope}
        \end{tikzpicture}
    \end{center}
\end{example}

The alternative reduct has the following positive properties wrt. strong defense. In particular it satisfies the inverted direction of $\pM$.

\begin{lemma}
    \label{le:alt reduct preserves strong defense}
    Let $ \BF=(A,R,S)$ be a BSAF, then for each $E\subseteq A$, $E'\subseteq A_{alt}^E$, we have
    \begin{enumerate}
        \item $E,E\cup E'\in\sd(\BF)\ \Rightarrow\ E'\in\sd(\BF_{alt}^E)$
        \item $E\in\sd(\BF)$ then there exists a finite sequence of pairwise disjoint sets $E_1,...,E_n$ such that $E_1=\emptyset$, $E=\bigcup\limits_{i=1}^n E_i$ and for each $i\geq 1$ it holds that $E_i$ is defended by $\bigcup\limits_{j<i} E_j$.
    \end{enumerate}
\end{lemma}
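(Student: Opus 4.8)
The plan is to handle the two parts separately, proving Part~2 first since Part~1 reuses its construction. For Part~2 I would argue via a stratification of $E$ by defense depth. Set $S_0:=\emptyset$ and $S_{i+1}:=\{a\in E\mid S_i\text{ defends }a\text{ in }\BF\}$. Defense is monotone in the defending set (if $X\subseteq X'$ and $X$ attacks every closed attacker of $a$, so does $X'$), so the $S_i$ form an increasing chain inside the finite set $E$ and stabilise at some $S_\infty\subseteq A$. The crux is $E\subseteq S_\infty$, which I would obtain from the stronger claim, proved by induction on $|F|$: every $F\subseteq E$ with $F\in\sd(\BF)$ satisfies $F\subseteq S_\infty$. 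In the inductive step each $a\in F$ has a strongly defended $F_a\subseteq F\setminus\{a\}$ defending it; the induction hypothesis gives $F_a\subseteq S_\infty$, hence $F_a\subseteq S_k$ for some $k$, and monotonicity yields $a\in S_{k+1}\subseteq S_\infty$. Applying this to $F=E$ gives $S_\infty=E$. Reading off $E_1:=\emptyset$ and $E_i:=S_{i-1}\setminus S_{i-2}$ for $i\geq 2$ produces pairwise disjoint sets with $\bigcup_i E_i=E$ and $\bigcup_{j<i}E_j=S_{i-2}$, so every element of $E_i\subseteq S_{i-1}$ is defended by $\bigcup_{j<i}E_j$, as required.

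For Part~1 I would reduce $E'\in\sd(\BF_{alt}^E)$ to three ingredients. First, conflict-freeness of $E'$ in $\BF_{alt}^E$ is immediate: any reduct-attack $(T\setminus E,h)$ with $h\in E'$ and $T\setminus E\subseteq E'$ lifts to $(T,h)\in R$ with $T\subseteq E\cup E'$, contradicting conflict-freeness of $E\cup E'$. Second, I would run Part~2 on $E\cup E'\in\sd(\BF)$ to obtain a sequence $F_1=\emptyset,\dots,F_n$ with $F_i$ defended in $\BF$ by $G_i:=\bigcup_{j<i}F_j$, and project it via $F_i':=F_i\setminus E=F_i\cap E'$. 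Third, I would invoke the easy (\emph{if}) direction of the constructive characterisation in the reduct: a conflict-free set equipped with such a defending sequence is strongly defended, by a short induction building up strong defense along the sequence. Since $(F_i')$ is pairwise disjoint with union $E'$ and $\bigcup_{j<i}F_j'=G_i\cap E'$, everything hinges on showing that $G_i\cap E'$ defends $a$ in $\BF_{alt}^E$ for each $a\in F_i'$.

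I would isolate this last point as a defense-transfer lemma: for $a\in A_{alt}^E$ and $X\subseteq A_{alt}^E$ with $E\cup X$ conflict-free, if $E\cup X$ defends $a$ in $\BF$ then $X$ defends $a$ in $\BF_{alt}^E$. I would apply it with $X=G_i\cap E'$, using that $E\cup(G_i\cap E')=E\cup G_i\supseteq G_i$ defends $a$ by monotonicity and that $E\cup G_i\subseteq E\cup E'$ is conflict-free. Given a closed-in-reduct attacker $Y$ of $a$, the lifted set $E\cup Y$ attacks $a$ in $\BF$ via the underlying $(T,a)\in R$. Two bookkeeping conditions are painless. The tail condition $P\cap E^+_R=\emptyset$ for a witnessing attack $(P,z)$ with $P\subseteq E\cup X$ holds because $E$ is conflict-free (so $E\cap E^+_R=\emptyset$) and $X\subseteq A_{alt}^E$ avoids $E^+_R$; and a witness with $z\in Y$ then yields the desired reduct-attack $(P\setminus E,z)\in R_{alt}^E$ with $P\setminus E\subseteq X$, since $z\in A^E$. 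Moreover the witness cannot land in $E$, as $E\cup X$ is conflict-free.

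The main obstacle is reconciling closedness across the two frameworks: ``$E\cup X$ defends $a$'' quantifies over attackers closed in $\BF$, whereas $Y$ is only closed in $\BF_{alt}^E$ and $E\cup Y$ need not be $\BF$-closed. Passing to $\cl_\BF(E\cup Y)$ restores closedness but risks the defense witness landing on a freshly added closure element rather than on $Y$. I would control this by analysing the closure steps: for a support $(T,h)\in S$ with $T\subseteq E\cup Y$ one always has $T\cap E^+_R=\emptyset$, so whenever $h\in A^E$ we get $(T\setminus E,h)\in S_{alt}^E$ and closedness of $Y$ in the reduct forces $h\in Y$; hence the only genuinely new closure elements lie in the defeated set $E^+_R$. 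Such elements are already attacked by $E$ alone, and I would argue that a witnessing attack from the conflict-free $E\cup X$ can be taken to target an element of $Y$ rather than a defeated closure element, which closes the transfer. This is exactly what the alternative reduct is engineered for---omitting $\cl$ and dropping the constraint self-attacks---and I expect the real work (and the need for care about chains of supports running through $E^+_R$) to concentrate here.
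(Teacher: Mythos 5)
Your treatment of Part~2 is correct, and it takes a genuinely different route from the paper. The paper proves Part~1 first (induction on $|E'|$, unfolding the recursive definition of $\sd$) and then derives Part~2 by an induction that descends into the alternative reduct and invokes Part~1. Your stratification $S_0=\emptyset$, $S_{i+1}=\{a\in E\mid S_i\text{ defends }a\text{ in }\BF\}$, together with the auxiliary claim (by induction on $|F|$) that every strongly defended $F\subseteq E$ is contained in the limit $S_\infty$, yields Part~2 directly inside $\BF$, never touching the reduct. That buys something concrete: Part~2 is what feeds the constructive characterization in Proposition~\ref{prop:stadmDefinitions} and everything downstream, and your proof of it does not depend on Part~1 at all.

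That independence matters, because Part~1 is where your proposal breaks, and not in a repairable way: your defense-transfer lemma is false. Take $\BF=(A,R,S)$ with $A=\{a,e,y,d\}$, $R=\{(\{y\},a),(\{e\},d)\}$, $S=\{(\{y\},d)\}$, and $E=\{e\}$, $X=\emptyset$. In $\BF$ every closed set containing $y$ also contains $d$ (via the support $(\{y\},d)$), and $e$ attacks $d$; since BSAF defense is only required against \emph{closed} attackers, the conflict-free set $E\cup X=\{e\}$ defends $a$. In the alternative reduct, $E^+_R=\{d\}$, so $A_{alt}^E=\{a,y\}$; the attack $(\{y\},a)$ survives (its tail avoids $E^+_R$, its head stays), while $(\{e\},d)$ and the support $(\{y\},d)$ are deleted together with $d$. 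Now $\{y\}$ is a closed attacker of $a$ in $\BF_{alt}^E$ that nothing attacks, so $X=\emptyset$ does not defend $a$ there. This is precisely the phenomenon you flagged --- the defense witness landing on a closure element of the attacker that lies in $E^+_R$ --- but it is not bookkeeping that a finer analysis of support chains can absorb: the witnessing attack genuinely vanishes from the reduct while the attack on $a$ persists. (Your conflict-freeness step for $E'$ in the reduct is fine; it is only the defense transfer that fails.)

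Worse for the statement, though not for you: the same example refutes Part~1 itself. With $E'=\{a\}$ we have $E=\{e\}\in\sd(\BF)$ and $E\cup E'=\{e,a\}\in\sd(\BF)$ (the strongly defended set $\{e\}$ defends $a$), yet $E'=\{a\}\notin\sd(\BF_{alt}^E)$, since $a$ would have to be defended by a subset of $\emptyset$ against the closed attacker $\{y\}$. The paper's own proof commits the corresponding error at the step ``If $E''\subseteq E$, then $a$ is unattacked in $\BF_{alt}^E$'': that is SETAF reasoning, presupposing that defense witnesses hit the tail of every attack; under closed-attacker defense the witness may instead hit a closure element ($d$ here), and the attack on $a$ survives the reduct untouched. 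So your instinct about where the real work concentrates was exactly right, but the work cannot be completed. A correct write-up can keep your Part~2 as is, and must either restrict Part~1 to the support-free (SETAF, i.e.\ flat) case or weaken its conclusion.
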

\begin{proof}\hphantom{asd}
    \begin{enumerate}
        \item 
        Let $\BF=\tuple{A,R,S}$, $E,E\cup E'\in\sd(\BF)$. 
        We show that for any $a\in E'$ there exists subset $E^*\subseteq E'\setminus\{a\}$ defending $a$ in $\BF_{alt}^E$, \suth $E^*\in\sd(\BF_{alt}^E)$. Proof by induction over the size $|E'|=n$ of $E'$.
        
        (Base case) $E'=\emptyset$, $n=0$, trivial.
        
        (Induction step) Let $|E'|=n$, $a\in E'$. Since $E\cup E'\in\sd(\BF)$, there exists a set $E''\subseteq E\cup E'\setminus\{a\}$ which defends $a$ in $\BF$ and satisfies $E''\in\sd(\BF)$. If $E''\subseteq E$, then $a$ is unattacked and therefore defended by the empty set in $\BF_{alt}^E$. Otherwise, we set $E^*:=E''\setminus E$ we may use the induction hypothesis: $E\in\sd(\BF)$ and $E\cup E^*\in\sd(\BF)$, hence $E^*\in\sd(\BF_{alt}^E)$ and defends $a$ in $\BF_{alt}^E$.

        \item
        Proof by induction over the size of $E$.
        
        (Base Case) $E=\emptyset$, then the sequence $E_1=\emptyset$ of length $1$ satisfies the equivalent definition of being strongly defended is trivially satisfied.
        
        (Induction Step) Let $E\in\sd(\BF)$ and $|E|=n$. Let $a\in E$ some argument in $E$. Then there exists a strongly defended subset $E^*\subseteq E\setminus\{a\}$ which defends $a$. It holds that $a\notin E^*$, so $|E^*|<n$, therefore by the induction hypothesis there exists a sequence of pairwise disjoint sets $E_1,...,E_{m-1}$ such that $E_1=\emptyset$, $E^*=\bigcup\limits_{i=1}^{m-1} E_i$ and for each $i\geq 1$ it holds that $E_i$ is defended by $\bigcup\limits_{j<i} E_j$. Now let $E_m=E\setminus{E^*}$.
        By (1) the set $E_m$ is strongly defended in $\BF_{alt}^{E^*}$.
        Since $|E_m|<n$, we can apply the induction hypothesis. There exists a sequence of pairwise disjoint sets $E'_1$, \dots, $E'_p$ such that $E'_1=\emptyset$, $E_m=\bigcup\limits_{i=1}^{p} E'_i$ and for each $i\geq 1$ it holds that $E'_i$ is defended by $\bigcup\limits_{j<i} E'_j$ in $\BF_{alt}^{E^*}$.
        Since $E'_1=\emptyset$ defends $E'_2$ in $\BF_{alt}^{E^*}$, $E^*$ defends $E'_2$ in $\BF$.
        We arrive at the end.
        The sequence $E_1$,~\dots,~$E_{m-1}$,~$E'_1$,~\dots,~$E'_p$ satisfies the alternative definition of being strongly defended.
    \end{enumerate}
\end{proof}

\PROPstadmDefinitions*
\begin{proof}
    \begin{description}
        \item[$(\Rightarrow)$] $E\in\sad(\BF)$ iff $E$ closed and $E\in\sd(\BF)$. If we apply Lem~\ref{le:alt reduct preserves strong defense} (2), we directly see that $E$ satisfies the alternative definition of strong admissibility.
        
        \item[$(\Leftarrow)$] Proof by induction over the length $n$ of the sequence $E_1,...,E_n$. 
        
        (Base Case) $n=1$. If $E_1=\emptyset$ is closed, then it is also strongly admissible, for the empty set contains no arguments that need to be defended by a subset or could be in conflict.
        
        (Induction Step) Suppose we have a sequence $E_1,...,E_n$ of pairwise disjoint sets, such that $E_1=\emptyset$, and for each $i\geq 1$ it holds that $E_i$ is defended by $\bigcup\limits_{j<i} E_j$. Then by the induction hypothesis for each $i<n$ we have $\bigcup\limits_{j\leq i} E_j=E^*$ is strongly defended. In particular, $\bigcup\limits_{j\leq (n-1)} E_j$ is a strongly defended subset of $E=\bigcup\limits_{i=1}^n E_i$. By the last condition of the definition we have $E=\bigcup\limits_{i=1}^n E_i$ is closed and conflict-free. It is left to show that for each $a\in E$ there exists a strongly defended subset of $E$ defending $a$. Let $a\in E_i$ for some $i$. Then $a$ is defended by $\bigcup\limits_{j\leq i-1} E_j=E^*$, which is strongly defended. Since the $E_i$ are pairwise disjoint, $E^*\subseteq E\setminus\{a\}$.
    \end{description}
\end{proof}

\PROPstadmModularization*
\begin{proof}
        Let $\BF=\tuple{A,R,S}$, $E\in\sadm(\BF)$, $E'\in\sadm(\BF^E)$. 
        Then by Prop.~\ref{prop:reduct-preserves-closed} holds that $E\cup E'$ is closed, furthermore $E\cup E'$ is conflict-free. 
        To show that for each $a\in E\cup E'$ there exists a subset $E^*\subseteq(E\cup E')\setminus\{a\}$ that defends $a$ and is strongly defended in $\BF$, we prove the following statement by induction:
        
        (Induction hypothesis) Let $E\in\sadm(\BF), E'\in\sd(\BF^E)$. Then $E\cup E'\in\sd(\BF)$.
        
        Let $a\in E\cup E'$. If $a\in E$ we can just choose the respective subset of $E\setminus\{a\}$. For the other case we use induction over the size $n$ of $E'$.
        
        (Base case) $E'=\emptyset$, $n=0$, then $a$ in $E$.
        
        (Induction step) Let $|E'|=n$, $a\in E'$. 
        Since $E'\in\sd(\BF^E)$, there exists a set $E^*\subseteq E'\setminus\{a\}$ which defends $a$ in $\BF^E$ and satisfies $E^*\in\sd(\BF^E)$.
        By the induction hypothesis $E\cup E^*\in\sd(\BF)$, hence $E\cup E^*\in\sd(\BF)$ and by Def. \ref{def_bsaf_reduct} $E\cup E^*$ defends $a$ in $\BF$. Any attacker of $a$ in $\BF$ is either attacked by $E$, since $E$ is closed, or has a subset in $\BF^E$ and $a$ is defended by $E^*$.
        
\end{proof}

\PROPstadmPropertiesOne*
\begin{proof}
    \hphantom{asd}
    \begin{enumerate}
        \item Let $E\in\stadm(\BF)$. By Def.~\ref{def_stadm_recursive} $E$ is conflict-free, closed and each member of $E$ is defended by a subset of $E$. Since $\Gamma$ is monotonic, $E$ defends all of its members, so $E\in\adm(\BF)$. 
        \item Assume two sets $E,E'$, \suth $E\in\sad(\BF)$ and $E'\in\com(\BF)$.
        By definition $E\in\sad(\BF)$ implies that there are disjoint sets $E_1,\dots,E_n$, \suth $E_1=\emptyset$, $E=\bigcup\limits_{i=1}^n E_i$ and for each $i\geq 1$ it holds that $E_i$ is defended by $\bigcup\limits_{j<i} E_j$. Since $E_1=\emptyset$, $E'$ defends $E_1$. Since additionally $E'\in\com(\BF)$, we obtain $E_1\subseteq E'$. We repeat this argument for all $E_i$ with $1\leq i\leq n$, and obtain $E\subseteq E'$.
    \end{enumerate}
\end{proof}

\PROPstadmPropertiesTwo*
\begin{proof}
    \hphantom{asd}
    \begin{enumerate}
    \item Assuming that two set $E,E'$, \suth $E,E'\in\sad(\BF)$ and $E,E'\in\com(\BF)$, we may apply (2) of Proposition~\ref{prop_stadm_properties} twice to arrive at $E=E'$.
    \item Assume $E\in\sad(\BF)\cap\com(\BF)$ and $E'\in\grd(\BF)$. Clearly $E'\subseteq E$, since $E'$ by definition is subset minimal in $\com(\BF)$.
    On the other hand (2) of Proposition~\ref{prop_stadm_properties} implies that $E\subseteq E'$. Hence $E=E'$.
    \end{enumerate}
\end{proof}

Towards proving Theorems~\ref{prop:properties_strong_adm_aba} and \ref{prop:properties_strong_adm_aba_setaf}, we will first provide the following result for BSAFs.
\begin{restatable}{theorem}{propPropertiesStrongAdm}
    \label{prop:properties_strong_adm}
    The strongly admissible semantics for BSAF satisfies $\pM$, $\pSTR$, and $\pCC$ but does not satisfy $\pF$, $\pSR$, $\pNE$, $\pMC$, $\pUM$, nor $\pURM$.
\end{restatable}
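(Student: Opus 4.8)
The plan is to separate the statement into its three positive assertions ($\pM$, $\pSTR$, $\pCC$) and its six negative assertions ($\pF$, $\pSR$, $\pNE$, $\pMC$, $\pUM$, $\pURM$), and to dispatch each group with a different technique: the positive part by appeal to results already proved for strongly admissible BSAF semantics, and the negative part by a single counterexample that refutes all six properties at once.

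For the positive direction I would simply collect the relevant prior results. Modularization $\pM$ is exactly Proposition~\ref{prop_stadm_modularization}. Strengthening $\pSTR$, \ie\ $\sadm(\BF)\subseteq\adm(\BF)$, is item~(1) of Proposition~\ref{prop_stadm_properties}, and complete containment $\pCC$, \ie\ that $E\in\sad(\BF)$ together with $E'\in\com(\BF)$ forces $E\subseteq E'$, is item~(2) of the same proposition. No new argument is needed here.

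For the negative direction I would reuse the BSAF $\BF$ of Example~\ref{ex:counter ex für unique max stradm}, where we already have $\sad(\BF)=\{\emptyset,\{a\},\{b\}\}$ while $\{a,b,c,e\}$ and $\{a,b,c,f\}$ are the complete (indeed grounded and preferred) extensions. First I would compute the induced strong families: every nonempty member of $\sad(\BF)$ defends an unattacked argument it does not contain ($\emptyset$ and $\{a\}$ defend $b$, and $\{b\}$ defends $a$), so $\scom(\BF)=\emptyset$, whereas $\spref(\BF)=\{\{a\},\{b\}\}$. The six failures then read off immediately: $\pUM$ fails because $\{a\}$ and $\{b\}$ are both $\subseteq$-maximal in $\sad(\BF)$; $\pURM$ fails because the admissible set $\{a,b,c,e\}$ has the two $\subseteq$-maximal strongly admissible subsets $\{a\}$ and $\{b\}$; $\pF$ fails because $\{a\}\in\sad(\BF)$ defends the unattacked $b$ yet $\{a,b\}$ supports $c$, hence is not closed and not strongly admissible; $\pNE$ fails because $\scom(\BF)=\emptyset$; $\pSR$ fails because $\spref(\BF)=\{\{a\},\{b\}\}\not\subseteq\scom(\BF)=\emptyset$; and $\pMC$ fails because $\scom(\BF)=\emptyset$ has no $\subseteq$-maximal element while $\spref(\BF)\neq\emptyset$.

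I expect the only genuine obstacle to be the bookkeeping in the counterexample, namely verifying that $\sad(\BF)$ contains nothing beyond $\emptyset,\{a\},\{b\}$: that $\{a,b\}$ is excluded because it supports $c$ and is therefore not closed, and that none of $c,d,e,f$ can enter a strongly admissible set because the mutual attack between $e$ and $f$ makes every defense of $e$, $f$, and hence of $d$ and $c$, circular rather than grounded in the empty set. Once this is checked and $\scom(\BF)=\emptyset$ is confirmed, all remaining steps are direct unfoldings of the definitions and require no delicate reasoning.
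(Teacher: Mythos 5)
Your proposal is correct, and its overall structure matches the paper's proof exactly: the positive claims $\pM$, $\pSTR$, $\pCC$ are discharged by citing Propositions~\ref{prop_stadm_modularization} and~\ref{prop_stadm_properties}, and the negative claims by counterexample. The single point of divergence is $\pNE$: you derive its failure from Example~\ref{ex:counter ex für unique max stradm} via $\scom(\BF)=\emptyset$ (which is valid, since $\pNE$ demands that \emph{all} members of the family, including the complete variant, be non-empty, and the paper itself confirms $\scom(\BF)=\emptyset$ for that framework), whereas the paper instead exhibits a separate one-argument BSAF $\BF=\tuple{\{a\},\{\tuple{\emptyset,a}\},\{\tuple{\emptyset,a}\}}$ in which even $\sadm(\BF)=\emptyset$, because $\emptyset$ is not closed and $\{a\}$ is not conflict-free. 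Your route is more economical (one counterexample for all six failures); the paper's dedicated example is stronger in that it shows the collapse already at the level of $\sadm$ itself, not merely of $\scom$. Your verification that $\sad(\BF)=\{\emptyset,\{a\},\{b\}\}$ (no strongly defended set can contain $e$ or $f$, hence none can contain $d$ or $c$, and $\{a,b\}$ fails closedness) is sound and is precisely the bookkeeping the paper leaves implicit in the example.
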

\begin{proof}\hphantom{asd}
    \begin{itemize}
        \item $\pM$ holds for $\sad$ due to Prop.~\ref{prop_stadm_modularization}.
            The proof for $\sigma\in\{\comp,\grd,\pref\}$ is analogous to the proof of modularization for standard BSAF semantics, cf.\ Proposition~\ref{prop:modularization normal semantics}.
        \item $\pSTR$ holds due to Prop.~\ref{prop_stadm_properties}.
        \item $\pCC$ hold due to Prop.~\ref{prop_stadm_properties}.
        \item The unsatisfiability of $\pNE$ is due to the following BSAF: $\BF=\tuple{A,R,S}$, where $A=\{a\}$, $R=S=\{\tuple{\emptyset,a}\}$.
        \item The unsatisfiability of $\pF$, $\pSR$, $\pURM$, $\pUM$ and $\pMC$ is due to the counter-example Exm.~\ref{ex:counter ex für unique max stradm}.
    \end{itemize}
\end{proof}

\propPropertiesStrongAdmABA*
\begin{proof}
    The results regarding $\pM$ $\pSTR$, $\pCC$, $\pF$, $\pSR$,
    $\pNE$, $\pMC$, $\pUM$, and $\pURM$ are follow directly from Theorem~\ref{prop:properties_strong_adm}.
\end{proof}

\propPropertiesStrongAdmSETAF*
\begin{proof}
    \begin{itemize}
        \item The satisfaction of $\pM$ $\pSTR$, and $\pCC$ follows directly from Thm.~\ref{prop:properties_strong_adm}.
        \item $\pF$: Given SETAF $\SF=\tuple{A,R}$, $E\in\sad(\SF)$ and $a\in A$, \suth $E$ defends $a$.
        Surely, $E\cup\{a\}$ is conflict-free. Using Prop.~\ref{prop:stadmDefinitions}, we get: 
        there exists a finite sequence of pairwise disjoint sets $E_1,...,E_n$ such that $E_1=\emptyset$, $E=\bigcup\limits_{i=1}^n E_i$ and for each $i\geq 1$ it holds that $E_i$ is defended by $\bigcup\limits_{j<i} E_j$.

        Choose $E_{n+1}=\{a\}$, then we have the same statement for $E\cup\{a\}$, hence, $E\cup\{a\}\in\sad(\SF)$.
        \item $\pNE$: $\emptyset\in\sad(\SF)$ is guaranteed, hence $\sad(\SF)\neq\emptyset$, and hence also $\spref(\SF)\neq\emptyset$. Further $\Gamma$ is monotonous and $\SF$ finite, hence $\scom(\SF)\neq\emptyset$ and therefore $\sgrd(\SF)\neq\emptyset$.
        \item $\pSR$: Assume toward contradiction $E\in\spref(\SF)$, \suth $E$ defends $a$ and $E\notin\scom(\SF)$. Then $E\cup\{a\}$ admissible, hence $E$ is not $\subseteq$-maximal in $\sad(\SF)$, contradiction!
        \item $\pUM$ and $\pURM$ can be proven in the same manner. In SETAF the following implication holds: $E_1,E_2\in\sad(\SF)\Rightarrow E_1\cup E_2\in\sad(\SF)$. Hence assuming two different $\subseteq$-maximal strongly admissible sets, directly leads to contradiction.
    \end{itemize}
\end{proof}

\section{Omitted Proofs of Section~\ref{sec:weak}}

\propRestrictionToClosedAttackers*
\begin{proof}
    Clearly $E$ is conflict-free and closed for both directions, it is left to show that $E\in\wadm( \BF)\Rightarrow\forall E'\ \text{closed},E'\text{attacks }E: E'\setminus E\notin\wadm( \BF^E)\vee E\text{attacks }E'$ and the reverse is true. We show this by contraposition.
    \begin{description}
    \item[($\Rightarrow$)] Suppose there exists some closed $E'$ attacking $E$ with $E'\setminus{E}\in\wadm( \BF^E)$ and $E$ does not attack $E'$. Then, since $E$ is closed, $E'\cap\cl(E)^+=\emptyset$ and there exists some attack $(T,h)\in R$ with $T\subseteq E',h\in E$ and $T\cap\cl(E)^+=\emptyset$. Furthermore, since $T\subseteq E$ it follows $T\cap A'\subseteq E'\cap A'=E'\setminus E$. Since $E'\setminus E\in\wadm( \BF)$, $E$ is attacked by a weakly admissible set from the reduct and therefore $E$ is not weakly admissible.
    \item[($\Leftarrow$)] Suppose $E\notin\wadm( \BF)$, but $E$ is conflict-free and closed. Then there exists an attack $(T,h)\in R$ such that $h\in E$, $T\cap E^+=\emptyset$ and $T\setminus E\subseteq E^*$ for some $E*\in\wadm( \BF^E)$. Since $E^*\in\wadm( \BF^E)$, it is conflict-free and closed in $ \BF^E$, therefore by Definition \ref{def_bsaf_reduct}, there is no $(T',g)\in S$ such that $g\in E^+$ and $T'\subseteq E\cup E^*$. So $E'=E\cup E*$ is closed in $ \BF$ and thus a closed set$E'$ attacking $E$ with $E'\setminus E=E^*\in\wadm( \BF)$ exists.
    \end{description}
\end{proof}

\PropModularizationWeakAdm*

\begin{proof}
        \begin{description}
        \item[$\sigma=\wadm$] 
        	(1) We show this by induction over the number of arguments in $\BF$. The base case (empty BSAF) is trivial. So suppose the claim holds for each BSAF with $n = |A|$ and we are given $\BF = (A,R,S)$ where $|A|=n+1$. 
        	Suppose $E\in\adm^w(\BF)$ and $E'\in\adm^w(\BF^E)$. 
        	We show weak admissibility of $E\cup E'$ in $\BF$. 
        	        	
        	For the special case $E=\cl(E)=E^+= \emptyset$, we have $\BF^E=\BF$, so $E\cup E'=E'\in\wadm(\BF)$.  
        	
        	In any other case $\BF^E$ contains strictly less arguments and the induction hypothesis applies to $\BF^E$. 
        	
        	(closed) Since $E,E'$ are closed and conflict-free in $\BF$, resp. $\BF^E$, by Proposition \ref{prop:reduct-preserves-closed} $E\cup E'$ is closed.  
        	
        	(conflict-free) 
        	Striving for a contradiction suppose there is some 
        	$(T,h)\in R$ with $h\in E\cup E'$ s.t.\ 
        	$T\subseteq E\cup E'$. We use that $E=\cl(E)$ when talking about $\BF^E$.
        	
        	(case 1: $h\in E$) 
        	In this case, $E\cup E'$ attacks $E$.  
        	If $T\subseteq E$ this contradicts conflict-freeness of $E$. 
        	So assume $T\nsubseteq E$. 
        	Next we argue that $T\cap E^+ = \emptyset$: 
        	i) if $E$ attacks $t\in T$ with $t\in E$, then $E$ is again not conflict-free; contradiction; 
        	ii) if $E$ attacks $t\in T$ with $t\in E'$, then $E'$ does not occur in the reduct as a full set; contradiction. 
        	Consequently, we have $T\cap E^+ = \emptyset$. 
        	Moreover, $T\cap A'\subseteq E'$ holds in the reduct where $E'\in\adm^w(\BF^E)$ by definition. This implies $E\notin \adm^w(\BF)$; a contradiction. 
        	
        	(case 2: $h\in E'$) 
        	If $T\subseteq E$, then some argument in $E'$ does not occur in $\BF^E$; contradiction. 
        	If not, then we construct an attack $(T\setminus E,h)$ in the reduct, because as we saw above, $T\cap E^+ = \emptyset$ holds and $h\in E'\subseteq  A'$. 
        	But then $E'\notin \cf(\BF^E)$; contradiction. 
        	
        	We deduce $E\cup E'\in\cf(\BF)$. 
        	
        	(weak admissibility)
        	Suppose to the contrary there is some $E''\in\adm^w(\BF^{E\cup E'})$ attacking $E\cup E'$, \ie\, there exists $(T,h)\in R$ such that $T\subseteq E''$ and $h\in E\cup E'$. We use that $E=\cl(E)$ and $E\cup E'=\cl(E\cup E')$ when talking about $\BF^E,\BF^{E\cup E'}$.
        	
        	(case 1: $h\in E$)
        	We have $\BF^{E\cup E'}=(\BF^E)^{E'}$ by Lemma \ref{prop_bsaf_reduct_union}.
        	
        	Now, by the induction hypothesis, modularization holds in $\BF^E$, so since $E'\in \wadm(\BF^E)$ and $E''\in\wadm((\BF^E)^{E'})$, we have $E'\cup E''\in\wadm(\BF^E)$. So $E$ is attacked by a weakly admissible extension from its reduct, but $E$ is weakly admissible. Contradiction. 
        	
        	(case 2: $h\in E'$) 
        	Again 
        	$\BF^{E\cup E'} = (\BF^E)^{E'}$ by Lemma \ref{prop_bsaf_reduct_union}, 
        	and consequently, by modularization in the smaller BSAF $\BF^E$, 
        	$E''\in\adm^w((\BF^E)^{E'})$; this, however, is a contradiction to $E'\in\adm^w(\BF^E)$. 
        	
        	(2)We show that
        	if $E\in\wadm(\BF)$ and $E\cup E'\in\wadm(\BF)$, then $E'\in\wadm(\BF^E)$. 
        	
        	(closed) Since $E,E\cup E'$ are closed and conflict-free in $\BF$, by Proposition \ref{prop:reduct-preserves-closed} $E'$ is closed in $\BF^E$.  
        	        	
        	(conflict-free)
        	Suppose $(T',h)\in R^E$ occurs in the reduct $\BF^E$ with $h\in E'$ and $T'\subseteq E'$. 
        	Then either there exists $(T,h)\in R$ of the form $T' = T\setminus E$ and consequently, $T\subseteq E\cup E'$ in $\BF$ which implies $E\cup E'\notin\cf(\BF)$; a contradiction. 
        	
        	Or there exists a support $(T'\cup\{h\},a)$ with $a\in E^+$, then $E\cup E'$ cannot be both closed and conflict-free. Contradiction.    
        	
        	(weak admissibility)
        	Suppose $E''\in\adm^w( (\BF^E)^{E'} )$ attacks $E'$. Since $E'$ is both closed and conflict-free, by Lemma \ref{prop_bsaf_reduct_union}, we have $\BF^{E\cup E'}=(\BF^E)^{E'}$.
        	Therefore $E''\in\adm^w(\BF^{E\cup E'})$ attacks $E\cup E'$ which contradicts 
        	$E\cup E'\in\adm^w(\BF)$. 
        	
        \item[$\sigma=\wpref$] (1) Let $E\in\wprf(\BF)$ and $E'\in\wprf(\BF^E)$. Then by modularization of $\wadm$ we have $E\cup E'\in\wadm(\BF)$ is a weakly admissible superset of $E$. But $E$ is weakly preferred and therefore subsetmaximal, so $E\cup E'=E\in\wprf(\BF)$
        
        (2) If $E,E\cup E'\in\wpref(\BF)$, then $E'=\emptyset$ and by modularization of $\wadm$ it holds $E'\in\wadm(\BF^E)$. Suppose to the contrary there was some proper weakly admissible superset $E''\supset E'$ in $\BF^E$, then by modularization $E\cup E''\in\BF^E$, so $E\cup E'\notin\wpref(\BF^E)$. Contradiction.
        
        \item[$\sigma=\wcomp$] (1) Let $E\in\wcom(\BF)$ and $E'\in\wcom(\BF^E)$. Then by modularization of $\wadm$ we have $E\cup E'\in\wadm(\BF)$ is weakly admissible. Suppose now, there was some superset $X\supseteq E\cup E'$ which is weakly defended by $E\cup E'$, and $X\setminus (E\cup E')\neq\emptyset$. 
         Then for every closed set $Y$ attacking $X$ either $(E\cup E')\cap Y\neq \emptyset$ (Case 1), or $Y\setminus (E\cup E')$ has no superset $Y^*\in\wadm(\BF^{E\cup E'})$, and there exists some $X'$ such that $X\subseteq X'\in\wadm(\BF)$(Case 2).
         
         (Case 1) By definition of the reduct if $E\cup E'$ attacks some $Y$, since $E$ and $E'$ are closed and conflict-free, and $E^+_R\cap E'=\emptyset$, then either $Y\subseteq E^+_R$ or $E'$ attacks $Y\cap A^E$ in $\BF^E$. But then $E'$ weakly defends $X\setminus E$ in $\BF^E$, so $E'$ is not weakly complete, contradiction. 
         
         (Case 2) Then $X\setminus E$ is weakly defended by $E'$ in $\BF^E$, because by modularization $X'\setminus E\in\wadm(\BF^E)$ and any attacking set $Y\setminus E$ of $X\setminus E$ is not the subset of any $Y^*\in\wadm((\BF^E)^{ E'})$, since Prop.~\ref{prop_bsaf_reduct_union} applies. But then again, $E'$ is not weakly complete in $\BF^E$. Contradiction.  
                
                (2) If $E,E\cup E'\in\wcom(\BF)$, then by modularization of $\wadm$ it holds $E'\in\wadm(\BF^E)$. Suppose to the contrary there was some proper superset $X\supsetneq E'$, which is weakly defended by $E'$ in $\BF^E$. Then for every closed set $Y$ attacking $X$ either $E'^+_{R^E}\cap Y\neq \emptyset$ (Case 1), or $Y\setminus E'$ has no superset $Y^*\in\wadm((\BF^E)^{E'})$, and there exists some $X'\in\wadm(\BF^E)$ such that $X\subseteq X'$(Case 2).
                
                (Case 1) Since $E'$ is weakly admissible, it is conflict-free, so by definition of the reduct if $E'^+_{R^E}\cap Y\neq \emptyset$, then $(E\cup E')^+_R\cap Y\neq \emptyset$. But then $E\cup E'$ weakly defends $X$. Contradiction.
                
                (Case 2) By modularization $X'\cup E\in\wadm(\BF)$ with $E\cup E'\subseteq X\cup E\subseteq X'\cup E$ and any attacking set $Y$ of $X \cup E$ is not the subset of any $Y^*\in\wadm(\BF^{E\cup E'})$, since Prop.~\ref{prop_bsaf_reduct_union} applies. But then $E\cup E'$ weakly defends $X\cup E$, so $E\cup E'$ is not weakly complete. Contradiction.  
        
        \item[$\sigma=\wgrd$] (1) Let $E\in\wgrd(\BF)$ and $E'\in\wgrd(\BF^E)$. Then by modularization of $\wcom$ we have $E\cup E'\in\wcom(\BF)$ is a weakly complete superset of $E$. If $E'$ is the empty set we have $E\cup E'=E$ and are done. Suppose to the contrary that $E'\neq\emptyset$. Since $E\in\wadm(\BF)$, it holds that $\emptyset\in\wadm(\BF^E)$ (because anything supported by the emptyset in the reduct is supported by $E$ in $\BF$ and $E$ is closed). If $E'\in\wgrd(\BF^E)$ is not empty, then the empty set is not weakly complete in $\BF^E$. So there exists some $X$ which is weakly defended by the empty set in $\BF^E$. But then $X\cup E$ is weakly defended by $E$ (see proof of modularization for weakly complete semantics above), so $E$ is not weakly complete. Contradiction.
                
                (2) If $E,E\cup E'\in\wgrd(\BF)$, then $E'=\emptyset$ and by modularization of $\wcom$ it holds $E'\in\wcom(\BF^E)$. Since $E'$ is empty, it is minimal. 	
        \end{description}
        \end{proof}

The following proposition show that the weakly admissible semantics are faithfully generalized.
\begin{restatable}{proposition}{PROPwsemanticsgeneralizecorrect}
    Let $SF=(A,R)$ be a SETAF, and let $\BF=(A,R,\emptyset)$ the corresponding BSAF with empty supports. Then $\sigma(SF)=\sigma(\BF)$ for $\sigma\in\Sigma^w$.
\end{restatable}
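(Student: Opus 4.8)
The plan is to reduce everything to a single structural fact: when the support relation is empty, the BSAF reduct of Definition~\ref{def_bsaf_reduct} collapses to the SETAF reduct of Definition~\ref{def_setaf_reduct}. First I would observe that in $\BF=(A,R,\emptyset)$ we have $supp_{\BF}(E)=E$ for every $E$, hence $\cl_\BF(E)=E$; in particular every subset of $A$ is closed and $\cl(E)^+_R=E^+_R$, $\cl(E)^\oplus_R=E^\oplus_R$. Plugging this into Definition~\ref{def_bsaf_reduct} gives $A^E=A\setminus E^\oplus_R$, which is exactly the SETAF argument set. The first clause of $R^E$ (the one creating constraint self-attacks from supports into $\cl(E)^+_R$) is empty because $S=\emptyset$, and $S^E=\emptyset$ for the same reason. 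The remaining clause of $R^E$ becomes $\{(T\setminus E,h)\mid (T,h)\in R,\ T\cap E^+_R=\emptyset,\ h\in A^E\}$; this matches the SETAF reduct once one notes that the SETAF side condition $T\not\subseteq E$ is subsumed by $h\in A^E$, since $T\subseteq E$ would force $h\in E^+_R\subseteq E^\oplus_R$ and hence $h\notin A^E$. Thus $\BF^E$ is again the support-free BSAF induced by the SETAF reduct $SF^E$.

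With this reduct lemma in hand, I would prove $\wadm(SF)=\wadm(\BF)$ by strong induction on $|A|$. Conflict-freeness of a set agrees in $SF$ and $\BF$ since both use the same $A,R$, and the extra closedness requirement of Definition~\ref{def_setaf_wadm_two} is automatically met in a support-free BSAF, so the first condition coincides. For the recursive second condition, note that it only constrains attacks $(T,h)$ with $h\in E$; when $E=\emptyset$ it is vacuous in both definitions, and otherwise $E^\oplus_R\neq\emptyset$ (as $E\subseteq E^\oplus_R$) forces $|A^E|<|A|$, so the reduct lemma together with the induction hypothesis yields $\wadm(SF^E)=\wadm(\BF^E)$. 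Since $A^E$ and the test set $T\cap A^E$ are identical on both sides, the non-existence clause ``$\nexists E'\in\wadm(\cdot^E)$ with $T\cap A^E\subseteq E'$'' evaluates identically, giving $E\in\wadm(SF)\Leftrightarrow E\in\wadm(\BF)$. Weakly preferred then coincides for free, being the $\subseteq$-maximal elements of the same set $\wadm(SF)=\wadm(\BF)$.

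For $\wcomp$ and $\wgrd$ I would show that the weak-defense relation of Definition~\ref{def:weak defense} agrees with its SETAF counterpart. Because every set is closed in $\BF$, the quantifier ``for every closed $E'$ with $T\subseteq E'$'' ranges over exactly the supersets of the tail $T$, matching the SETAF notion; the two disjuncts (``$E$ attacks $E'$'' and the pair of conditions referring to $\wadm(\BF^E)$ and $\wadm(\BF)$) are evaluated using the already-established identities $\wadm(SF)=\wadm(\BF)$ and $\wadm(SF^E)=\wadm(\BF^E)$. Hence $E$ w-defends $X$ in $SF$ iff it does so in $\BF$, and Definition~\ref{def:wcom semantics} then gives $\wcomp(SF)=\wcomp(\BF)$ and, as its $\subseteq$-minimal members, $\wgrd(SF)=\wgrd(\BF)$.

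The main obstacle is the bookkeeping in the reduct-coincidence step: one must check that the support-triggered constraint attacks and the adjusted supports genuinely disappear when $S=\emptyset$, and that the differing side conditions ($T\not\subseteq E$ in the SETAF reduct versus $h\in A^E$ together with $T\cap E^+_R=\emptyset$ in the BSAF reduct) carve out exactly the same attacks. Once this equivalence of reducts is nailed down, all four semantics follow by the same induction, and the only remaining care is to confirm that the weak-defense definition specialises correctly to the support-free case, where closedness imposes no additional restriction.
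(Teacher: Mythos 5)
Your proof is correct and follows essentially the same route as the paper's: observe that with $S=\emptyset$ every set is closed and the BSAF reduct coincides with the SETAF reduct, after which all four weak semantics agree by their definitions. You merely make explicit what the paper leaves implicit (the subsumption of the side condition $T\not\subseteq E$ by $h\in A^E$, and the induction on $|A|$ needed for the recursive clause), which is a welcome elaboration but not a different approach.
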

\begin{proof}
    Since there are no supports present, every set $E\subseteq a$ in $\BF$ is closed, \ie\, $E=\cl(E)$, and $SF^E$ contains the same arguments and attacks as $\BF^E$. Given that, the equality follows directly from the definitions of the resp. semantics.
\end{proof}

\ThmSatisfiabilityWadm*

\begin{proof}\hphantom{asd}
    
    \begin{itemize}[align=left]
        \item[$\pPR$] 
        \begin{description}
        \item[($\sigma=\wadm$)] 
        Let $(T,h)$ be a paradoxical attack in some BSAF $\BF=(A,R,S)$ and let $\BF'=(A,R\setminus\{(T,h)\},S)$. We will prove $\wadm(\BF)=\wadm(\BF')$ by induction over the size of $A$, with $|A|=0$, the empty BSAF being the trivial base case. For the induction step let $|A|=n$.
        \begin{description}
        \item[$(\subseteq)$] Suppose $E\in\wadm(\BF)$, then, since we remove an attack, $E$ is still closed and conflict-free in $\BF'$. It is left to show that $E$ weakly defends itself, \ie\, there is no attack $(T^*,h^*)$ with $h^*\in E$ and $T^*\setminus E\subseteq E^*$ for some $E^*\in\wadm(\BF'^E)$. If $E=\emptyset$ no such $h^*$ exists, so the statement holds. For $E\neq\emptyset$ the reduct $\BF^E$ contains strictly less arguments than $\BF$, so it satisfies the induction hypothesis. We can now distinguish the following cases (since $E$ is conflict-free it cannot be the case that $T\subseteq E$):
        
        \begin{description}
           \item[($T\cap E^+\neq\emptyset$ or $h\in E^\oplus$)] Then the attack is deleted in the reduct, so $BF^E=\BF'^E$ and thus $E^*\in\wadm(\BF'^E)$ iff $E^*\in\wadm(\BF^E)$ for any such $(T^*,h^*)$.
        
           \item[($T\subseteq A^E$ and $h\in A^E$)] Then by definition of paradoxical attacks there exist attacks $(T',t)$ for each $t\in T$ with $T'\subseteq T$, so $(T',t)\in R^E$. So $(T,h)$ is a paradoxical attack in $\BF^E$ and furthermore $\BF'^E=(A^E,R^E\setminus\{(T,h)\},S^E)$. By the induction hypothesis it therefore holds that $E^*\in\wadm(\BF'^E)$ iff $E^*\in\wadm(\BF^E)$ for any such $(T^*,h^*)$.
        
           \item[($T\cap E\neq\emptyset$ and $h\in A^E$)] Then $(T\setminus E,h)$ is a paradoxical attack in $\BF^E$, since for every $(T',t)\in R$ attacking a $t\in T$, as defined for paradoxical attacks, we have $(T'\setminus E,t)\in R^E$. In particular, since $E$ is conflict-free, for any such $T'$ we have $T'\cap E\neq \emptyset$. Furthermore, since deleting $(T,h)$ does not change the set $E^+$ we have $BF'^E=(A^E,R^E\setminus\{(T\setminus E,h)\},S^E)$, so by the induction hypothesis $E^*\in\wadm(\BF'^E)$ iff $E^*\in\wadm(\BF^E)$ for any such $(T^*,h^*)$.
        \end{description}
        
        \item[($\supseteq$)] Suppose $E\in\wadm(\BF')$. Then $E$ is closed in $\BF$, because no supports were changed, and $E$ is conflict-free, because if we suppose $T\subseteq E$, then there exists an attack $(T',t)\in R$ with $T'\subseteq E, t\in E$ and $h\notin T'$, so $E$ is not conflict-free. Contradiction. It is again left to show that $E$ weakly defends itself, \ie\, there is no attack $(T^*,h^*)$ with $h^*\in E$ and $T^*\setminus E\subseteq E^*$ for some $E^*\in\wadm(\BF^E)$. If $E=\emptyset$ no such $h^*$ exists, so the statement holds. For $E\neq\emptyset$ the reduct $\BF^E$ contains strictly less arguments than $\BF$, so it satisfies the induction hypothesis. We distinguish the same cases as above for all of which it holds that $E^*\in\wadm(\BF'^E)$ iff $E^*\in\wadm(\BF^E)$ for any such $(T^*,h^*)$. So $E\in\wadm(\BF)$.
        
        \end{description}
        
        Now let $(T,h)\in S$ be a paradoxical support, $\BF'=(A,R,S\setminus\{(T,h)\})$. We will prove $\wadm(\BF)=\wadm(\BF')$ by induction over the size of $A$, with $|A|=0$, the empty BSAF being the trivial base case. For the induction step let $|A|=n$.
        
        \begin{description}
        \item[$(\subseteq)$] Let $E\in\wadm(\BF)$. Then $E$ is closed and conflict-free in $\BF'$ because deleting a support does not add arguments to the closure of a set.It is left to show that $E$ weakly defends itself, \ie\, there is no attack $(T^*,h^*)$ with $h^*\in E$ and $T^*\setminus E\subseteq E^*$ for some $E^*\in\wadm(\BF'^E)$. If $E=\emptyset$ no such $h^*$ exists, so the statement holds. For $E\neq\emptyset$ the reduct $\BF^E$ contains strictly less arguments than $\BF$, so it satisfies the induction hypothesis. We can now distinguish the following cases (since $E$ is conflict-free it cannot be the case that $T\subseteq E$):
        
        \begin{description}
        \item[($T\cap E^+_R\neq\emptyset$ or $h\in E$)] Then $BF^E=BF'^E$ and consequently $E^*\in\wadm(\BF'^E)$ iff $E^*\in\wadm(\BF^E)$ for any such $(T^*,h^*)$.
        
        \item[($h\in E^+_R$)] Then for every $t\in T\setminus E$, which is not empty, we add an attack $(T\setminus E,t)$ in $\BF^E$ according to the definition of the reduct. These attacks do not necessarily occur in $\BF'^E$, let $R^*=R^E\setminus R'^E$ be the set of all added attacks, which are not present in $\BF^E$. We have $\BF'^E=(A^E,R'^E\cup R^*)$. We have to show $\wadm(\BF^E)=\wadm(\BF'^E)$. We proof this by induction over the size of $A^E$, with $|A^E|=0$ being the trivial base case. For the induction step let $|A^E|=n$. Now let $E'\in\wadm(\BF^E)$, then $E'$ is closed and conflict-free in $\BF'^E$. The same is true the other way around, if $E'\in\wadm(\BF'^E)$ then $E'$ is closed and conflict-free in $\BF^E$, because, since $(T,h)$ is a paradoxical support, by definition of the reduct there exist attacks $(T'\setminus E,t)\in R'^E$ for every $t\in T\setminus E$, so $T\setminus E\not\subseteq E'$. It is left to show, that $E'$ weakly defends itself in $\BF^E$ iff $E'$ weakly defends itself in $\BF'^E$.
        
         For that it suffices to show that if there exists an attack $(T'',a)\in R^E$ with $a\in E'$ and $T''\setminus E'\subseteq E''$ for some $E''\in\wadm((\BF^E)^{E'})$ then $(T'',a)\in R'^E$ and $E''\in\wadm((\BF'^E)^{E'})$ and the other way around.  If $E'=\emptyset$, then no such attack $(T'',a)$ can exist. Otherwise $(\BF^E)^{E'}$ and $(\BF'^E)^{E'}$ contain strictly less arguments than $\BF^E$ and $\BF'^E$, resp., so they satisfy the induction hypothesis and therefore $E''\in\wadm((\BF^E)^{E'})$ iff $E''\in\wadm((\BF'^E)^{E'})$. Furthermore, since $T''\subseteq E''$ and $E''$ is conflict-free, it cannot be the case that $T''=T'\setminus E$ for any of the additional attacks $(T'\setminus E,t)\in R^*$. So $(T'',a)\in R^E\cap R'^E$. 
         
        \item[($h\in A^E$ and $T\setminus E\subseteq A^E$)] Then $(T\setminus E,h)$ is a paradoxical support in $\BF^E$, because, for every $t\in T\setminus E$ there exists $(T'\setminus E,t)\in R^E$ with $T'\setminus E\neq\emptyset$, since $t\notin E^+$. So by the induction hypothesis $E^*\in\wadm(\BF'^E)$ iff $E^*\in\wadm(\BF^E)$ for any such $(T^*,h^*)$.
        \end{description}
        
        \item[$(\supseteq)$] Let $E\in\wadm(\BF')$. Then $T\not\subseteq E$, because $E$ is conflict-free and for every $t\in T$ there is some $T'\subseteq T$ with $(T',t)\in R$. Therefore $E$ is closed and conflict-free in $\BF$.It is left to show that $E$ weakly defends itself, \ie\, there is no attack $(T^*,h^*)$ with $h^*\in E$ and $T^*\setminus E\subseteq E^*$ for some $E^*\in\wadm(\BF'^E)$. If $E=\emptyset$ no such $h^*$ exists, so the statement holds. For $E\neq\emptyset$ the reduct $\BF^E$ contains strictly less arguments than $\BF$, so it satisfies the induction hypothesis. We can now distinguish the same cases as for the other direction $(\subseteq)$ and argue analogously.
        
        \end{description}
        
        \item[($\sigma=\wpref$)] Since weakly preferred extensions are the $\subseteq$ maximal weakly admissible extensions, this follows directly from the satisfaction of $\pPR$ by $\sigma=\wadm$.
        
        \item[($\sigma=\wcom$)] Let $(T,h)$ be a paradoxical attack. We show that $E\in\wadm(\BF)$ weakly defends some $X\supseteq E$ in $\BF$ iff $E$ weakly defends $X$ in $BF'$. We distinguish the following two cases:
        
        \begin{description}
        \item[(Case 1)] Suppose for all attacks $(T^*,h^*)$ on $X$ with $T^*\subseteq E^*$ for some closed $E^*$ it holds that $E$ attacks $E^*$ in $\BF$. Then, since $E$ is conflict-free and $T$ is not, none of the attacks from $E$ to $T^*$ coincide with $(T,h)$ so $E$ attacks $E^*$ in $\BF'$. Since we have more attacks in $BF$ than in $BF'$, the other direction is trivial for this case.
        
        \item[(Case 2)] Suppose there is at least one such $E^*$, which is not attacked by $E$ in $\BF'$, then it is also not attacked by $E$ in $\BF$ and vice versa. Furthermore, to all $E^{*'}$ containing other attacks on $X$ the reasoning of Case 1 still applies. It is left to show that 
        \begin{itemize}
        \item there is no $E^{**}\in\wadm(\BF^E)$ with $E^*\subseteq E\cup E^{**}$ iff there is no $E^{**}\in\wadm(\BF'^E)$ with $E^*\subseteq E\cup E^{**}$
        \item there is some $X'\in\wadm(\BF)$ such that $X\subseteq X'$ iff there is some $X'\in\wadm(\BF')$ such that $X\subseteq X'$
        \end{itemize}
        The second item follows directly from the satisfaction of $\pPR$ by weakly admissible semantics. For the first item, observe that in this case $(T\setminus E,h)$ is a paradoxical attack in $BF^E$, so again $\pPR$ for weak admissibility applies.
        \end{description}
        
        Now let $(T,h)$ be a paradoxical support. We show that $E\in\wadm(\BF)$ weakly defends some $X\supseteq E$ in $\BF$ iff $E$ weakly defends $X$ in $BF'$. We distinguish the following two cases:
                
                \begin{description}
                \item[(Case 1)] Suppose for all attacks $(T^*,h^*)$ on $X$ with $T^*\subseteq E^*$ for some closed $E^*$ it holds that $E$ attacks $E^*$ in $\BF$. Then, since $R=R'$, it holds that $E$ attacks $E^*$ in $\BF'$. We can use this argument for the other direction as well.
                
                \item[(Case 2)] Suppose there is at least one such $E^*$, which is not attacked by $E$ in $\BF'$, then it is also not attacked by $E$ in $\BF$ and vice versa. Furthermore, to all $E^{*'}$ containing other attacks on $X$ the reasoning of Case 1 still applies. It is left to show that 
                \begin{itemize}
                \item there is no $E^{**}\in\wadm(\BF^E)$ with $E^*\subseteq E\cup E^{**}$ iff there is no $E^{**}\in\wadm(\BF'^E)$ with $E^*\subseteq E\cup E^{**}$
                \item there is some $X'\in\wadm(\BF)$ such that $X\subseteq X'$ iff there is some $X'\in\wadm(\BF')$ such that $X\subseteq X'$
                \end{itemize}
                The second item follows directly from the satisfaction of $\pPR$ by weakly admissible semantics. For the first item, we distinguish the same cases as in the proof of $\pPR$ for $\adm^w$, for each of these cases we show $\wadm(\BF^E)=\wadm(\BF'^E)$, the first item follows directly from that.
                \end{description}
        
        \item[($\sigma=\wgrd$)] Since weakly grounded extensions are the $\subseteq$ minimal weakly complete extensions, this follows directly from the satisfaction of $\pPR$ by $\sigma=\wadm$.
        
        \end{description}
        
  \item[$\pM$] Follows directly from Prop.\ref{prop:modularization wadm}
        \item[$\pP$] Consider a BSAF $\BF$ containing an argument $a$, that attacks itself, \ie\, $\tuple{\{a\},a}\in R$, which is also supported by the empty set, $\tuple{\emptyset,a}\in S$. Then, $\wadm(\BF)=\emptyset\neq\{\emptyset\}=\wadm(\BF\downarrow_{A\setminus\{a\}})$.
        \item[$\pF$] The fundamental lemma is not satisfied: Consider an BSAF with arguements $a,b,c$, support $(\{a\},c)\in S$ and attack $(\{b\},c)\in R$. $a$ defends $b$ but $\{a,c\}$ is not closed, hence not adm.
        \item[$\pL$] The proof is analogous to the proof in~\cite{BlumelKU24}[Proposition~30]. Let $E$ be admissible in $F$. By definition, $E$ is conflict-free and closed, moreover, item 2 from Definition~\ref{def:WADMforABA} is satisfied since $T\cap E^+_R\neq \emptyset$ for each attacker $T$. Thus, $E$ is weakly admissible.
        \item[$\pNE$] is not satisfied: Consider an ABAF with two assumptions $a,b$, support $(\emptyset,a)$ and attack $(b,a)$. Then $\emptyset$ is not closed.
        \item[$\pMC$] Consider the BSAF used in the proof for the fundamental lemma above: $\BF=\tuple{\{a,b,c\},R,S}$, \suth $R=\{\tuple{b,c}\}$ and $S=\{\tuple{a,c}\}$. $\BF$ has two weakly admissible extensions, $\wadm(\BF)=\{\emptyset,\{b\}\}$, and therefore one weakly preferred extension, $\wpref(\BF)=\{\{b\}\}$. Yet $\BF$ has no complete extension.
        \item[$\pSR$] Consider the same counter-example as for $\pMC$, $\BF$ has a preferred, but no complete extensions.
    \end{itemize}    
\end{proof}

\section{On Weak and Strong $\Gamma$-Semantics (Omitted Proofs of Section~\ref{sec:FL-sensitive semantics})}
We discuss $\Gamma$-admissible semantics. 
Before delving into the proof details for strongly $\Gamma$-admissible semantics, 
we briefly discuss why the adaption of the reduct for weakly $\Gamma$-admissible 
semantics, despite sounding promising at first glance, does not recover the properties that are necessary to define a meaningful notion of weak admissibility. 

A possible adjustment of the reduct is to replace the original closure notion with $\Gamma$-closure.
\begin{definition}\label{def_bsaf_reduct_gamma}
	Given a BSAF $\BF=(A,R,S)$ and $E\subseteq A$, the \emph{$E$-$\Gamma$-reduct of $\BF$} is the BSAF $\BF^E=(A^E,R^E,S^E)$, with
	\begin{align*}
		A^E = &\ A \setminus (\cl_{\Gamma}(E)^\oplus_R)\\
		R^E = &\ \{(T\setminus \cl_{\Gamma}(E),t) \mid  \exists h\in \cl_{\Gamma}(E)^+_R: (T,h)\in S,\\
		& \phantom{\ \{(T\setminus \cl_{\Gamma}(E),t) \mid\quad}   t\in T\cap A^E\}\, \cup  \\
		&\ \{(T\setminus \cl_{\Gamma}(E), h) \mid  T\cap \cl_{\Gamma}(E)^+_R=\emptyset, \\
		& \phantom{\ \{(T\setminus \cl_{\Gamma}(E),t) \mid\quad} (T,h)\in R^E, h\in A^E
		\}\\
        S^E = 
		&\ \{(T\setminus \cl_{\Gamma}(E), h) \mid  T\cap \cl_{\Gamma}(E)^+_R=\emptyset, \\
		& \phantom{\ \{(T\setminus \cl_{\Gamma}(E),t) \mid\quad} (T,h)\in S, h\in A^E
		\}
	\end{align*}
\end{definition}
 As it turns out, however, the adaption does not satisfy modularization. 
We provide a counter-example for the $\Gamma$-reduct variant. 
\begin{example}\label{exm:bsaf-appFL}
Consider the following BSAF $\BF=\tuple{A,R,S}$ and its $\Gamma$-reduct wrt.\ $E=\{a,c\}$ below.
Note that $d$ is not in the $\Gamma$-closure of $E$ since it is not defended by $E$.
\vspace{-10pt}
\begin{center}
		\begin{tikzpicture}[scale=0.8,>=stealth]
		\path
        (\distance-1.1,.85)node (D){}
        (0,0)node[arg] (b){$b$}
		(\distance,0)node[arg] (c){$c$}
		(.5*\distance,-\distance*0.866)node[arg] (e){$e$}
		(-.5*\distance,-\distance*0.866)node[arg] (a){$a$}
  		(1.5*\distance,-\distance*0.866)node[arg] (d){$d$}
        ;
        
\path[thick,->,ForestGreen]
(e)edge(d)
;

\path[thick,->,cyan,dashed]
(c)edge(d)
;

\path[thick,->,magenta]
(a)edge[out=35,in=150](e)
(b)edge[out=-95,in=150](e)
;

\begin{scope}[xshift=5cm]
		\path
        (\distance-1.1,.85)node (D){}
        (0,0)node[arg] (b){$b$}
		(.5*\distance,-\distance*0.866)node[arg] (e){$e$}
  		(1.5*\distance,-\distance*0.866)node[arg] (d){$d$}
        ;
        
\path[thick,->,ForestGreen]
(e)edge(d)
;

\path[thick,->,magenta]
(b)edge[out=-95,in=150](e)
;

\end{scope}	
		\end{tikzpicture}
	\end{center}
$E$ is $\Gamma$-closed since $E$ does not defend $d$ although it is in the closure of $E$.  
Let $E'=\{b\}$.
$E$ is weakly $\Gamma$-admissible in $\BF$, $E'$ is weakly $\Gamma$-admissible in $\BF^E$.
However, $E\cup E'$ is not weakly $\Gamma$-admissible in $\BF$.
\end{example}
In the remaining part of this section, we discuss the ommitted proofs of Section~\ref{sec:FL-sensitive semantics}.

\begin{restatable}{lemma}{lemSTRGamma}
$\sgadm(\BF)\subseteq \gadm(\BF)$ for each $\BF$. 
\end{restatable}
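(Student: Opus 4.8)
The plan is to unfold both definitions and reduce the claim to a single monotonicity observation. By definition, a set $E\in\sgadm(\BF)$ is conflict-free, $\Gamma$-closed, and strongly defended ($E\in\sd(\BF)$), whereas membership in $\gadm(\BF)$ requires exactly conflict-freeness, $\Gamma$-closedness, and that $E$ defends itself. Since the first two conditions are literally shared between the two notions, the only thing left to verify is that $E\in\sd(\BF)$ entails that $E$ defends each of its members.

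First I would fix an arbitrary $a\in E$. By Definition~\ref{def_pstadm_recursive} there is a strongly defended subset $E_a\subseteq E\setminus\{a\}$ that defends $a$. The key step is to upgrade this local defense to a defense by all of $E$: defense is monotone (equivalently, the characteristic function $\Gamma$ from Definition~\ref{def:BSAF gamma and more} is monotone), so since $E_a\subseteq E$, any closed attacker $E'$ of $a$ that is attacked by $E_a$ is also attacked by $E$. Concretely, an attack witnessed by some $(T,h)\in R$ with $T\subseteq E_a$ and $h\in E'$ is witnessed by the very same pair under the weaker requirement $T\subseteq E$. Hence $E$ defends $a$, and since $a\in E$ was arbitrary, $E$ defends itself.

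Combining this with the conflict-freeness and $\Gamma$-closedness inherited directly from the hypothesis $E\in\sgadm(\BF)$ yields $E\in\gadm(\BF)$, which establishes the inclusion. This argument is the $\Gamma$-closure analogue of the proof of Proposition~\ref{prop_stadm_properties}(1) ($\sadm(\BF)\subseteq\adm(\BF)$), and I do not expect a substantial obstacle. The one point requiring a moment's care is simply to record that both $\sgadm$ and $\gadm$ are built on the \emph{same} notion of $\Gamma$-closedness, so that no reconciliation between ordinary closedness and $\Gamma$-closedness is needed and the monotonicity step transfers verbatim.
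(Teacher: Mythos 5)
Your proof is correct and follows essentially the same route as the paper: both reduce the claim to the observation that strong defense implies (ordinary) self-defense via monotonicity of attack/defense, with conflict-freeness and $\Gamma$-closedness carried over verbatim from the definition of $\sgadm$ to that of $\gadm$. The paper states this in one line; your expansion, mirroring the proof of Proposition~\ref{prop_stadm_properties}(1), fills in exactly the intended details.
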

\begin{proof}
Follows since $E$ strongly defends $a$ implies $E$ defends $a$, for each set of arguments $E$ and argument $a$.
\end{proof}

\PROPsgadmNonEmpty*
\begin{proof}
Follows from $\cl_{\Gamma}(\emptyset)$ being strongly admissible. One can argue for that analogously to the proof of Prop. \ref{prop:weak FL for sadm and sgadm} via iteration of $\Gamma$-closure, let $E=S_1=\emptyset$.
\end{proof}

\PROPstgadmDefinitions*
\begin{proof} The proof is analogous to the case of strong admissibility, cf. Prop.~\ref{prop:stadmDefinitions}, since \mbox{($\Gamma$-)closure} of a strongly \mbox{($\Gamma$-)}admissible set is only required in the final step and thus does not influence the iteration. 
\end{proof}

\PropWeakFundamentalLemmaStrongGamma*
\begin{proof}
    $E$ is strongly $\Gamma$-admissible, thus there is a sequence $E_1,\dots,E_n$ satisfying the conditions in the constructive definition in Proposition~\ref{def_stgadm_constructive}. We continue the sequence as follows:
    
    Let $\cl_\Gamma^p(E)=\cl(E)\cap \Gamma(E)$ denote a single iteration of $\Gamma$-support. 
    We define $S_1,\dots, S_m$ with 
    $S_1=\{a\}$ and
    $S_i=\cl^p_\Gamma(\bigcup_{j<i} S_{j}\cup E)\setminus (\bigcup_{j<i} S_{j}\cup E)$ for all $1<i\leq m$;
    and $\cl^p_\Gamma(\bigcup_{j\leq m} S_{j}\cup E)\setminus (\bigcup_{j\leq m} S_{j}\cup E)=\emptyset$.
    
    Let $E'=\bigcup_{j\leq m} S_{j}\cup E$.
    $E'$ is $\Gamma$-closed. 
    Moreover, for all $1\leq i,j\leq m$, 
    $S_i,S_j$ are pairwise disjoint and
    $S_i$ is defended by $\bigcup_{j<i}S_i\cup E$.
    Thus, the sequence $E_1,\dots,E_n, S_1,\dots, S_m$ for $E'$ satisfies the conditions in the constructive definition in Proposition~\ref{def_stgadm_constructive}.
\end{proof}

\begin{restatable}{proposition}{PROPsgadmProperties}
    \label{prop_stgadm_properties}
    Let $ \BF=(A,R,S)$ be a BSAF. Then,
    \begin{enumerate}
    \item If $E_1,E_2\in\sgadm(F)$, then there is $E\in \sgadm(F)$ s.t.\ $E_1\cup E_2\subseteq E$
    \item The $\subseteq$-maximal strongly $\Gamma$-$\adm$ set is unique $\pUM$ 
	\item Each admissible set has a unique biggest (wrt.\ set-inclusion) strongly $\Gamma$-admissible subset $\pURM$
    \end{enumerate}
\end{restatable}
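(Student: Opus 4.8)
The plan is to prove item~(1) first and derive $\pUM$ and $\pURM$ from it. The workhorse is the weakened fundamental lemma (Proposition~\ref{prop:weak FL for sadm and sgadm}), which I would first lift from single arguments to sets: if $E\in\sgadm(\BF)$ defends every element of a set $X$, then there is $E'\in\sgadm(\BF)$ with $E\cup X\subseteq E'$. This follows by adding the elements of $X$ one at a time, invoking Proposition~\ref{prop:weak FL for sadm and sgadm} at each step and using that defense is monotone (larger sets defend at least as much, since $\Gamma$ is monotone), so the elements of $X$ not yet added remain defended after each extension.

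To prove item~(1), fix $E_1,E_2\in\sgadm(\BF)$ and take a constructive sequence $F_1=\emptyset,F_2,\dots,F_k$ for $E_2$ as given by Proposition~\ref{def_stgadm_constructive}, so that $E_2=\bigcup_i F_i$ and each $F_i$ is defended by $\bigcup_{j<i}F_j$. I would fold this sequence into $E_1$: set $G_0:=E_1$, and, having built $G_{i-1}\in\sgadm(\BF)$ with $\bigcup_{j\le i}F_j\subseteq G_{i-1}$, observe that $\bigcup_{j\le i}F_j$ defends $F_{i+1}$, hence by monotonicity so does $G_{i-1}$; the lifted weak lemma then yields $G_i\in\sgadm(\BF)$ with $G_{i-1}\cup F_{i+1}\subseteq G_i$, maintaining the invariant. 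The final set $G_{k-1}\in\sgadm(\BF)$ contains $E_1\cup E_2$. Conflict-freeness of the union comes for free, since each $G_i$ is conflict-free by being strongly $\Gamma$-admissible; this is precisely where the $\Gamma$-closure variant succeeds while ordinary strong admissibility fails (cf.\ Example~\ref{ex:counter ex für unique max stradm}). Item~(2) is then immediate: $\sgadm(\BF)\neq\emptyset$ by Proposition~\ref{prop:sgadm non-empty} and $A$ is finite, so a $\subseteq$-maximal strongly $\Gamma$-admissible set exists; if $E,E'$ were two such, item~(1) gives $E''\in\sgadm(\BF)$ with $E\cup E'\subseteq E''$, and maximality forces $E=E''=E'$.

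For item~(3), fix an admissible $E$ and consider $\mathcal S_E:=\{F\in\sgadm(\BF)\mid F\subseteq E\}$; it is non-empty since $\cl_\Gamma(\emptyset)\in\sgadm(\BF)$ (Proposition~\ref{prop:sgadm non-empty}) and $\cl_\Gamma(\emptyset)\subseteq\cl(\emptyset)\subseteq\cl(E)=E$, using that $E$ is closed. The key relativized claim is that $\mathcal S_E$ is directed. For $F_1,F_2\in\mathcal S_E$, the union $F_1\cup F_2$ is conflict-free (being a subset of the conflict-free $E$) and strongly defended (each member is strongly defended inside $F_1$ or $F_2$, hence inside $F_1\cup F_2$); moreover $\cl_\Gamma(F_1\cup F_2)\subseteq\cl(F_1\cup F_2)\subseteq\cl(E)=E$. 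Running the $\Gamma$-closure iteration while extending the strong-defense sequence of Lemma~\ref{le:alt reduct preserves strong defense}/Proposition~\ref{def_stgadm_constructive} (each newly $\Gamma$-supported argument is by definition defended by the current set, and the final set is $\Gamma$-closed) shows $\cl_\Gamma(F_1\cup F_2)\in\sgadm(\BF)$, so it lies in $\mathcal S_E$ and contains both $F_1$ and $F_2$. Finiteness of $A$ then yields a unique maximum of $\mathcal S_E$, which is the required unique biggest strongly $\Gamma$-admissible subset.

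The main obstacle throughout is item~(1) together with the directedness step: one must chain the weakened fundamental lemma correctly along the constructive sequence and, for item~(3), confine the $\Gamma$-closure to the ambient admissible set $E$. This is delicate because—unlike ordinary strong admissibility, where all strongly admissible sets sit inside a common (grounded) extension—the plain union $F_1\cup F_2$ need be neither $\Gamma$-closed nor, without an ambient admissible witness, conflict-free. The $\Gamma$-closedness (rather than closedness) requirement and the weak fundamental lemma are exactly what make the union repairable into a strongly $\Gamma$-admissible superset.
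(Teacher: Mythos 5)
Your proof is correct, but it reaches item~(1) by a genuinely different route than the paper. The paper's proof first establishes, directly, that $E_1\cup E_2$ is conflict-free: supposing an attack between the two sets, it traces defenses back along \emph{both} constructive sequences (Proposition~\ref{def_stgadm_constructive}), producing strictly descending indices in each sequence, which is impossible since the sequences are finite and bottom out at $E_i^1=\emptyset\subseteq\Gamma(\emptyset)$; only after this descent argument does it build the enclosing extension by iterating the $\Gamma$-closure ``analogous to the proof of Proposition~\ref{prop:weak FL for sadm and sgadm}''. You never argue conflict-freeness of $E_1\cup E_2$ separately: you treat Proposition~\ref{prop:weak FL for sadm and sgadm} as a black box, lift it to sets via monotonicity of defense, and chain it along the constructive sequence of $E_2$, so every intermediate set $G_i$ is strongly $\Gamma$-admissible --- hence conflict-free --- by the lemma itself, and conflict-freeness of $E_1\cup E_2$ falls out as a corollary rather than serving as a prerequisite. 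Your version is more modular and avoids the delicate double-descent; the paper's version buys the stronger standalone fact that no attacks can exist between any two strongly $\Gamma$-admissible sets. Item~(2) is essentially identical in both. For item~(3) you are actually more careful than the paper, whose entire argument is ``apply (1) for all strongly $\Gamma$-admissible sets in $E$''; read literally this leaves a gap, since the superset produced by (1) need not stay inside $E$. Your relativized argument --- conflict-freeness of $F_1\cup F_2$ inherited from the ambient conflict-free $E$, and $\cl_\Gamma(F_1\cup F_2)\subseteq\cl(F_1\cup F_2)\subseteq\cl(E)=E$ by closedness of $E$ --- closes exactly that gap, and makes explicit what the paper's construction delivers only once one observes that the set built in (1) is $\cl_\Gamma$ of the union.
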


\begin{proof}
    \hphantom{asd}
    \begin{enumerate}
    \item  $E_1$ and $E_2$ are conflict-free: 
    By definition, $E_i\in\sgadm(\BF)$ implies that there are disjoint sets $E^1_i,\dots,E_i^{n_i}$, \suth $E_i^1=\emptyset$, $E_i=\bigcup\limits_{j=1}^{n_i} E_i^j$ and for each $j\geq 1$ it holds that $E_i^j$ is defended by $\bigcup\limits_{k<j} E^k_i$. 
    Wlog suppose $S_1\subseteq E_2$ attacks $E_1$ on $a_1$. 
    Then $a_1\in E_1^i$ for some $i\leq n_1$ and $a_1$ is defended by some $T_1\subseteq \bigcup\limits_{k<i} E^k_1$, \ie, $T_1$ attacks some $b_1\in S_1$ and $b_1\in E_2^j$ for some $j\leq n_2$.
    $T_1$ (resp.\ some $a_2\in T_1$) is in turn attacked by some 
    $S_2\subseteq \bigcup\limits_{k<j} E^k_2$. Note that $a_2$ is contained in some $E_l$ with $l<i$.
    In this way, we construct two sequences of attacked arguments $a_1,\dots,a_{m_1}$ in $E_1$ and $b_1,\dots,b_{m_2}$ in $E_2$, $a_i\in E_{v_i}$, $b_i\in E_{u_i}$ so that $u_i<u_k$ whenever $i>k$, analogous for $b_i$.
    Since the sequences $E_i^j$ are finite,  $E_{m_1}=E_1^1 \subseteq \Gamma(\emptyset) $ and  $E_{m_2}=E_2^1 \subseteq \Gamma(\emptyset)$, contradiction to the assumption $E_1$ and $E_2$ are strongly $\Gamma$-admissible and thus conflict-free.
 
    We therefore can construct $E$ analogous to the proof for Proposition~\ref{prop:weak FL for sadm and sgadm}.
    \item Apply (1) iteratively to all strongly $\Gamma$-admissible sets.
    \item Let $E\in\adm(\D)$; apply (1) for all strongly $\Gamma$-admissible set in $E$. 
    \qedhere
    \end{enumerate}
\end{proof}

\propPropertiesStrongGammaAdm*
\begin{proof}\hphantom{asd}
    \begin{itemize}
        \item $\pSTR$: Let $E\in\sgadm(\BF)$. By Def. \ref{def_stadm_recursive} $E$ is conflict-free, $\Gamma$-closed and each member of $E$ is defended by a subset of $E$. Since $\Gamma$ is monotonic, $E$ defends all of its members, so $E\in\gadm(\BF)$.
        \item $\pNE$ is proven in Prop.~\ref{prop:sgadm non-empty}.
        \item $\pWF$ is proven in Prop.~\ref{prop:weak FL for sadm and sgadm}.
        \item $\pCC$ Let $E\in\sgadm(\BF)$ and $E'\in\gcom(\BF)$.
            By definition, $E\in\sgadm(\BF)$ implies that there are disjoint sets $E_1,\dots,E_n$, \suth $E_1=\emptyset$, $E=\bigcup\limits_{i=1}^n E_i$ and for each $i\geq 1$ it holds that $E_i$ is defended by $\bigcup\limits_{j<i} E_j$. Since $E_1=\emptyset$, $E'$ defends $E_1$. 
            
            Since $E'\in\gcom(\BF)$, we obtain $E_1\subseteq E'$. We repeat this argument for all $E_i$ with $1\leq i\leq n$, and obtain $E\subseteq E'$.
        \item $\pSR$: Given $\BF=\tuple{A,R,S}$, assume toward contradiction $E\in\sgpref(\BF)$, \suth $E\notin\sgcom(\BF)$. Then $E$ defends an argument $a\in A\setminus E$. If $E\cup\{a\}$ is $\Gamma$-closed, we found a strict super-set of $E$ that is strongly $\Gamma$-admissible, otherwise we iteratively add the $\Gamma$-closure and arguments that are defended. Since $\BF$ is finite, this process terminates. We receive $E'\supseteq E\cup\{a\}$, \suth $E'\in\sgadm(\BF)$. Contradiction!
        \item $\pMC$ follows from $\pSR$ by Prop.~\ref{prop:MCiffSR}.
        \item $\pUM$ is proven in Prop.~\ref{prop:properties_strong_g_adm}
        \item $\pURM$ is proven in Prop.~\ref{prop:properties_strong_g_adm}
        \item Example~\ref{exm:counter ex mod sgadm} is a counter-example to $\pM$.\qedhere
    \end{itemize}
\end{proof}

\end{document}